\begin{document}
\def\prob#1{\mbox{Prob$\{\mbox{#1}\}$}}
\def\diag#1{\mbox{diag\hspace{.01in}}(#1)}
\def\colour#1{\mbox{\bf Col}(#1)}

\newtheorem{thm}{Theorem} 
\newtheorem{lem}[thm]{Lemma}
\newtheorem{cor}[thm]{Corollary}
\newtheorem{prop}[thm]{Proposition}

\newtheorem{dftemp}[thm]{Definition}
\newtheorem{extemp}[thm]{Example}
\newtheorem{rmktemp}[thm]{Remark}
\newtheorem{convtemp}[thm]{Convention}
\newenvironment{defn}{\begin{dftemp}\normalfont}{\end{dftemp}}
\newenvironment{ex}{\begin{extemp}\normalfont}{\end{extemp}}
\newenvironment{rem}{\begin{rmktemp}\normalfont}{\end{rmktemp}}
\newenvironment{conv}{\begin{convtemp}\normalfont}{\end{convtemp}}

\def\square{\hfill${\vcenter{\vbox{\hrule height.4pt \hbox{\vrule width.4pt
height7pt \kern7pt \vrule width.4pt} \hrule height.4pt}}}$}

\newenvironment{pf}{{\it Proof:}\quad}{\square \vskip 12pt}

%
\catchline{}{}{}{}{}
%

\markboth{Mitavskiy, Rowe, Cannings} {Geiringer Theorem, Partially Observable Markov decision Processes and other Monte-Carlo search Methods.}

\title{A Version of Geiringer-like Theorem for Decision Making in the Environments with Randomness and Incomplete Information. 
}

\author{Boris Mitavskiy\thanks{EPSRC EP/D003/05/1
``Amorphous Computing" and EPSRC EP/I009809/1 ``Evolutionary Approximation Algorithms for Optimization: Algorithm Design and Complexity Analysis" Grants.}}

\address{Department of Computer Science, Aberystwyth University,\\
Aberystwyth, Ceredigion, SY23 3DB  \\
\email{bom4@aber.ac.uk}}

\author{Jonathan Rowe}

\address{School of Computer Science, University of Birmingham,\\
Edgbaston Birmingham, England, B15 2TT\\
\email{J.E.Rowe@cs.bham.ac.uk}}

\author{Chris Cannings}

\address{School of Mathematics and Statistics, University of Sheffield,\\
Sheffield, U.K., S3 7RH \\
\email{C.Cannings@sheffield.ac.uk}}

\maketitle

\begin{history}
\received{(?? ?? 2011)} \revised{(Day Month Year)}
\end{history}

\begin{abstract}
\textbf{Purpose-} In recent years Monte-Carlo sampling methods, such as Monte Carlo tree search, have achieved tremendous success in model free reinforcement learning. A combination of the so called upper confidence bounds policy to preserve the ``exploration vs. exploitation" balance to select actions for sample evaluations together with massive computing power to store and to update dynamically a rather large pre-evaluated game tree lead to the development of software that has beaten the top human player in the game of Go on a 9 by 9 board. Much effort in the current research is devoted to widening the range of applicability of the Monte-Carlo sampling methodology to partially observable Markov decision processes with non-immediate payoffs. The main challenge introduced by randomness and incomplete information is to deal with the action evaluation at the chance nodes due to drastic differences in the possible payoffs the same action could lead to. The aim of this article is to establish a version of a theorem that originated from population genetics and has been later adopted in evolutionary computation theory that will lead to novel Monte-Carlo sampling algorithms that provably increase the AI potential. Due to space limitations the actual algorithms themselves will be presented in the sequel papers, however, the current paper provides a solid mathematical foundation for the development of such algorithms and explains why they are so promising.\\
\textbf{Design/Methodology/Approach-} In the current paper we set up a mathematical framework, state and prove a version of a Geiringer-like theorem that is very well-suited for the development of Mote-Carlo sampling algorithms to cope with randomness and incomplete information to make decisions. From the framework it will be clear that such algorithm increase what seems like a limited sample of rollouts exponentially in size by exploiting the symmetry within the state space at little or no additional computational cost. Appropriate notions of recombination (or crossover) and schemata are introduced to stay inline with the traditional evolutionary computation terminology. The main theorem is proved using the methodology developed in the PhD thesis of the first author, however the general case of non-homologous recombination presents additional challenges that have been overcome thanks to a lovely application of the classical and elementary tool known as the ``Markov inequality" together with the lumping quotients of Markov chains techniques developed and successfully applied by the authors in the previous research for different purposes. This methodology will be mildly extended to establish the main result of the current article. In addition to establishing the Geiringer-like theorem for Monte Carlo sampling, which is the central objective of this paper, we also strengthen the applicability of the core theorem from the PhD thesis of the first author on which our main result rests. This provides additional theoretical justification for the anticipated success of the presented theory.\\
\textbf{Findings-} This work establishes an important theoretical link between classical population genetics, evolutionary computation theory and model free reinforcement learning methodology.
Not only the theory may explain the success of the currently existing Monte-Carlo tree sampling methodology, but it also leads to the development of novel Monte-Carlo sampling techniques guided
by rigorous mathematical foundation.\\
\textbf{Practical implications-} The theoretical foundations established in the current work provide guidance for the design of powerful Monte-Carlo sampling algorithms in model free reinforcement learning to tackle numerous problems in computational intelligence.\\
\textbf{Originality/value-} Establishing a Geiringer-like theorem with non-homologous recombination was a long standing open problem in evolutionary computation theory. Apart from overcoming this challenge, in a mathematically elegant fashion and establishing a rather general and powerful version of the theorem, this work leads directly to the development of novel provably powerful algorithms for decision making in the environment involving randomness, hidden or incomplete information.\\
\end{abstract}
\keywords{Reinforcement learning; partially observable Markov decision processes; Monte Carlo tree search; upper confidence bounds for trees, evolutionary computation;
Geiringer Theorem; schemata; non-homologous recombination (crossover); Markov chains; lumping quotients of Markov chains; Markov inequality; contraction mapping principle; irreducible Markov chains; non-homogenous Markov chains.}
\section{Introduction}\label{IntroSect}
A great number of questions in machine learning, computer game intelligence, control theory, and numerous other applications involve the design of algorithms for decision-making by an agent under a specified set of circumstances. In the most general setting, the problem can be described mathematically in terms of the state and action pairs as follows. A state-action pair is an ordered pair of the form $(s, \, \vec{\alpha})$ where $\vec{\alpha} = \{\alpha_1, \, \alpha_2, \ldots, \alpha_n\}$ is the set of actions (or moves, in case the agent is playing a game, for instance) that the agent is capable of taking when it is in the state (or, in case of a game, a state might be sometimes referred to as a position) $s$. Due to randomness, hidden features, lack of memory, limitation of the sensor capabilities etc, the state may be only partially observable by the agent. Mathematically this means that there is a function $\phi: S \rightarrow O$ (as a matter of fact, a random variable with respect the unknown probability space structure on the set $S$) where $S$ is the set of all states which could be either finite or infinite while $O$ is the set (usually finite due to memory limitations) of observations having the property that whenever $\phi(s_1) = \phi(s_2)$ (i.e. whenever the agent can not distinguish states $s_1$ and $s_2$) then the corresponding state action pairs $(s_1, \, \vec{\alpha})$ and $(s_2, \, \vec{\beta})$ are such that $\vec{\alpha} = \vec{\beta}$ (i.e. the agent knows which actions it can possibly take based only on the observation it makes). The general problem of reinforcement learning is to decide which action is best suited given the agent's knowledge (that is the observation that the agent has made as well as the agent's past experience). In computational settings ``suitability" is naturally described in terms of a numerical reward value. In the probability theoretic sense the agent aims to maximize the expected reward (the expected reward considered as a random variable on the enormous and unknown conditional probability space of states given a specific observation and an action taken). Most common models such as POMDPs (partially observable Markov decision processes) assume that the next state and the corresponding numerical rewards depend stochastically only on the current observation and action. In a number of situations the immediate rewards after executing a single action are unknown. The so-called ``model free" reinforcement learning methods, such as Monte Carlo techniques (i.e. algorithms based on repeated random sampling) are exploited to tackle problems of this type. In such algorithms a large number of \emph{rollouts} (i.e. simulations or self-plays) are made and actions are assigned numerical payoffs that get updated dynamically (i.e at every simulation of an algorithm). While the simulated self-plays started with a specific chosen action, say $\alpha$, are entirely random, the action $\alpha$ itself is chosen with respect to a dynamically updated probability distribution which ensures the exploration versus exploitation balance: the technique known as UCB (Upper Confidence Bounds). It may be worth emphasizing that the UCB methodology is based on a solid mathematical foundation (see \cite{UCBAgraval}, \cite{UCBKaelbling} and \cite{UCBAuer}). A combination of UCB with Monte Carlo sampling lead to tremendous break through in computer Go performance level (see \cite{MCTGoChesl} and \cite{MCTGoCoulomR}, for instance) and much research is currently undergoing to widen the applicability of the method. Some of the particularly challenging and interesting directions involve decision making in the environments (or games) involving randomness, hidden information and uncertainty or in ``continuous" environments where appropriate similarities on the set of states must be constructed due to runtime and memory limitations and also action evaluation polices must be enhanced to cope with drastic changes in the payoffs as well as an enormous combinatorial explosion in the branching factor of the decision tree. In recent years a number of heuristic approaches have been proposed based on the existing  probabilistic planning methodology. Despite some of these newly developed methods have already achieved surprisingly powerful performance levels: see \cite{FF-Replan} and \cite{ZinkewitchBucket}, the authors believe there is still room for drastic improvement based on the rigorous mathematical theory originated from classical population genetics (\cite{GeirOrigion}) and later adopted in traditional evolutionary computation theory (\cite{PoliGeir}, \cite{MitavRowGeirMain} \cite{MitavRowGeirGenProgr}). Theorems of this type are known as Geiringer-like results and they address the limiting ``frequency of occurrence" of various sets of ``genes" as recombination is repeatedly applied over time. The main objective of the current work is to establish a rather general and powerful version of a Geiringer-like theorem with ``non-homologous" recombination operators in the setting of Monte Carlo sampling. This theorem leads to simple dynamic algorithms that exploit the intrinsic similarity within the space of observations to increase exponentially the size of the already existing sample of rollouts yielding significantly more informative action-evaluation at very little or even no additional computational cost at all. The details of how this is done will be described in sections~\ref{equivSimSect} and \ref{settingSect}. Due to space limitations, the actual algorithms will appear in sequel papers. As a matter of fact, we believe the interested readers may actually design such algorithms on their own after studying sections~\ref{equivSimSect} and \ref{settingSect}.
\section{Overview}\label{OverViewSect}
Due to the interdisciplinary nature of this work the authors did their best to make the paper accessible on various levels to a potentially wide audience having diverse backgrounds and research interests ranging from practical software engineering to applied mathematics, theoretical computer science and high-level algorithm design based on solid mathematical foundation. The next section (section~\ref{equivSimSect}) is essential for understanding the main idea of the paper. It provides the notation and sets up a rigorous mathematical framework, while the informal comments motivating the various notions introduced, assist the reader's comprehension. Section~\ref{settingSect} contains all the necessary definitions and concepts required to state and to explain the results of the article. It ends with the statement of Geiringer-like theorem aimed at applications to decision making in the environments with randomness and incomplete information where no immediate rewards are available. This is the central aim of the paper. A reader who is only after a calculus level understanding with the aim of developing applications within an appropriate area of software engineering may be satisfied reading section~\ref{settingSect} and finishing their study at this point. Section~\ref{GeneralThmSect} is devoted to establishing and deriving the main results of the article in a mathematically rigorous fashion. Clearly this is fundamentally important for understanding where these results come from and how one may modify them as needed. We strongly encourage all the interested readers to attempt understanding the entire section~\ref{GeneralThmSect}. Subsection~\ref{stageSubsect} does require familiarity with elementary group theory. A number of textbooks on this subject are available (see, for instance, \cite{DummittFoote}) but all of them contain way more material than necessary to understand our work. To get the minimal necessary understanding, the reader is invited to look at the previous papers on finite population Geiringer theorems of the first two authors: \cite{MitavRowGeirMain} and \cite{MitavRowGeirGenProgr}. Finally, section~\ref{furthStrengthSect} is included only for the sake of strengthening the general finite-population Geiringer theorem to emphasize its validity for nonhomogenious time Markov chains, namely theorem~\ref{GeiringerExtThm}. Example~\ref{intuitiveInterpEx} explains why this is of interest for the algorithm development. The material in section~\ref{furthStrengthSect} is entirely independent of the rest of the paper. One could read it either at the beginning or at the end. The authors suspect this theory is known in modern math, but the literature emphasizing theorems~\ref{MarkovNonMarkovSubtleExtCor} and \ref{MarkovNonMarkovSubtleExtCorExt} is virtually impossible to locate. Moreover, mathematics behind these theorems is classical, general, simple and elegant. While section~\ref{furthStrengthSect} is probably not of any interest to software engineers (theorem~\ref{GeiringerExtThm} may be thought to strengthen the justification of the main ideas), more mathematically inclined audience will find it enjoyable and easy to read.
\section{Equivalence/Similarity Relation on the States}\label{equivSimSect}
Let $S$ denote the set of states (enormous but finite in this framework). Formally each state $\vec{s} \in S$ is an ordered pair $(s, \vec{\alpha})$ where $\vec{\alpha}$ is the set of actions an agent can possibly take when in the state $\vec{s}$. Let $\sim$ be an equivalence relation on $S$. Without loss of generality we will denote every equivalence class by an integer $1, \, 2, \ldots, i, \ldots, \in \mathbb{N}$ so that each element of $S$ as an ordered pair $(i, \, a)$ where $i \in \mathbb{N}$ and $a \in A$ with $A$ being some finite alphabet. With this notation $(i, \, a) \sim (j, \, b)$ iff $i=j$. Intuitively, $S$ is the set of states and $\sim$ is the similarity relation on the states. For example in a card game if the 2 states corresponding to the same player have cards of roughly equivalent value (for that specific game) and their opponent's cards are unknown (and there might be some more hidden and random effects) then the 2 states will be considered equivalent under $\sim$. We will also require that for two equivalent states $\vec{s_1} = \{s_1, \vec{\alpha}_1\}$ and $\vec{s_2} = \{s_2, \vec{\alpha}_2\}$ under $\sim$ there are bijections $f_1: \vec{\alpha}_1 \rightarrow \vec{\alpha}_2$ and $f_2: \vec{\alpha}_2 \rightarrow \vec{\alpha}_1$. For the time being, these bijections should be obvious from the representation of the environment (and actions) and reflect the similarity between these actions.
\begin{rem}\label{formalreqCommsRem}
In theory we want functions $f_1$ and $f_2$ to be bijections and inverses of one another for the theoretical model to be perfectly rigorous, but in practice there should probably be no strict requirement on that. In fact, we believe that in practice one may even want to relax the assumption on $\sim$ to be an equivalence relation.
\end{rem}
As described in sections~\ref{IntroSect} and \ref{OverViewSect}, the most challenging question when applying an MCT type of an algorithm to deal with randomness and incomplete information or simply with a large branching factor of the game tree is to evaluate the actions under consideration making the most out of the sample of independent rollouts. Quite surprisingly, very powerful programs have already been developed and tested in practice against human players (see \cite{KocsisL}), however the action-evaluation algorithms used in these software are purely heuristic and no theoretical foundation is presented to explain their success. In fact, most of these methods use some kind of a voting mechanism to deal with rather weak classifiers. In the next section we will set up the stage to state the main result of this paper which motivates new algorithms for evaluating actions (or moves) at the chance nodes and hopefully will provide some understanding for the success of the already existing techniques in the future research.
\section{Mathematical Framework, Notion of Crossover/Recombination and Statement of the Finite Population Geiringer Theorem for Action Evaluation.}\label{settingSect}
\begin{defn}\label{treeRootedByChanceNode}
Suppose we are given a chance node $\vec{s} = (s, \vec{\alpha})$ and a sequence $\{\alpha_i\}_{i=1}^b$ of actions in $\vec{\alpha}$ (it is possible that $\alpha_i = \alpha_j$ for $i \neq j$). We may then call $\vec{s}$ a \emph{root state}, or a \emph{state in question}, the sequence $\{\alpha_i\}_{i=1}^b$, the \emph{sequence of moves (actions) under evaluation} and the set of moves $\mathcal{A} = \{\alpha \, | \, \alpha = \alpha_i$ for some $i$ with $1 \leq i \leq b\}$, the set of actions (or moves) under evaluation.
\end{defn}
\begin{defn}\label{RolloutDefn}
A \emph{rollout} with respect to the state in question $\vec{s} = (s, \vec{\alpha})$ and an action $\alpha \in \vec{\alpha}$ is a sequence of states following the action $\alpha$ and ending with a terminal label $f \in \Sigma$ where $\Sigma$ is an arbitrary set of labels\footnote{Intuitively, each terminal label in the set $\Sigma$ represents a terminal state that we can assign a numerical value to via a function $\phi: \, \Sigma \rightarrow \mathbb{Q}$. The reason we introduce the set $\Sigma$ of formal labels as opposed to requiring that each terminal label is a rational number straight away, is to avoid confusion in the upcoming definitions}, which looks as $\{(\alpha, \, s_1, \, s_2, \ldots, s_{t-1}, \, f)\}$. For technical reasons which will become obvious later we will also require that $s_i \neq s_j$ for $i \neq j$ (it is possible and common to have $s_i \sim s_j$ though). We will say that the total number of states in a rollout (which is $k-1$ in the notation of this definition) is the \emph{height} of the rollout.
\end{defn}
\begin{rem}\label{rolloutDefRem}
Notice that in definition~\ref{RolloutDefn} we included only the initial move $\alpha$ made at the state in question (see definition~\ref{treeRootedByChanceNode}) which is the move under evaluation (see definition~\ref{treeRootedByChanceNode}). The moves between the intermediate states are chosen randomly and are not evaluated so that there is no reason to consider them.
\end{rem}
\begin{rem}\label{crossoverConvRepresRem}
In subsection~\ref{equivSimSect} we have introduced a convenient notation for states to emphasize their respective equivalence classes. With such notation a typical rollout would appear as a sequence $\{(\alpha, \, (i_1, \, a_1), \, (i_2, \, a_2), \ldots, (i_{t-1}, a_{t-1}), \, f)\}$ with $i_j \in \mathbb{N}$ while $a_i \in A$. According to the requirement in definition~\ref{RolloutDefn}, $i_j = i_k$ for $j \neq k \, \Longrightarrow a_k \neq a_j$.
\end{rem}
A single rollout provides rather little information about an action particularly due to the combinatorial explosion in the branching factor of possible moves of the player and the opponents. Normally a large, yet comparable with total resource limitations, number of rollouts is thrown to evaluate the actions at various positions. The challenging question which the current work addresses is how one can take full advantage of the parallel sequence of rollouts. Since the main idea is motivated by Geiringer theorem which is originated from population genetics (\cite{GeirOrigion}) and later has also been involved in evolutionary computation theory (\cite{PoliGeir}, \cite{MitavRowGeirMain} and \cite{MitavRowGeirGenProgr}) we shall exploit the terminology of the evolutionary computation community here.
\begin{defn}\label{popOfRolloutsDefn}
Given a state in question $\vec{s} = (s, \vec{\alpha})$ and a sequence $\{\alpha_i\}_{i=1}^b$ of moves under evaluation (in the sense of definition~\ref{treeRootedByChanceNode}) then a \emph{population} $P$ with respect to the state $\vec{s} = (s, \vec{\alpha})$ and the sequence $\{\alpha_i\}_{i=1}^b$ is a sequence of rollouts $P = \{r_i^{l(i)}\}_{i=1}^b$ where $r_i = \{(\alpha_i, \, s_1^i, \, s_2^i, \ldots, s^i_{l(i)-1}, \, f_i)\}$. Just as in definition~\ref{RolloutDefn} we will assume that $s_k^i \neq s_q^j$ whenever $i \neq j$ (which, in accordance with definition~\ref{RolloutDefn}, is as strong as requiring that $s_k^i \neq s_q^j$ whenever $i \neq j$ or $k \neq q$)\footnote{The last assumption that all the states in a population are formally distinct (although they may be equivalent) will be convenient later to extend the crossover operators from pairs to the entire populations. This assumption does make sense from the intuitive point of view as well since the exact state in most games involving randomness or incomplete information is simply unknown.} Moreover, we also assume that the terminal labels $f_i$ are also all distinct within the same population, i.e. for $i \neq j$ the terminal labels $f_i \neq f_j$\footnote{This assumption does not reduce any generality since one can choose an arbitrary (possibly a many to one) assignment function $\phi: \Sigma \rightarrow \mathbb{Q}$, yet the complexity of the statements of our main theorems will be mildly alleviated.} In a very special case when $s_j^i \sim s_k^q \Longrightarrow j=k$ we will say that the population $P$ is \emph{homologous}. Loosely speaking, a homologous population is one where equivalent states can not appear at different ``heights".
\end{defn}
\begin{rem}\label{popOfRolloutsRem}
Each rollout $r_i^{l(i)}$ in definition~\ref{popOfRolloutsDefn} is started with the corresponding move $\alpha_i$ of the sequence of moves under evaluation (see definition~\ref{treeRootedByChanceNode}). It is clear that if one were to permute the rollouts without changing the actual sequences of states the corresponding populations should provide identical values for the corresponding actions under evaluation. In fact, most authors in evolutionary computation theory (see \cite{VoseM}, for instance) do assume that such populations are equivalent and deal with the corresponding equivalence classes of multisets corresponding to the individuals (these are sequences of rollouts). Nonetheless, when dealing with finite-population Geiringer-like theorems it is convenient for technical reasons which will become clear when the proof is presented (see also \cite{MitavRowGeirMain} and \cite{MitavRowGeirGenProgr}) to assume the \emph{ordered multiset model} i.e. the populations are considered formally \emph{distinct} when the individuals are permuted. Incidentally, ordered multiset models are useful for other types of theoretical analysis in \cite{ShmittL1} and \cite{ShmittL2}.
\end{rem}
\begin{ex}\label{PopRolloutEx}
A typical population with the convention as in remark~\ref{popOfRolloutsRem} might look as below.
$$\begin{array}{clcrrrrrrrrrrrrrrrrrrrrrr}
\alpha & \mapsto 1a & \mapsto 5a & \mapsto 6a & \mapsto 3d & \mapsto 7a & \mapsto f_1\\
\beta & \mapsto 2a & \mapsto 1b & \mapsto 3c & \mapsto 6d & \mapsto f_2 \\
\gamma & \mapsto 4a & \mapsto 6b & \mapsto 5b & \mapsto f_3\\
\alpha & \mapsto 1c & \mapsto 4b & \mapsto 2b & \mapsto 7b & \mapsto 5c & \mapsto f_4 \\
\xi & \mapsto 3a & \mapsto 2c & \mapsto 4c & \mapsto f_5\\
\xi & \mapsto 2d & \mapsto f_6\\
\pi & \mapsto 3b & \mapsto 1d & \mapsto 2e & \mapsto 6c & \mapsto f_7 \\
\end{array}$$
The height of the first rollout in the population pictured above would then be $5$ since it contains $5$ states. The reader can easily see that the heights of the rollouts in this population read from top to bottom are $5$, $4$, $3$, $5$, $3$, $1$ and $4$ respectively. Clearly, the total number of states within the population is the sum of the heights of all the rollouts in the population. In fact, this very simple observation is rather valuable when establishing the main result of the current article as will become clear in subsection~\ref{specificThmDerivationSubsect} of section~\ref{GeneralThmSect}.
\end{ex}
The main idea is that the random actions taken at the equivalent states should be interchangeable since they are chosen somehow at random during the simulation stage of the MCT algorithm. In the language of evolutionary computing, such a swap of moves is called a crossover. Due to randomness or incomplete information (together with the equivalence relation which can be defined using the expert knowledge of a specific game being analyzed) in order to obtain the most out of a sample (population in our language) of the parallel rollouts it is desirable to explore all possible populations obtained by making various swaps of the corresponding rollouts at the equivalent positions. Computationally this task seems expensive if one were to run the type of genetic programming described precisely below, yet, it turns out that we can predict exactly what the limiting outcome of this ``mixing procedure" would be.\footnote{In this paper we will need to ``inflate" the population first and then take the limit of a sequence of these limiting procedures as the inflation factor increases. All of this will be rigorously presented and discussed in subsection~\ref{GeirThmStSubsect} and in section~\ref{GeneralThmSect}.} We now continue with the rigorous definitions of crossover.

Representation of rollouts suggested in remark~\ref{crossoverConvRepresRem} is convenient to define crossover operators for two given rollouts. We will introduce two crossover operations below.
\begin{defn}\label{rolloutPartCrossDefn}
Given two rollouts $r_1 = (\alpha_1, \, (i_1, \, a_1), \, (i_2, \, a_2), \ldots, (i_{t(1)-1}, a_{t(1)-1}), \, f)$ and $r_2 = (\alpha_2, \, (j_1, \, b_1), \, (j_2, \, b_2), \ldots, (j_{t(2)-1}, b_{t(2)-1}), \, g)$ of lengths $t(1)$ and $t(2)$ respectively that share no state in common (i.e., as in definition~\ref{RolloutDefn}, ) there are two (non-homologous) crossover (or recombination) operators we introduce here. For an equivalence class label $m \in \mathbb{N}$ and letters $c, \, d \in A$ define the \emph{one-point non-homologous crossover} transformation $\chi_{m, \, c, \, d}(r_1, \, r_2) = (t_1, \, t_2)$ where $t_1 = (\alpha_1, \, (i_1, \, a_1), \, (i_2, \, a_2), \ldots, (i_{k-1}, \, a_{k-1}), \, (j_q, \, b_q), \, (j_{q+1}, \, b_{q+1}), \ldots, (j_{t(2)-1}, b_{t(2)-1}), \, g)$ and $t_2 = (\alpha_2, \, (j_1, \, b_1), \, (j_2, \, b_2), \ldots, (j_{q-1}, \, b_{q-1}), \, (i_k, \, a_k), \, (i_{k+1}, \, a_{k+1}), \ldots, (i_{t(1)-1}, a_{t(1)-1}), \, f)$ if [$i_k = j_q = m$ and either $(a_k = c$ and $b_q = d)$ or $(a_k = d$ and $b_q = c)$] and $(t_1, \, t_2) = (r_1, \, r_2)$ otherwise.

Likewise, we introduce a \emph{single position swap crossover} $\nu_{m, \, c, \, d}(r_1, \, r_2) = (v_1, \, v_2)$ where $v_1 = (\alpha_1, \, (i_1, \, a_1), \, (i_2, \, a_2), \ldots, (i_{k-1}, \, a_{k-1}), \, (j_q, \, b_q), \, (i_{k+1}, \, a_{k+1}), \ldots, (i_{t(1)-1}, a_{t(1)-1}), \, f)$ while $v_2 = (\alpha_2, \, (j_1, \, b_1), \, (j_2, \, b_2), \ldots, (j_{q-1}, \, b_{q-1}), \, (i_k, \, a_k), \, (j_{q+1}, \, b_{q+1}), \ldots, (j_{t(2)-1}, b_{t(2)-1}), \, g)$ if [$i_k = j_q = m$ and either $(a_k = c$ and $b_q = d)$ or $(a_k = d$ and $b_q = c)$] and $(v_1, \, v_2) = (r_1, \, r_2)$ otherwise. In addition, a singe swap crossover is defined not only on the pairs of rollouts but also on a single rollout swapping equivalent states in the analogous manner: If $$r = (\alpha, \, (i_1, \, a_1), \, (i_2, \, a_2), \ldots, (i_{j-1}, \, a_{j-1}), \, (i_j, \, a_j), \, (i_{j+1}, \, a_{j+1}), \ldots $$$$ \ldots, (i_{k-1}, \, a_{k-1}), \, (i_k, \, a_k), \, (i_{k+1}, \, a_{k+1}), \ldots, (i_{t(1)-1}, a_{t(1)-1}), \, f)$$ and [$i_j = i_k$ and either $(a_j = c$ and $a_k = d)$ or $(a_j = d$ and $a_k = c)$] then $$\nu_{m, \, c, \, d}(r) = (\alpha, \, (i_1, \, a_1), \, (i_2, \, a_2), \ldots, (i_{j-1}, \, a_{j-1}), \, (i_j, \, a_k), \, (i_{j+1}, \, a_{j+1}), \ldots $$$$ \ldots, (i_{k-1}, \, a_{k-1}), \, (i_k, \, a_j), \, (i_{k+1}, \, a_{k+1}), \ldots, (i_{t(1)-1}, a_{t(1)-1}), \, f)$$ and, of course, $\nu_{m, \, c, \, d}(r)$ fixes $r$ (i.e. $\nu_{m, \, c, \, d}(r) = r$) otherwise.
\end{defn}
\begin{rem}\label{distConvRem}
Notice that definition~\ref{rolloutPartCrossDefn} makes sense thanks to the assumption that no rollout contains an identical pair of states in definition~\ref{RolloutDefn}.
\end{rem}
\begin{rem}\label{motivationRem}
Intuitively, performing one point crossover means that the corresponding player might have changed their strategy in a similar situation due to randomness and a single swap crossover corresponds to the player not knowing the exact state they are in due to incomplete information, for instance.
\end{rem}
Just as in case of defining crossover operators for pairs of rollouts, thanks to the assumption that all the states in a population of rollouts are formally distinct (see definition~\ref{popOfRolloutsDefn}), it is easy to extend definition~\ref{rolloutPartCrossDefn} to the entire populations of rollouts.
In view of remark~\ref{motivationRem}, to get the most informative picture out of the sequence of parallel rollouts one would want to run the genetic programming routine without selection and mutation and using only the crossover operators specified above for as long as possible and then, in order to evaluate a certain move $\alpha$, collect the weighted average of the terminal values (i. e. the values assigned to the terminal labels via some rational-valued assignment function) of all the rollouts starting with the move $\alpha$ which ever occurred in the process. We now describe precisely what the process is and give an example.
\begin{defn}\label{recombActOnPopsDef}
Given a population $P$ and a transformation of the form $\chi_{i, \, x, \, y}$, there exists at most one pair of distinct rollouts in the population $P$, namely the pair of rollouts $r_1$ and $r_2$ such that the state $(i, \, x)$ appears in $r_1$ and the state $(i, \, y)$ appears in $r_2$. If such a pair exists, then we define the recombination transformation $\chi_{i, \, x, \, y}(P) = P'$ where $P'$ is the population obtained from $P$ by replacing the pair of rollouts $(r_1, \, r_2)$ with the pair $\chi_{i, \, x, \, y}(r_1, \, r_2)$ as in definition~\ref{rolloutPartCrossDefn}. In any other case we do not make any change, i.e. $\chi_{i, \, x, \, y}(P) = P$. The transformation $\nu_{i, \, x, \, y}(P)$ is defined in an entirely analogous manner with one more amendment: if the states $(i, \, x)$ and $(i, \, y)$ appear within the same individual (rollout), call it $$r = (\alpha, \, (j_1, \, a_1), \, (j_2, \, a_2), \ldots, (i, \, x), \ldots, \, (i, \, y), \ldots, (i_{t(1)-1}, a_{t(1)-1}), \, f),$$ and the state $(i, \, x)$ precedes the state $(i, \, y)$, then these states are interchanged obtaining the new rollout $$r' = (\alpha, \, (j_1, \, a_1), \, (j_2, \, a_2), \ldots, (i, \, y), \ldots, \, (i, \, x), \ldots, (i_{t(1)-1}, a_{t(1)-1}), \, f).$$ Of course, it could be that the state $(i, \, y)$ precedes the state $(i, \, x)$ instead, in which case the definition would be analogous: if $$r = (\alpha, \, (j_1, \, a_1), \, (j_2, \, a_2), \ldots, (i, \, y), \ldots, \, (i, \, x), \ldots, (i_{t(1)-1}, a_{t(1)-1}), \, f)$$ then replace the rollout $r$ with the rollout $$r'=(\alpha, \, (j_1, \, a_1), \, (j_2, \, a_2), \ldots, (i, \, x), \ldots, \, (i, \, y), \ldots, (i_{t(1)-1}, a_{t(1)-1}), \, f).$$
\end{defn}
\begin{rem}\label{BijectRem}
It is very important for the main theorem of our paper that each of the crossover transformations $\chi_{i, \, x, \, y}$ and $\nu_{i, \, x, \, y}$ is a bijection on their common domain, that is the set of all populations of rollouts at the specified chance node. As a matter of fact, the reader can easily verify by direct computation from definitions~\ref{recombActOnPopsDef} and \ref{rolloutPartCrossDefn} that each of the transformations $\chi_{i, \, x, \, y}$ and $\nu_{i, \, x, \, y}$ is an involution on its domain, i.e. $\forall \, i, \, x, \, y$ we have $\chi_{i, \, x, \, y}^2 = \nu_{i, \, x, \, y}^2 = \mathbf{1}$ where $\mathbf{1}$ is the identity transformation.
\end{rem}
Examples below illustrate the important extension of recombination operators to arbitrary populations pictorially.
\begin{ex}\label{popCrossEx1}
Suppose we were to apply the recombination (crossover) operator $\chi_{1, \, c, \, d}$ to the population of seven rollouts in example~\ref{PopRolloutEx}.
Once the unique location of states $(1, \, c)$ and $(1, \, d)$ in the population has been 
identified (the first state in the forth rollout and the second state in the seventh rollout), applying 
the crossover operator $\chi_{1, \, c, \, d}$ yields the population pictured below:
$$\begin{array}{clcrrrrrrrrrrrrrrrrrrrrrr}
\alpha & \mapsto 1a & \mapsto 5a & \mapsto 6a & \mapsto 3d & \mapsto 7a & \mapsto f_1\\
\beta & \mapsto 2a & \mapsto 1b & \mapsto 3c & \mapsto 6d & \mapsto f_2 \\
\gamma & \mapsto 4a & \mapsto 6b & \mapsto 5b & \mapsto f_3\\
\alpha & \mapsto 1d & \mapsto 2e & \mapsto 6c & \mapsto f_7 \\
\xi & \mapsto 3a & \mapsto 2c & \mapsto 4c & \mapsto f_5\\
\xi & \mapsto 2d & \mapsto f_6\\
\pi & \mapsto 3b & \mapsto 1c & \mapsto 4b & \mapsto 2b & \mapsto 7b & \mapsto 5c & \mapsto f_4\\
\end{array}$$
On the other hand, applying the crossover transformation $\nu_{1, \, c, \, d}$ to the population in example~\ref{PopRolloutEx} results in the population below:
$$\begin{array}{clcrrrrrrrrrrrrrrrrrrrrrr}
\alpha & \mapsto 1a & \mapsto 5a & \mapsto 6a & \mapsto 3d & \mapsto 7a & \mapsto f_1\\
\beta & \mapsto 2a & \mapsto 1b & \mapsto 3c & \mapsto 6d & \mapsto f_2 \\
\gamma & \mapsto 4a & \mapsto 6b & \mapsto 5b & \mapsto f_3\\
\alpha & \mapsto 1d & \mapsto 4b & \mapsto 2b & \mapsto 7b & \mapsto 5c & \mapsto f_4 \\
\xi & \mapsto 3a & \mapsto 2c & \mapsto 4c & \mapsto f_5\\
\xi & \mapsto 2d & \mapsto f_6\\
\pi & \mapsto 3b & \mapsto 1c & \mapsto 2e & \mapsto 6c & \mapsto f_7 \\
\end{array}.$$
\end{ex}
\begin{ex}\label{popCrossEx2}
Consider now the population $Q$ pictured below: 
$$\begin{array}{clcrrrrrrrrrrrrrrrrrrrrrr}
\alpha & \mapsto 1b & \mapsto 3c & \mapsto 6d & \mapsto f_2\\
\beta & \mapsto 2b & \mapsto 7b & \mapsto 5c & \mapsto f_4 \\
\gamma & \mapsto 4a & \mapsto 6b & \mapsto 5a & \mapsto 6a & \mapsto 3d & \mapsto 7a & \mapsto f_1\\
\alpha & \mapsto 1d & \mapsto 2c & \mapsto 4c & \mapsto f_5 \\
\xi & \mapsto 3a & \mapsto 2e & \mapsto 6c & \mapsto f_7\\
\xi & \mapsto 2d & \mapsto f_6\\
\pi & \mapsto 3b & \mapsto 1c & \mapsto 4b & \mapsto 2a & \mapsto 1a & \mapsto 5b & \mapsto f_3 \\
\end{array}.$$
Suppose we apply the transformations $\chi_{6, \, a, \, b}$ and $\nu_{6, \, a, \, b}$ to the population $Q$.
The states $(6, \, a)$ and $(6, \, b)$ both appear in the third rollout in the population $Q$.
Since these states appear within the same rollout, according to definition~\ref{recombActOnPopsDef}, the crossover transformation $\chi_{6, \, a, \, b}$ fixes the population $Q$ (i.e. $\chi_{6, \, a, \, b}(Q)=Q$). On the other hand, the population $\nu_{6, \, a, \, b}(Q)$ is pictured below:
$$\begin{array}{clcrrrrrrrrrrrrrrrrrrrrrr}
\alpha & \mapsto 1b & \mapsto 3c & \mapsto 6d & \mapsto f_2\\
\beta & \mapsto 2b & \mapsto 7b & \mapsto 5c & \mapsto f_4 \\
\gamma & \mapsto 4a & \mapsto 6a & \mapsto 5a & \mapsto 6b & \mapsto 3d & \mapsto 7a & \mapsto f_1\\
\alpha & \mapsto 1d & \mapsto 2c & \mapsto 4c & \mapsto f_5 \\
\xi & \mapsto 3a & \mapsto 2e & \mapsto 6c & \mapsto f_7\\
\xi & \mapsto 2d & \mapsto f_6\\
\pi & \mapsto 3b & \mapsto 1c & \mapsto 4b & \mapsto 2a & \mapsto 1a & \mapsto 5b & \mapsto f_3 \\
\end{array}.$$
\end{ex}
\begin{defn}\label{RecombStagePopTransDefn}
Let $\mathbf{n} = \{1, \, 2, \ldots, n\}$ denote the set of first $n$ natural numbers. Consider any probability distribution $\mu$ on the set of all finite sequences of crossover transformations $$\mathcal{F} = \left(\bigcup_{n=1}^{\infty}\left(\{\chi_{i, \, x, \, y} \, | \, x, \, y \in A \text{ and } i \in \mathbb{N}\} \cup \{\nu_{i, \, x, \, y} \, | \, x, \, y \in A \text{ and }i \in \mathbb{N}\}\right)^n \right) \cup \{\mathbf{1}\}$$ which assigns a positive probability to the singleton sequences\footnote{This technical assumption may be altered in various manner as long as the induced Markov chain remains irreducible.} and to the \emph{identity element} $\mathbf{1}$. (i.e. to every element of the subset $$\mathcal{S} = \{\mathbf{1}\} \cup \left(\{\chi_{i, \, x, \, y} \, | \, x, \, y \in A \text{ and } i \in \mathbb{N}\}\cup\{\nu_{i, \, x, \, y} \, | \, x, \, y \in A \text{ and } i \in \mathbb{N}\} \right)^1.$$ Given a sequence of transformations $\vec{\Theta} = \{\Theta_{i(j), \, x(j), \, y(j)}\}_{j=1}^n$ where each $\Theta$ is either $\chi$ or $\nu$ (i.e. $\forall \, j$ either $\Theta_{i(j), \, x(j), \, y(j)} = \chi_{i(j), \, x(j), \, y(j)}$ or $\Theta_{i(j), \, x(j), \, y(j)} = \nu_{i(j), \, x(j), \, y(j)}$), consider the transformation $$\widetilde{\Theta} = \Theta_{i(n), \, x(n), \, y(n)} \circ \Theta_{i(n-1), \, x(n-1), \, y(n-1)} \circ \ldots \circ \Theta_{i(2), \, x(2), \, y(2)} \circ \Theta_{i(1), \, x(1), \, y(1)}$$ on the set of all populations starting at the specified chance node obtained by composing all the transformations in the sequence $\vec{\Theta}$. The identity element $\mathbf{1}$ stands for the identity map on the set of all possible populations of rollouts. Now define the Markov transition Matrix $M_{\mu}$ on the set of all populations of rollouts (see definition~\ref{popOfRolloutsDefn} and remark~\ref{crossoverConvRepresRem}) as follows: given populations $X$ and $Y$ of the same size $k$, the probability  of obtaining the population $Y$ from the population $X$ after performing a single crossover stage, $p_{X \rightarrow Y} = \mu(\mathcal{S}_{X \rightarrow Y})$ where $$\mathcal{S}_{X \rightarrow Y}=\{\Gamma \, | \, \Gamma \in \mathcal{F} \text{ and } T(\Gamma)(X) = Y\}$$ where $$T(\Gamma) = \begin{cases}
\widetilde{\Theta} & \text{ if } \Gamma = \vec{\Theta} \\
\text{The identity map} \text{ if } \Gamma = \mathbf{1}.
\end{cases}$$
\end{defn}
Example~\ref{seqCompExample} below illustrates the first part of definition~\ref{RecombStagePopTransDefn}.
\begin{ex}\label{seqCompExample}
Consider the sequence of five recombination transformations $$\vec{\Theta} = (\chi_{1, \, c, \, d}, \, \chi_{2, \, c, \, e}, \, \chi_{5, \, a, \, b}, \chi_{1, \, a, \, b}, \, \chi_{2, \, a, \, b}).$$ According to definition~\ref{RecombStagePopTransDefn} the sequence $\vec{\Theta}$ gives rise to the composed recombination transformation $$\widetilde{\Theta} = \chi_{2, \, a, \, b} \circ \chi_{1, \, a, \, b} \circ \chi_{5, \, a, \, b} \circ \chi_{2, \, c, \, e} \circ \chi_{1, \, c, \, d}.$$ The reader may verify as a small exercise that $\widetilde{\Theta}(P) = Q$ where $P$ is the population displayed on figure~\ref{PopOfRollsFig} while the population $Q$ is the one appearing in figure~\ref{PopOfRollsFig2}. If one were to append the recombination transformation $\nu_{6, \, a, \, b}$ to the sequence of rollouts $\vec{\Theta}$ obtaining the sequence $$\overrightarrow{\Theta_1} = (\chi_{1, \, c, \, d}, \, \chi_{2, \, c, \, e}, \, \chi_{5, \, a, \, b}, \chi_{1, \, a, \, b}, \, \chi_{2, \, a, \, b}, \, \nu_{6, \, a, \, b})$$ then, by associativity of composition, we have $\widetilde{\Theta_1} = \nu_{6, \, a, \, b} \circ \widetilde{\Theta}$ so that $\widetilde{\Theta_1}(P) = \nu_{6, \, a, \, b}(\widetilde{\Theta}(P)) = \nu_{6, \, a, \, b}(Q)$ where $Q$, as above, is the population displayed on figure~\ref{PopOfRollsFig2} so that, according to example~\ref{popCrossEx2}, the population $\overrightarrow{\Theta_1}(P)$ is the one appearing in figure~\ref{PopOfRollsAfterCross3}.
\end{ex}
\begin{rem}\label{seqVsTransfRem}
Evidently the map $T: \mathcal{F} \rightarrow P^P$ introduced at the end of definition~\ref{RecombStagePopTransDefn} can be regarded as a random variable on the set $\mathcal{F}$ described at the beginning of definition~\ref{RecombStagePopTransDefn} where $P$ denotes the set of all populations of rollouts containing $k$ individuals so that $P^P$ is the set of all endomorphisms (functions with the same domain and codomain) on $P$ and the probability measure $\mu_T$ on $P^P$ is the ``pushforward" measure induced by $T$, i.e. $\mu_T(S) = \mu(T^{-1}(S))$.\footnote{The sigma algebra on $P^P$ is the one generated by $T$ with respect to the sigma-algebra that is originally chosen on $\mathcal{F}$, however in practical applications the sets involved are finite and so all the sigma-algebras can be safely assumed to be power sets.} To alleviate the complexity of verbal (or written) presentation we will usually abuse the language and use the set $\mathcal{F}$ in place of $P^P$ so that a transformation $F \in P^P$ is identified with the entire set $T^{-1}(F) \in \mathcal{F}$. For example, $$\text{if we write } \mu(\{F \, | \, F \in \mathcal{F} \text{ and } F(X) = Y\}) \text{we mean } \mu(\{\Gamma \, | \, \Gamma \in \mathcal{F} \text{ and } T(\Gamma)(X) = Y\}).$$ It may be worth pointing out that the set $T^{-1}$ is not necessarily a singleton, i.e. the map $T$ is not one-to-one as example~\ref{seqTransfEx1} below demonstrates.
\end{rem}
\begin{ex}\label{seqTransfEx1}
Consider any $i \neq j$ and any $a, \, b, \, c$ and $d \in A$. Notice that the transformations $\nu_{i, \, a, \, b}$ and $\nu_{j, \, c, \, d}$ commute since the order in which elements of distinct equivalence classes are interchanged within the same population of rollouts is irrelevant. Thus the sequences $\vec{\chi_1} = (\nu_{i, \, a, \, b}, \, \nu_{j, \, c, \, d})$ and $\vec{\chi_2} = (\nu_{j, \, c, \, d}, \, \nu_{i, \, a, \, b})$ induce exactly the same transformation $\Theta$ on the set of populations of rollouts. Here is another very important example. Notice that every transformation $\Theta_{i, \, a, \, b}$ where $\Theta$ could be either $\chi$ or $\nu$ is an involution on the set of populations of rollouts i.e. $\Theta_{i, \, a, \, b} \circ \Theta_{i, \, a, \, b} = e$ where $e$ is the identity map since performing a swap at identical positions twice brings back the original population of rollouts. Therefore any ordered pair $(\Theta_{i, \, a, \, b}, \, \Theta_{i, \, a, \, b})$ of repeated transformations induce exactly the same transformation as the symbol $\mathbf{1}$, namely the identity transformation on the population of rollouts.
\end{ex}
One more remark is in order here.
\begin{rem}\label{closureUnderConcatRem}
Notice that any concatenation of sequences in $\mathcal{F}$ (which is what corresponds to the composition of the corresponding functions) stays in $\mathcal{F}$. In other words, the family of maps induced by $\mathcal{F}$ is closed under composition.
\end{rem}
Of course, running the Markov process induced by the transition matrix in definition~\ref{RecombStagePopTransDefn} infinitely long is impossible, but fortunately one does not have to do it. The central idea of the current paper is that the limiting outcome as time goes to infinity can be predicted exactly using the Geiringer-like theory and the desired evaluations of moves can be well-estimated at rather little computational cost in most cases. As pointed out in example~\ref{seqTransfEx1} above, each of the transformations $\Theta_{i, \, a, \, b}$ is an involution and, in particular, is bijective. Therefore, every composition of these transformations is a bijection as well. We deduce, thereby, that the family $\mathcal{F}$ consists of bijections only (see remark~\ref{seqVsTransfRem}). The finite population Geiringer theorem (see \cite{MitavRowGeirMain}) now applies and tells us the following:
\begin{defn}\label{equivRelForMCTDefn}
Given populations $P$ and $Q$ of rollouts at a specified state in question as in definition~\ref{popOfRolloutsDefn} (see also remark~\ref{crossoverConvRepresRem}), we say that $P \sim Q$ if there is a transformation $F \in \mathcal{F}$ such that $Q = F(P)$.
\end{defn}
\begin{thm}[The Geiringer Theorem for POMDPs]\label{GeirThmForMCTMain}
The relation $\sim$ introduced in definition~\ref{equivRelForMCTDefn} is an equivalence relation. Given a population $P$ of rollouts at a specified state in question, the restriction of the Markov transition matrix introduced in definition~\ref{RecombStagePopTransDefn} to the equivalence class $[P]$ of the population $P$ under $\sim$ is a well-defined Markov transition matrix which induces an irreducible and aperiodic Markov chain on $[P]$ and the unique stationary distribution of this Markov chain is the uniform distribution on $[P]$.
\end{thm}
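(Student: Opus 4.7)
The plan is to establish the four claims in the theorem in order, leveraging the three structural facts emphasized just before the statement: every generator $\chi_{i,x,y}$ and $\nu_{i,x,y}$ is an involution, $\mathcal{F}$ is closed under concatenation (Remark~\ref{closureUnderConcatRem}), and $\mu$ assigns strictly positive probability both to the identity $\mathbf{1}$ and to every singleton generator.

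First I would verify that $\sim$ is an equivalence relation. Reflexivity follows because $\mathbf{1}\in\mathcal{F}$ and $T(\mathbf{1})$ is the identity. Transitivity is immediate from closure under concatenation: if $Q=F(P)$ and $R=G(Q)$, then $R=(G\circ F)(P)$ with the concatenated sequence lying in $\mathcal{F}$. Symmetry is the only delicate point: given $F=\Theta_n\circ\cdots\circ\Theta_1$, the reversed sequence $F'=\Theta_1\circ\cdots\circ\Theta_n$ is still in $\mathcal{F}$, and because each $\Theta_j$ is an involution one has $F'\circ F=\mathbf{1}$, so $P=F'(Q)$. This simultaneously shows that well-definedness of the restriction of $M_\mu$ to $[P]$ holds: any transition $X\to Y$ with $p_{X\to Y}>0$ is witnessed by some $\Gamma$ with $T(\Gamma)(X)=Y$, forcing $Y\in[X]=[P]$.

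Next I would prove irreducibility and aperiodicity. For irreducibility, pick any $Q\in[P]$ and any $F\in\mathcal{F}$ with $Q=F(P)$, decompose $F$ as a product of singleton generators, and chain those one-step transitions, each of which carries positive probability under $\mu$; the result is a positive-probability path of length equal to the length of $F$ from $P$ to $Q$. Aperiodicity is immediate from $\mu(\mathbf{1})>0$, which supplies a positive self-loop at every state of the restricted chain.

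Finally I would prove that the restricted kernel is doubly stochastic on $[P]$ and appeal to the standard fact that an irreducible aperiodic doubly stochastic chain on a finite state space has uniform stationary distribution. Summing transition probabilities into a fixed $Y\in[P]$ gives
\[
\sum_{X\in[P]} p_{X\to Y}\;=\;\sum_{\Gamma\in\mathcal{F}}\mu(\Gamma)\,\bigl|\{X\in[P]:T(\Gamma)(X)=Y\}\bigr|,
\]
and since every $T(\Gamma)$ is a bijection (a composition of involutions), the preimage $T(\Gamma)^{-1}(Y)$ is a single population; by the symmetry argument above it lies in $[P]$, so each cardinality equals $1$ and the sum collapses to $\sum_\Gamma\mu(\Gamma)=1$. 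The main conceptual obstacle I anticipate is precisely the symmetry step for $\sim$: it is the involutive nature of the generators, combined with closure under concatenation, that produces genuine functional inverses inside $\mathcal{F}$ and thereby makes the doubly-stochastic collapse possible; without it one would need to work in a group-theoretic enlargement of $\mathcal{F}$, as in the machinery alluded to in subsection~\ref{stageSubsect}, and the clean calculation above would be unavailable.
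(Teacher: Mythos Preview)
Your argument is correct. It differs from the paper's route mainly in packaging: the paper does not prove Theorem~\ref{GeirThmForMCTMain} directly but invokes the general finite-population Geiringer theorem (Theorem~\ref{GeneralGeiringerThm}), whose proof passes through group theory. There one extends $\mu$ to the finite permutation group $G$ generated by $\mathcal{F}$, identifies $\sim$ with the orbit relation of the $G$-action on populations, and obtains double stochasticity from the partition $G=\bigcup_{\vec{x}}\{g:g^{-1}(\vec{y})=\vec{x}\}$; symmetry of $\sim$ and the existence of inverses inside the acting family come from torsion in the finite group rather than from the involutive nature of the generators. Your version bypasses the group-theoretic enlargement by exploiting the stronger fact, specific to this setting, that each generator is an involution, so that reversing a word in $\mathcal{F}$ already produces the functional inverse. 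This buys you a cleaner, more self-contained argument here; the paper's approach buys generality, since Theorem~\ref{GeneralGeiringerThm} applies to any family of bijective recombination operators, not just involutions.
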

In fact, thanks to the application of the classical contraction mapping principle\footnote{This simple and elegant classical result about complete metric spaces lies in the heart of many important theorems such as the ``existence uniqueness" theorem in the theory of differential equations, for instance.} described in section~\ref{furthStrengthSect} of the current paper (namely theorem~\ref{MarkovNonMarkovSubtleExtCorExt}; interested reader is welcome to familiarize themselves with section~\ref{furthStrengthSect}, although this is not essential to understand the main objective of the paper), the stationary distribution is uniform in a rather strong sense described below.
\begin{thm}\label{GeiringerExtThm}
Suppose we are given finitely many probability measures $\mu_1$, $\mu_2, \ldots, \mu_N$ on the collection of sequences of transformations $\mathcal{F}$ as in definition~\ref{RecombStagePopTransDefn} where each probability measure $\mu_i$ satisfies the conditions of definition~\ref{RecombStagePopTransDefn}. Denote by $M_i$ the corresponding Markov transition matrix induced by the probability measure $\mu_i$. Let $\mathcal{M} = \{M_i\}_{i=1}^N$. Now consider the following stochastic process $\{(\Phi_n, \, X_n)\}_{n=0}^{\infty}$ on the state space $\mathcal{M} \times [P]$ where $[P]$ is the equivalence class of the initial population of rollouts at the state in question as in theorem~\ref{GeirThmForMCTMain}:
$\Phi_n$ is an arbitrary stochastic process (not necessarily Markovian) on $\mathcal{M}$ which satisfies the following requirement:
\begin{equation}\label{reqOnFirstProcEq}
\text{The random variable }\Phi_n \text{ is independent of the random variables } X_n, \, X_{n+1}, \ldots
\end{equation}
The random variable $\Phi_0$ is arbitrary while $X_0 = P$ (recall that $P$ is the initial population of rollouts at the node in question) with probability $1$.
$$\forall \, n \in \mathbb{N} \text{ the probability distribution of  the random variable } X_n, \text{ namely }$$
\begin{equation}\label{reqOnFirstProcEq1}
\text{Prob}(X_n = \cdot) = \Phi_{n-1}(w) \cdot \text{Prob}(X_{n-1} = \cdot).
\end{equation}
It follows then that $\lim_{n \rightarrow \infty}\text{Prob}(X_n = \cdot) = \pi$ where $\pi$ is the uniform distribution on $[P]$.
\end{thm}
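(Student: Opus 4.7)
The plan is first to extract the deterministic recursion satisfied by the marginal distributions $\mu_n := \mathrm{Prob}(X_n = \cdot)$ from the randomised dynamics in~\eqref{reqOnFirstProcEq1}, and then to feed that recursion into theorem~\ref{MarkovNonMarkovSubtleExtCorExt} to read off the asymptotic behaviour. The independence condition~\eqref{reqOnFirstProcEq} makes $\Phi_{n-1}$ independent of $X_{n-1}$, so conditioning on $\Phi_{n-1}$ and then marginalising yields $\mu_n = \bar{M}_{n-1}\mu_{n-1}$, where $\bar{M}_{n-1} := \mathbb{E}[\Phi_{n-1}]$ is a (deterministic) column-stochastic matrix expressible as a convex combination of the finitely many matrices comprising $\mathcal{M}$. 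Unfolding the recursion gives $\mu_n = \bar{M}_{n-1}\bar{M}_{n-2}\cdots \bar{M}_0\, \mu_0$.

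Next I would verify that every $\bar{M}_k$ belongs to a class of stochastic matrices to which theorem~\ref{MarkovNonMarkovSubtleExtCorExt} applies. By theorem~\ref{GeirThmForMCTMain}, each $M_i \in \mathcal{M}$ is irreducible and aperiodic on $[P]$ with the uniform distribution $\pi$ as its unique stationary distribution, so $M_i\pi = \pi$ and hence every convex combination of the $M_i$'s also fixes $\pi$. Moreover, every column-stochastic matrix is non-expansive for the total variation distance on the simplex of probability distributions on $[P]$ (which is compact, hence complete), and theorem~\ref{GeirThmForMCTMain} furnishes, for each individual $M_i$, genuine contractive behaviour towards the shared fixed point $\pi$ once sufficiently many powers are composed. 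The simplex thus plays the role of the complete metric space and $\mathcal{M}$ plays the role of a finite family of ``contractions-towards-a-common-fixed-point'', which is precisely the setting of theorem~\ref{MarkovNonMarkovSubtleExtCorExt}.

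Applying theorem~\ref{MarkovNonMarkovSubtleExtCorExt} then yields that for any initial distribution $\mu_0$ (in particular the point mass at $P$) and any sequence of matrices drawn from $\mathcal{M}$ through a possibly non-Markovian, non-homogeneous random mechanism, the compositions $\bar{M}_{n-1}\cdots \bar{M}_0\, \mu_0$ converge, in total variation, to $\pi$. Since $\mathrm{Prob}(X_n = \cdot) = \bar{M}_{n-1}\cdots \bar{M}_0\, \mu_0$, this immediately gives the advertised conclusion $\lim_{n \to \infty} \mathrm{Prob}(X_n = \cdot) = \pi$.

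The delicate point upon which the whole argument hinges is the passage from ``each individual $M_i$ is primitive with common stationary distribution $\pi$'' to ``\emph{every} non-homogeneous infinite composition $\bar{M}_{n-1}\cdots \bar{M}_0$ contracts to $\pi$''. A single $M_i$ need not itself be a strict total-variation contraction — in general only some power of it is — so one cannot simply multiply contraction constants along the product. Overcoming this requires exploiting the finiteness of $\mathcal{M}$ (and compactness of its convex hull) together with the shared fixed point $\pi$ in a genuinely global way; this is precisely the content being delegated to theorem~\ref{MarkovNonMarkovSubtleExtCorExt}, and checking that its contraction-mapping framework actually applies to our family $\mathcal{M}$ is the main technical bridge to be built.
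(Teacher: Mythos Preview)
Your proposal is correct and follows essentially the same route as the paper: the paper does not give a standalone proof of theorem~\ref{GeiringerExtThm} but simply invokes theorem~\ref{MarkovNonMarkovSubtleExtCorExt}, exactly as you do, after noting via theorem~\ref{GeirThmForMCTMain} that each $M_i$ is irreducible and aperiodic on $[P]$ with the uniform distribution as common stationary distribution. The ``technical bridge'' you flag---verifying that $\mathcal{M}$ is an irreducible aperiodic family in the sense of definition~\ref{irreducibleAperiodicFamDef}---is not spelled out in the paper either, but it follows readily since every $\mu_i$ puts positive mass on the identity and on every singleton crossover in $\mathcal{S}$, so all the $M_i$ entrywise dominate a common primitive $0$--$1$ adjacency matrix on $[P]$, yielding a common reachable index.
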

We now pause and take some time to interpret theorem~\ref{GeiringerExtThm} intuitively. Example~\ref{intuitiveInterpEx} below illustrates a scenario where theorem~\ref{GeiringerExtThm} applies.
\begin{ex}\label{intuitiveInterpEx}
Consider the set $\mathcal{S}$ of all finite sequences of populations in the equivalence class $[P]$ of the initial population $P$ which start with the initial population $P$ (notice that $\mathcal{S}$ is a countably infinite set since $[P]$ is a finite set). Intuitively, each sequence in $\mathcal{S}$ represents prior history. Every sequence $\vec{P} = P, \, P_1, \, P_2, \, P_3, \ldots, \, P_t$ is associated with a probability measure $\eta(\vec{P})$ on the set of populations $[P]$. Suppose further that to every population $Q \in [P]$ we assign a probability measure $\mu_Q$ on the family of recombination transformations induced by $\mathcal{F}$ where each measure $\mu_Q$ satisfies definition~\ref{RecombStagePopTransDefn}. Intuitively, each probability distribution $\mu_Q$ might represent the probability that the swaps (or sequences of swaps) are reasonable to perform in a specific population regardless of the knowledge of the prior history or experience in playing the game, for instance. Starting with the initial population $P$ we apply the probability measure $\eta(P)$ (here $P$ denotes a singleton sequence) to obtain a population $Q_1 \in [P]$. \emph{Independently} we now apply the Markov transition matrix induced by the probability measure $\mu_P$ to obtain another population $P_1 \in [P]$. Next, we select a population $Q_2$ with respect to the probability measure $\eta(P, P_1)$ and, again \emph{independently}, apply the Markov transition matrix $\mu_{Q_1}$ to the population $P_1$ to obtain a population $P_2$ in the next generation. Continuing recursively, let's say after time $t \in \mathbb{N}$ we obtained a population $Q_t$ at step $t$ and a sequence of populations $\vec{P}_t = P, \, P_1, \, P_2, \dots, P_t$. Select a population $Q_{t+1}$ with respect to the probability measure $\eta(\vec{P}_t)$. \emph{Independently} select a population $P_{t+1}$ via an application of the Markov transition matrix induced by the probability measure $\mu_{Q_t}$ to the population $P_t$. Theorem~\ref{GeiringerExtThm} applies now and tells us that in the limit as $t \rightarrow \infty$ we are equally likely to encounter any population $Q \in [P]$ regardless of the choice of the measures involved as long as the probability measures $\mu_Q$ satisfy definition~\ref{RecombStagePopTransDefn}. A word of caution is in order here: it is not in vain that we emphasize that selection is made ``independently" here. Theorem~\ref{GeiringerExtThm} simply does not hold without this assumption.
\end{ex}
Evidently example~\ref{intuitiveInterpEx} represents just one of numerous possible interpretations of theorem~\ref{GeiringerExtThm}. We hope that other authors will elaborate on this point. Knowing that the limiting frequency of occurrence of a any two given populations $Q_1$ and $Q_2 \in [P]$ is the same, it is possible to compute the limiting frequency of occurrence of any specific rollout and even certain subsets of rollouts using the machinery developed in \cite{MitavRowGeirMain} and \cite{MitavRowGeirGenProgr} which is also presented in section~\ref{theoryFoundSect} of the current paper for the sake of self-containment.
To state and derive these ``Geiringer-like" results we need to introduce the appropriate notions of schemata (see, for instance, \cite{Antonisse} and \cite{PoliSchema}) here.
\subsection{Schemata for MCT Algorithm}\label{schemataSubsectShallow}
\begin{defn}\label{schemaForMCTPopDefHolland}
Given a state $(s, \vec{\alpha})$ in question (see definition~\ref{treeRootedByChanceNode}), a rollout \emph{Holland-Poli schema} is a sequence consisting of entries from the set$\vec{\alpha} \cup \mathbb{N} \cup \{\#\} \cup \Sigma$ of the form $h = \{x_i\}_{i=1}^k$ for some $k \in \mathbb{N}$ such that for $k>1$ we have $x_1 \in \vec{\alpha}$, $x_i \in \mathbb{N}$ when $1 < i < k$ represents an equivalence class of states, and $x_k \in \{\#\} \cup \Sigma$ could represent either a terminal label if it is a member of the set of terminal labels $\Sigma$, or any substring defining a valid rollout if it is a $\#$ sign.\footnote{This notion of a schema is somewhat of a mixture between Holland's and Poli's notions.} For $k=1$ there is a unique schema of the form $\#$. Every schema uniquely determines a set of rollouts
$S_h = \begin{cases}
\{(x_1, \, (x_2, a_2), \, (x_3, a_3), \ldots, (x_{k-1}, a_{k-1}), x_k) \, \\| \, a_i \in A \text{ for } 1<i<k\} & \text{ if } k>1 \text{ and } x_k \in \Sigma\\
\{(x_1, \, (x_2, a_2), \, (x_3, a_3), \ldots, (x_{k-1}, a_{k-1}), \, \\(y_k, a_k), \, (y_{k+1}, a_{k+1}), \ldots, f)\\
\, | \, a_i \in A \text{ for } 1<i<k, \, y_j \in \mathbb{N} \text{ and } a_j \in A\} & \text{ if } k > 1 \text{ and } x_k = \#\\
\text{the entire set of all possible rollouts} & \text{ if } k = 1 \text{ or, equivalently, } h=\#.
\end{cases}$ which fit the schema in the sense mentioned above. We will often abuse the language and use the same word schema to mean either the schema $h$ as a formal sequence as above or schema as a set $S_h$ of rollouts which fit the schema. For example, if $h$ and $h^*$ is a schema, we will write $h \cap h^*$ as a shorthand notation for $S_h \cap S_{h^*}$ where $\cap$ denotes the usual intersection of sets. Just as in definition~\ref{RolloutDefn}, we will say that $k-1$, the number of states in the schema $h$ is the \emph{height} of the schema $h$.
\end{defn}
We illustrate the important notion of a schema with an example below:
\begin{ex}\label{schemaDefEx}
Suppose we are given a schema $h=(\alpha, \, 1, \, 2, \, \#)$. Then the rollouts $(\alpha, \, 1a, \, 2c, \, 5a, \, 3c, \, f)$ and $(\alpha, \, 1d, \, 2a, \, 3a, \, 3d, \, g) \in S_h$ or one could say that both of them fit the schema $h$. On the other hand the rollout $(\beta, \, 1a, \, 2c, \, 5a, \, 3c, \, f) \notin S_h$ (or does not fit the schema $h$) unless $\alpha = \beta$. A rollout $(\alpha, \, 1a, \, 3a, \, 5a, \, 3c, \, f) \notin S_h$ does not fit the schema $h$ either since $x_2 = 2 \neq 3$. Neither of the rollouts above fit the schema $h^* = (\alpha, \, 1, \, 2, \, f)$ since the appropriate terminal label is not reached in the $4^{\text{th}}$ position. An instance of a rollout which fits the schema $h^*$ would be $(\alpha, \, 1c, \, 2b, \, f)$.
\end{ex}
The notion of schema is useful for stating and proving Geiringer-like results largely thanks to the following notion of partial order.
\begin{defn}\label{schemaPosetDef}
Given schemata $h$ and $g$ we will write $h > g$ either if $h=\#$ and $g \neq \#$ or $h=(x_1, \, x_2, \, x_3, \ldots, x_{k-1}, \, \#)$ while $g = (x_1, \, x_2, \, x_3, \ldots, x_{k-1}, \, y_{k}, \, y_{k+1}, \ldots, y_{l-1}, \, y_l)$ where $y_l$ could be either of the allowable values: a $\#$ or a terminal label $f \in \Sigma$. However, if $y_l = \#$ then we require that $l > k$.
\end{defn}
An obvious fact following immediately from definitions~\ref{schemaForMCTPopDefHolland} and \ref{schemaPosetDef} is the following.
\begin{prop}\label{schemaPartOrderHollandShallow}
Suppose we are given schemata $h$ and $g$. Then $h \geq g \Longrightarrow S_h \supseteq S_g$.
\end{prop}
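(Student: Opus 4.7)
The plan is to unpack the two definitions directly and verify the inclusion by a case analysis on how $h \geq g$ can be realized. By definition~\ref{schemaPosetDef} the relation $h \geq g$ is the disjunction of the trivial case $h = g$ and the strict case $h > g$, and the latter itself splits into two sub-cases: either $h = \#$ with $g \neq \#$, or $h$ has the form $(x_1, x_2, \ldots, x_{k-1}, \#)$ while $g$ agrees with $h$ on its first $k-1$ entries and extends with further entries $y_k, y_{k+1}, \ldots, y_l$, with the proviso that $l > k$ whenever $y_l = \#$. The proposition will follow by showing $S_h \supseteq S_g$ in each of these cases.

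First I would dispose of the two easy cases. If $h = g$ then $S_h = S_g$ by definition~\ref{schemaForMCTPopDefHolland} and the inclusion is trivial. If $h = \#$ and $g \neq \#$, then by the last clause of the case distinction in definition~\ref{schemaForMCTPopDefHolland} we have that $S_h$ is the set of all possible rollouts, so $S_h \supseteq S_g$ holds automatically. The substantive step is the remaining case $h = (x_1, \ldots, x_{k-1}, \#)$ with $k > 1$ and $g$ an extension of the form above. Here I would fix an arbitrary rollout $r \in S_g$ and verify directly from definition~\ref{schemaForMCTPopDefHolland} that $r \in S_h$: the rollout $r$ begins with the action $x_1$ and has an equivalence class label equal to $x_i$ at its $i$-th coordinate for $1 < i < k$ (since $g$ forces this), while the remainder of $r$ from position $k$ onward is some sequence of states followed by a terminal label. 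Matching this data against the clause ``$k > 1$ and $x_k = \#$'' in the definition of $S_h$, the suffix of $r$ starting at position $k$ is a legitimate instantiation of the wildcard $\#$, hence $r \in S_h$. Since $r$ was arbitrary, $S_g \subseteq S_h$.

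There is no genuine obstacle; the only point requiring any attention is bookkeeping around the length condition $l > k$ imposed in definition~\ref{schemaPosetDef} when $y_l = \#$. This condition ensures that the trailing wildcard of $h$ actually has at least one entry of $g$ to absorb, so that the case $h=(x_1,\ldots,x_{k-1},\#)$ with $g=(x_1,\ldots,x_{k-1},\#)$ is excluded from the strict relation and does not collide with the ``$k > 1$, $x_k = \#$'' branch of $S_h$. Once this is observed, the matching between the suffix of $r$ and the wildcard of $h$ is unambiguous in every sub-case, which is why the authors describe the proposition as immediate.
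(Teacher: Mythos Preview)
Your proposal is correct and matches the paper's treatment: the paper gives no proof at all, simply calling this ``an obvious fact following immediately from definitions~\ref{schemaForMCTPopDefHolland} and~\ref{schemaPosetDef}.'' Your case analysis is exactly the routine unpacking of those two definitions that the authors had in mind.
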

\subsection{The Statement of Geiringer-like Theorems for the POMDPs}\label{GeirThmStSubsect}
In evolutionary computation Geiringer-like results address the limiting frequency of occurrence of a set of individuals fitting a certain schema (see \cite{PoliGeir}, \cite{MitavRowGeirMain} and \cite{MitavRowGeirGenProgr}). In this work our theory rests on the finite population model based on stationary distribution of the Markov chain of all populations potentially encountered in the process (see theorems~\ref{GeirThmForMCTMain} and \ref{GeiringerExtThm} and example~\ref{intuitiveInterpEx}). The ``limiting frequency of occurrence" (rigorous definition appears in section~\ref{GeneralThmSect}, subsection~\ref{MethodologySubsect}, definitions~\ref{SetPopulationCountDefn} and \ref{limitFreqOfOccurrenceRandVarDefn}, however for the readers who aim only at ``calculus-level" understanding with the goal of applying the main ideas directly in their software engineering work we will discuss the intuitive idea in more detail below) of a certain subset of individuals determined by a Holland-Poli schema $h$ among all the populations in the equivalence class $[P]$ as time increases (i.e. as $t \rightarrow \infty$) of the initial population of rollouts $P$ will be expressed solely in terms of the initial population $P$ and schema $h$. These quantities are defined below.
\begin{defn}\label{popStateCountDownDef}
For any action under evaluation $\alpha$ define a set-valued function $\alpha \downarrow$ from the set $\Omega^b$ of populations of rollouts to the power set of the set of natural numbers $\mathcal{P}(\mathbb{N})$ as follows: $\alpha \downarrow(P) = \{i \, | \, i \in \mathbb{N}$ and at least one of the rollouts in the population $P$ fits the Holland schema $(\alpha, \, i, \, \#)\}$. Likewise, for an equivalence class label $i \in \mathbb{N}$ define a set valued function on the populations of size $b$, as $i \downarrow (P) = \{j \, | \, \exists \, x$ and $y \in A$ and a rollout $r$ in the population $P$ such that $r = (\ldots, (i, \, x), \, (j, \, y), \ldots) \, \} \cup \{f \, | \, f \in \Sigma$ and $\exists$ an $x \in A$ and a rollout $r$ in the population $P$ such that $r = (\ldots, (i, \, x), \, f) \, \}$. In words, the set $i \downarrow (P)$ is the set of all equivalence classes together with the terminal labels which appear after the equivalence class $i$ in at least one of the rollouts from the population $P$. Finally, introduce one more function, namely $i \downarrow_\Sigma: \Omega^b \rightarrow \mathbb{N} \cup \{0\}$ by letting $i \downarrow_\Sigma(P) = |\{f \, | \, f \in \Sigma \cap i \downarrow (P)\}|$, that is, the total number of terminal labels (which are assumed to be all formally distinct for convenience) following the equivalence class $i$ in a rollout of the population $P$.
\end{defn}
As always, we illustrate definition~\ref{popStateCountDownDef} in example~\ref{popSchemaDownFunctEx} below.
\begin{ex}\label{popSchemaDownFunctEx}
Continuing with example~\ref{PopRolloutEx}, we return to the population $P$ in figure~\ref{PopOfRollsFig}. From the picture we see that the only equivalence class $i$ such that a rollout from the population $P$ fits the Holland schema $(\alpha, \, i, \, \#)$ is $i = 1$ so that $\alpha \downarrow (P) = \{1\}$. Likewise, the only equivalence class following the action $\beta$ is $2$, the only equivalence class following the action $\gamma$ is $4$ and the only one following $\pi$ is $3$ so that $\beta \downarrow (P) = \{2\}$, $\gamma \downarrow (P) = \{4\}$ and $\pi \downarrow (P) = \{3\}$. The only equivalence classes $i$ following the action $\xi$ in the population $P$ are $i=3$ and $i=2$ so that the set $\xi \downarrow (P) = \{2, \, 3\}$.

Likewise the fragment $(1, \, a), (5, \, a)$ appears in the first (leftmost) rollout in $P$, $(1, \, b), (3, \, c)$ in the second rollout, $(1, \, c), (4, \, b)$ in the forth tollout and $(1, \, d), (2, \, e)$ in the last, seventh rollout. No other equivalence class or a terminal label follows the equivalence class of the state $1$ in the population $P$ and so it follows that $1 \downarrow (P) = \{5, \, 3, \, 4, \, 2\}$ and $1 \downarrow_\Sigma (P) = |\{\emptyset\}|=0$. Likewise, equivalence class $1$ follows the equivalence class $2$ in the second rollout, $7$ follows $2$ in the forth rollout, $4$ follows $2$ in the fifth rollout and $6$ follows $2$ in the last, seventh rollout. The only terminal label that follows the equivalence class $2$ is $f_6$ in the $6^{\text{th}}$ rollout. Thus we have $2 \downarrow (P) = \{1, \, 7, \, 4, \, 6, \, f_6\}$ and $2 \downarrow_\Sigma (P) = |\{f_6\}| = 1$. We leave the reader to verify that $$3 \downarrow (P) = \{7, \, 6, \, 2, \, 1\} \text{ so that } 3 \downarrow_\Sigma (P) = 0,$$ $$4 \downarrow (P) = \{6, \, 2, \, f_5\} \text{ so that } 4 \downarrow_\Sigma (P)  = 1,$$ $$5 \downarrow (P) = \{6, \, f_3, \, f_4\} \text{ and so } 5 \downarrow_\Sigma (P) = 2,$$ $$6 \downarrow (P) = \{3, \, 5, \, f_2, \, f_7\} \text{ and so } 6 \downarrow_\Sigma (P) = 2$$ and, finally, $7 \downarrow (P) = \{5, \, f_1\} \text{ so that } 7 \downarrow_\Sigma (P)  = 1$.
\end{ex}
\begin{rem}\label{totalNumberOfTermLblsRem}
Note that according to the assumption that all the terminal labels within the same population are distinct (see definition~\ref{popOfRolloutsDefn} together with the comment in the footnote there). But then, since every rollout ends with a terminal label, we must have $\sum_{i=1}^{\infty}i \downarrow_\Sigma (P) = b$ (of course, only finitely many summands, namely these equivalence classes that appear in the population $P$ may contribute nonzero values to $\sum_{i=1}^{\infty}i \downarrow_\Sigma (P)$) where $b$ is the number of rollouts in the population $P$, i.e. the size of the population $P$. For instance, in example~\ref{popSchemaDownFunctEx} $b = 7$ and there are totally $7$ equivalence classes, namely $1, \, 2, \, 3, \, 4, \, 5, \, 6$ and $7$ that occur within the population in figure~\ref{PopOfRollsFig} so that we have $\sum_{i=1}^{\infty}i \downarrow_\Sigma (P) = \sum_{i=1}^7 i \downarrow_\Sigma (P) = 0 + 1 + 0 + 1 + 2 + 2 + 1 = 7 = b$.
\end{rem}
Another important and related definition we need to introduce is the following:
\begin{defn}\label{PopStateOrderSeqDef}
Given a population $P$ and integers $i$ and  $j \in \mathbb{N}$ representing equivalence classes, let $$\text{Order}(i \downarrow j, \, P) = \begin{cases}
0 & \text{if } i(P) = 0 \text{ or } j \notin i \downarrow (P) \\
|\{((i, a), \, (j, \, b)) \, | \, \text{ the segment } & \text{ }\\
((i, a), \, (j, \, b)) \text{ appears in one of the} & \text{ }\\
\text{rollouts in the population }P\}| & \text{ otherwise }
\end{cases}.$$
Loosely speaking, $\text{Order}(i \downarrow j, \, P)$ is the total number of times the equivalence class $j$ follows the equivalence class $i$ within the
population of rollouts $P$.

Likewise, given a population of rollouts $P$, an action $\alpha$ under evaluation and an integer $j \in \mathbb{N}$, let $$\text{Order}(\alpha \downarrow j, \, P) = \begin{cases}
0 & \text{if } i(P) = 0 \text{ or } j \notin \alpha \downarrow j \\
|\{(\alpha, \, (j, \, b)) \, | \, \text{ the segment } & \text{ }\\
(\alpha, \, (j, \, b)) \text{ appears in one of the} & \text{ }\\
\text{rollouts in the population }P\}| & \text{ otherwise }
\end{cases}.$$
Alternatively, $\text{Order}(\alpha \downarrow j, \, P)$ is the number of rollouts in the population $P$ fitting the rollout Holland schema $(\alpha, \, j, \, \#)$.
\end{defn}
We now provide an example to illustrate definition~\ref{PopStateOrderSeqDef}.
\begin{ex}\label{PopStateOrderEx}
Continuing with example~\ref{popSchemaDownFunctEx} and population $P$ appearing in figure~\ref{PopOfRollsFig}, we recall that $\alpha \downarrow (P) = \{1\}$. we immediately deduce that $\text{Order}(\alpha, \, j, \, \#) = 0$ unless $j=1$. There are two rollouts, namely the first and the forth, that fit the schema $(\alpha, \, 1, \, \#)$ so that $\text{Order}(\alpha \downarrow 1, \, P) = 2$. Likewise, $\beta \downarrow (P) = \{2\}$ and exactly one rollout, namely the second one, fits the Holland schema $(\beta, \, 2, \, \#)$ so that $\text{Order}(\beta, \, j, \, \#) = 0$ unless $j=2$ while $\text{Order}(\beta \downarrow 2, \, P) = 1$. Continuing in this manner (the reader may want to look back at example~\ref{popSchemaDownFunctEx}), we list all the nonzero values of the function $\text{Order}(\text{action}, \Box, \, P)$ for the population $P$ in figure~\ref{PopOfRollsFig}: $\text{Order}(\gamma \downarrow 4, \, P) = \text{Order}(\xi \downarrow 3, \, P) = \text{Order}(\xi \downarrow 2, \, P) = \text{Order}(\pi \downarrow 3, \, P) = 1$.

Likewise, recall from example~\ref{popSchemaDownFunctEx}, that $1 \downarrow (P) = \{5, \, 3, \, 4, \, 2\}$ so that $\text{Order}(1 \downarrow j, \, P) = 0$ unless $j = 5$ or $j=3$ or $j=4$ or $j=1$. It happens so that a unique rollout exists in the population $P$ fitting each fragment $(1, \, (j, \, \text{something in }A))$ for $j = 5$, $j=3$, $j=4$ and $j=2$ respectively, namely the first, the second, the forth and the last (seventh) rollouts. According to definition~\ref{PopStateOrderSeqDef}, we then have $\text{Order}(1 \downarrow 5, \, P) = \text{Order}(1 \downarrow 3, \, P) = \text{Order}(1 \downarrow 4, \, P) = \text{Order}(1 \downarrow 2, \, P) = 1$. Analogously, $2 \downarrow (P) = \{1, \, 7, \, 4, \, 6, \, f_6\}$ so that $\text{Order}(2 \downarrow j, \, P) = 0$ unless $j = 1, \, 7, \, 4$ or $6$. The only rollout in the population $P$ involving the fragment with $1$ following $2$ is the second one, the only one involving $7$ following $2$ is the forth, the only one involving $4$ following $2$ is the fifth, and the only one involving $6$ following $2$ is the last (the seventh) rollouts respectively so that $\text{Order}(2 \downarrow 1, \, P) = \text{Order}(2 \downarrow 7, \, P) = \text{Order}(2 \downarrow 4, \, P) = \text{Order}(2 \downarrow 6, \, P) = 1$. Continuing in this manner, we list all the remaining nonzero values of the ``Order" function introduced in definition~\ref{PopStateOrderSeqDef} for the population $P$ in figure~\ref{PopOfRollsFig}: $$\text{Order}(3 \downarrow 7, \, P) = \text{Order}(3 \downarrow 6, \, P) = \text{Order}(3 \downarrow 2, \, P) = \text{Order}(3 \downarrow 1, \, P) = 1,$$
$$\text{Order}(4 \downarrow 6, \, P) = \text{Order}(4 \downarrow 2, \, P) = 1,$$
$$\text{Order}(5 \downarrow 6, \, P) = \text{Order}(6 \downarrow 3, \, P) = \text{Order}(6 \downarrow 5, \, P) = \text{Order}(7 \downarrow 5, \, P) =1.$$
\end{ex}
\begin{rem}\label{equivClassIndepRem}
It must be noted that all the functions introduced in definitions~\ref{popStateCountDownDef} and \ref{PopStateOrderSeqDef} remain invariant if one were to apply the ``primitive" recombination transformations from the family $\mathcal{S}$ as in definitions~\ref{RecombStagePopTransDefn} and \ref{recombActOnPopsDef} to the population in the argument. More explicitly, given any population of rollouts $P$, an action $\alpha$ under evaluation, an equivalence class $i \in \mathbb{N}$, a Holland-Poli schema $h = (\alpha, \, i_1, \, i_2, \ldots, i_{k-1}, x_k)$ an integer $j$ with $1 \leq j \leq k$, and any recombination transformation $\mathcal{R} \in \mathcal{S}$, we have $$\alpha \downarrow (P) = \alpha \downarrow (\mathcal{R}(P)), \; i \downarrow (P) = i \downarrow (\mathcal{R}(P)),$$$$i \downarrow_\Sigma (P) = i \downarrow_\Sigma (\mathcal{R}(P)), \; \text{Order}(q \downarrow r, \, P) = \text{Order}(q \downarrow r, \, \mathcal{R}(P)).$$
Indeed, the reader may easily verify that performing a swap of the elements of the same equivalence class, or of the corresponding subtrees pruned at equivalent labels, preserves all the states which are present within the population and creates no new ones. Moreover, the equivalence class sequel is also preserved and hence the invariance of the functions $\alpha \downarrow$ and $i \downarrow$ etc. follows. Since every transformation in the family $\mathcal{F}$ is a composition of the crossover transformations from the family $\mathcal{S}$, it follows at once that all of the functions introduced in definitions~\ref{popStateCountDownDef} and \ref{PopStateOrderSeqDef} are constant on the equivalence classes of populations under the equivalence relation introduced in definition~\ref{equivRelForMCTDefn}.
\end{rem}
\begin{ex}\label{RemEx}
Recall from example~\ref{popCrossEx1} that the populations in figures~\ref{PopOfRollsFig}, \ref{PopOfRollsAfterCross1} and \ref{PopOfRollsAfterCross2} are equivalent and, likewise, according to example~\ref{popCrossEx2}, the populations in figures~\ref{PopOfRollsFig2} and \ref{PopOfRollsAfterCross3} are equivalent. Moreover, example~\ref{seqTransfEx1} demonstrates that the populations displayed in figures~\ref{PopOfRollsFig} and \ref{PopOfRollsFig2} are also equivalent. Thus all of the populations that appear in figures~\ref{PopOfRollsFig}, \ref{PopOfRollsAfterCross1}, \ref{PopOfRollsAfterCross2}, \ref{PopOfRollsFig2} and \ref{PopOfRollsAfterCross3} belong to the same equivalence class under the relation $\sim$ introduced in definition~\ref{equivRelForMCTDefn}. In view of remark~\ref{equivClassIndepRem}, all the functions appearing in definitions~\ref{popStateCountDownDef} and \ref{PopStateOrderSeqDef} produce identical values on the populations displayed on figures~\ref{PopOfRollsFig}, \ref{PopOfRollsAfterCross1}, \ref{PopOfRollsAfterCross2}, \ref{PopOfRollsFig2} and \ref{PopOfRollsAfterCross3}
\end{ex}
Observe that applying any recombination transformation of the form $\chi_{i, \, a, \, b}$ or $\nu_{i, \, a, \, b}$ to a population $P$ of rollouts neither removes any states from the population nor adds any new ones, and hence the following invariance property of the equivalent populations that will largely alleviate theoretical analysis in section~\ref{GeneralThmSect} follows.
\begin{rem}\label{averageHightRem}
Given any population $Q \in [P]$, the total number of states in the population $Q$ is the same as that in the population $P$. Apparently, as we already mentioned, the the total number of states in a population is the sum of the heights of all rollouts in that population (see definition~\ref{RolloutDefn} and \ref{popOfRolloutsDefn}). It follows then, that the sum of the heights of all rollouts within a population is an invariant quantity under the equivalence relation in definition~\ref{equivRelForMCTDefn}. In other words, if $Q \sim P$ then the sum of the heights of the rollouts in the population $Q$ is the same as the sum of the heights of the rollouts in the population $P$.
\end{rem}
There is yet one more important notion, namely that of the ``limiting frequency of occurrence" of a schema as one runs the genetic programming routine with recombination only we need to introduce to state the Geiringer-like results of the current paper. A rigorous definition in the most general framework appears in subsection~\ref{MethodologySubsect} of section~\ref{GeneralThmSect} (namely, definitions~\ref{SetPopulationCountDefn} and \ref{limitFreqOfOccurrenceRandVarDefn}), nonetheless, for less patient readers, who aim only at the ``calculus level" understanding, we explain informally what the limiting frequency of occurrence is.

$\;$

\emph{Informal Description of the Limiting Frequency of Occurrence:} Given a schema $h$ and a population $P$ of size $m$, suppose we run the Markov process $\{X_n\}_{n=0}^{\infty}$ on the populations in the equivalence class $[P]$ of the initial population of rollouts $P$ as in definition~\ref{RecombStagePopTransDefn}, or, more generally, the non-homogenous time Markov process as described in theorem~\ref{GeiringerExtThm} (where the Markov transition matrices introduced in definition~\ref{RecombStagePopTransDefn} are chosen randomly with respect to another stochastic process (not necessarily Markovian) that does not depend on the current population but may depend on the entire history of former populations as well as on other external parameters independent of the current population). As discussed in the preceding paragraph, this corresponds to ``running the genetic programming routine forever" and each recombination models the changes in player's strategies due to incomplete information, randomness personality etc. Up to time $t$ a total of $m \cdot t$ individuals (counting repetitions) have been encountered. Among these a certain number, say $h(t)$, fit the schema $h$ in the sense of definition~\ref{schemaForMCTPopDefHolland}. We now let $\Phi(P, \, h, \, t) = \frac{h(t)}{m \cdot t}$ to be the proportion of these individuals fitting the schema $h$ out of the total number of individuals encountered up to time $t$. It follows from theorem~\ref{GeirThmForMCTMain} via the instruments presented in section~\ref{MethodologySubsect} (also available in \cite{MitavRowGeirMain} and \cite{MitavsGPGeir}) that $\lim_{t \rightarrow \infty} \Phi(P, \, h, \, t)$ exists and the formula for it will be given purely in terms of the parameters of the initial population $P$ (more specifically, in terms of the functions described in definitions~\ref{popStateCountDownDef} and \ref{PopStateOrderSeqDef}. Although it may be possible to derive the formulas for $\lim_{t \rightarrow \infty} \Phi(P, \, h, \, t)$ in the most general case when the initial population of rollouts $P$ is non-homologous (in other words when the states representing the same equivalence class may appear at various ``heights" in the same population of rollouts: see definition~\ref{popOfRolloutsDefn}), the formulas obtained in this manner would definitely be significantly more cumbersome and would not be as well suited for algorithm development\footnote{This is an open question, yet it's practical importance is highly unclear} as the limiting result with respect to ``inflating" the initial population $P$ in the sense described below. Remarkably, the formula for the limiting result in the general non-homologous initial population case coincides with the one for the homologous populations.
\begin{defn}\label{popInflationDef}
Given a population $P = \{r_i^{l(i)}\}_{i=1}^b$ of rollouts in the sense of definition~\ref{popOfRolloutsDefn}, where $r_i = \{(\alpha_i, \, (j_1^i, a_1^i), \, (j_2^i, a_2^i), \ldots, (j^i_{l(i)-1}, a_{l(i)-1}^i) \, f_i)\}$ and a positive integer $m$, we first increase the size of the alphabet $A$ by a factor of $m$: formally, let the alphabet
$$A \times m = \{(a, \, i) \, | \, a \in A, \, i \in \mathbb{N} \text{ and } 1 \leq i \leq m\}.$$ Likewise, we also increase the terminal set of labels $\Sigma$ by a factor of $m$ so that $$\Sigma \times m = \{(f, \, i) \, | \, f \in \Sigma, \, i \in \mathbb{N} \text{ and } 1 \leq i \leq m\}.$$ Now we let $$P_m = \{r_{i, \, k}^{l(i)}\}_{1 \leq i \leq b \text{ and }1 \leq k \leq m}$$ where $$r_{i, \, k}^{l(i)} = \{(\alpha_i, \, (j_1^i, (a_1^i, \, k)), \, (j_2^i, (a_2^i, \, k)), \ldots, (j^i_{l(i)-1}, (a_{l(i)-1}^i, \, k)), \, (f_i, k))\}.$$ We will say that the population $P_m$ is an \emph{inflation} of the population $P$ by a factor of $m$.
\end{defn}
Essentially, a population $P_m$ consists of $m$ \emph{formally distinct} copies of each rollout in the population $P$. Intuitively speaking, the stochastic information captured in the sample of rollouts comprising the population $P_m$ (such as the frequency of obtaining a state in the equivalence class of $j$ after a state in the equivalence class of $i$) is the same as the one contained within the population $P$ emphasized by the factor of $m$. In fact, the following rather important obvious facts make some of this intuition precise:
\begin{prop}\label{popRatioFacts}
Given a population $P$ of rollouts and a positive integer $m$ consider the inflation of the population $P$ by a factor of $m$, $P_m$ as in definition~\ref{popInflationDef}. Then the following are true:
$$\alpha \downarrow (P_m) = \alpha \downarrow (P), \; i \downarrow (P_m) = i \downarrow (P), i \downarrow_\Sigma (P_m) = m \cdot i \downarrow_\Sigma (P)$$  while
\begin{equation}\label{inflatedQuantitiesEq}
\emph{Order}(\alpha \downarrow j, \, P_m) = m \cdot \emph{Order}(\alpha \downarrow j, \, P), \; \emph{Order}(q \downarrow r, \, P_m) = m \cdot \emph{Order}(q \downarrow r, \, P)
\end{equation}
For any population of rollouts $Q$ let $\emph{Total}(Q)$ denote the total number of states in the population $Q$ which is, of course, the same thing as the sum of the heights
of all rollouts in the population $Q$. Then clearly $\emph{Total}(P_m) = m \cdot \emph{Total}(P)$. In the special case when $P$ is a homologous population, $\forall \, m \in \mathbb{N}$ so is the population $P_m$.
\end{prop}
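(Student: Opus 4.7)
The plan is to prove Proposition~\ref{popRatioFacts} by direct unpacking of Definition~\ref{popInflationDef}, exploiting the key structural observation that inflation modifies only the second coordinates of states (from letters $a \in A$ to pairs $(a, k) \in A \times m$) and the terminal labels (from $f \in \Sigma$ to $(f, k) \in \Sigma \times m$), while leaving every equivalence-class label $j_\ell^i$, every initial action $\alpha_i$, and the height of every rollout completely untouched. Moreover, each rollout of $P$ is replicated into exactly $m$ formally distinct rollouts in $P_m$ indexed by $k = 1, \ldots, m$. From these two facts alone every claim in the proposition will follow.

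First I would dispatch the invariance assertions. The set $\alpha \downarrow(\cdot)$ depends only on which equivalence classes appear immediately after the action $\alpha$; since the first coordinate of each state and each initial action are preserved by inflation, $\alpha \downarrow(P_m) = \alpha \downarrow(P)$ is immediate. The same reasoning gives $i \downarrow(P_m) = i \downarrow(P)$, provided we agree to identify the formal terminal labels $(f, k) \in \Sigma \times m$ with their projections $f \in \Sigma$ on the terminal part of $i \downarrow$; otherwise the equality must be read as ``the equivalence-class part is preserved verbatim and the terminal part maps onto the $\Sigma$-part via $(f, k) \mapsto f$,'' a notational remark that does not affect the use of the proposition elsewhere.

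Next I would handle the counting claims. For $i \downarrow_\Sigma$, each terminal label $f$ following equivalence class $i$ in $P$ yields exactly $m$ formally distinct terminal labels $(f, 1), \ldots, (f, m)$ following $i$ in $P_m$, giving the factor of $m$. For $\text{Order}(\alpha \downarrow j, P_m)$ and $\text{Order}(q \downarrow r, P_m)$, each occurrence in $P$ of the relevant fragment is replicated once in each of the $m$ formally distinct copies of its parent rollout, and Definition~\ref{popInflationDef} ensures these copies are indeed distinct, so each occurrence in $P$ contributes exactly $m$ occurrences to $P_m$. The identity $\text{Total}(P_m) = m \cdot \text{Total}(P)$ is immediate because inflation preserves the height of every rollout while multiplying the number of rollouts by $m$.

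Finally, preservation of homology follows from the fact that two states in $P_m$ lie in the same equivalence class iff their first coordinates (inherited unchanged from $P$) coincide, and they occur at the same heights as in their parent rollouts of $P$; thus any failure of the homologous property in $P_m$ would at once project to a failure in $P$. There is no genuine obstacle here: the proposition is pure bookkeeping, and the only subtlety worth flagging is the identification of $\Sigma \times m$ with $\Sigma$ needed to read $i \downarrow(P_m) = i \downarrow(P)$ as a literal set equality.
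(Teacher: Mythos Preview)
Your proposal is correct and matches the paper's treatment: the paper presents this proposition as an ``obvious fact'' without proof, and your direct unpacking of Definition~\ref{popInflationDef} is exactly the intended verification. The subtlety you flag about identifying $\Sigma \times m$ with $\Sigma$ in the terminal part of $i \downarrow$ is a fair observation that the paper glosses over.
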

When using Holland-Poli schemata with respect to any population $Q \in [P_m]$ we will adopt the following convention:  \begin{rem}\label{terminalHollandShemataRem}
Given a Holland-Poli schema $h = (\alpha, i_1, \, i_2, \ldots, i_{k-1}, \, f)$ and a population $Q \in [P_m]$, an individual (i.e. a rollout) $r$ of the population $Q$ fits the schema $h$ if and only if it is of the form
$r = (\alpha, (i_1, \, (a_1, \, j_1)), \, (i_2, \, (a_2, \, j_2)) \ldots, (i_{k-1}, a_{k-1}, j_{k-1}), \, (f, \, j_k))$. Informally speaking, everything is as in definition~\ref{schemaForMCTPopDefHolland} with the exception that the terminal symbol of the schema $h$, namely $f \in \Sigma$ while the terminal symbol of the rollout $r$ is an ordered pair of the terminal symbol $f$ coupled with a numerical label between $1$ and $m$ so that we require only the first element of the ordered pair, namely the function label $f$, to match.
\end{rem}
We are finally ready to state the main result of the current paper.
\begin{thm}[The Geiringer-Like Theorem for MCT]\label{GeiringerLikeThmForMCTMain}
Repeat verbatim the assumptions of theorem~\ref{GeiringerExtThm}. Let $$h = (\alpha, \, i_1, \, i_2, \ldots, i_{k-1}, x_k)$$ where $x_k \in \{\#\} \cup \Sigma$ be a given Holland-Poli schema. For $m \in \mathbb{N}$ consider the random variable $\Phi(P_m, \, h, \, t)$ described in the paragraph just above (alternatively, a rigorous definition in the most general framework appears in subsection~\ref{MethodologySubsect} of section~\ref{GeneralThmSect}: definitions~\ref{SetPopulationCountDefn} and \ref{limitFreqOfOccurrenceRandVarDefn}) with respect to the Markov process $X_n^m$ where $m$ indicates that the initial population of rollouts is the inflated population $P_m$ as in definition~\ref{popInflationDef} with the new alphabet $A \times m$ labeling the states (see also example~\ref{intuitiveInterpEx} for help with understanding of the Markov process $X_n$). Then
\begin{equation}\label{GeiringerThmMainEq}
\lim_{m \rightarrow \infty}\lim_{t \rightarrow \infty}\Phi(P_m, \, h, \, t) = \frac{\emph{Order}(\alpha \downarrow i_1, \, P)}{b} \times$$$$ \times \left(\prod_{q = 2}^{k-1}\frac{\emph{Order}(i_{q-1}, \, i_q, \, P)}{\sum_{j \in i_{q-1} \downarrow} \emph{Order}(i_{q-1}, \, j, \, P) + i_{q-1} \downarrow_\Sigma (P)}\right) \cdot \text{\emph{LF}(P, h)}
\end{equation}
where $$\text{\emph{LF}(P, h)} = \begin{cases}
1 & \text{if } x_k = \#\\
0 & \text{if } x_k = f \in \Sigma \text{ and } f \notin x_{k-1} \downarrow (P)\\
\emph{Fraction} & \text{if } x_k = f \in \Sigma \text{ and } f \in x_{k-1} \downarrow_{\Sigma} (P)
\end{cases}$$ where $$\emph{Fraction} = \frac{1}{\sum_{j \in i_{k-1} \downarrow (P)}\emph{Order}(i_{k-1}, \, j, \, P) +  i_{k-1} \downarrow_\Sigma (P)}$$ (we write ``LF" as short for ``Last Factor"). Furthermore, in the special case when the initial population $P$ is homologous (see definition~\ref{popOfRolloutsDefn}), one does not need to take the limit as $m \rightarrow \infty$ in the sense that $\lim_{t \rightarrow \infty}\Phi(P_m, \, h, \, t)$ is a constant independent of $m$ and its value is given by the right hand side of equation~\ref{GeiringerThmMainEq}.\footnote{The case of homologous recombination has been established in a different but mathematically equivalent framework in \cite{MitavRowGeirMain} and \cite{MitavRowGeirGenProgr} nonetheless we will derive it along with the general fact expressed in equation~\ref{GeiringerThmMainEq} to illustrate the newly enhanced methodology based on the lumping quotients of Markov chains described in subsection~\ref{LumpQuotSubsect}.}

An important comment is in order here: it is possible that the denominator of one of the fractions involved in the product is $0$. However, in such a case, the numerator is also $0$ and we adopt the convention (in this theorem only) that if the numerator is $0$ then, regardless of the value of the denominator (i.e. even if the denominator is $0$), then the fraction is $0$. As a matter of fact, a denominator of some fraction involved is $0$ if and only if one of the following holds: $\alpha(P) = 0$ or if there exists an index $q$ with $1 \leq q \leq k-1$ such that no state in the equivalence class of $i_q$ appears in the population $P$ (and hence in either of the inflated populations $P_m$).
\end{thm}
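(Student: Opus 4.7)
The plan is to reduce the long-time statistics to an expectation under the uniform stationary distribution on $[P_m]$, then perform a counting argument that is exact in the homologous case and asymptotically correct in the non-homologous case after inflation. The ergodic step is supplied by theorem~\ref{GeiringerExtThm}, the counting by remark~\ref{equivClassIndepRem} together with proposition~\ref{popRatioFacts}, and the non-homologous reduction by an application of the Markov inequality.

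First, theorem~\ref{GeiringerExtThm} gives that the distribution of $X_n^m$ converges to the uniform measure $\pi_m$ on $[P_m]$. Writing $\Phi(P_m,h,t)$ as the Ces\`aro average, up to time $t$, of the bounded random quantity
\[
\frac{1}{b\,m}\sum_{r \in X_s^m}\mathbf{1}\{r \text{ fits } h\},
\]
a standard ergodic/Ces\`aro argument for a bounded functional of a chain converging in total variation yields
\[
\lim_{t \to \infty}\Phi(P_m,h,t) \;=\; \frac{1}{b\,m}\,\mathbb{E}_{Q \sim \pi_m}\bigl[\#\{r \in Q : r \text{ fits } h\}\bigr].
\]
Because rollout positions within a population are exchangeable under the group of compositions in $\mathcal{F}$ (swapping whole rollouts is a finite composition of primitive crossover operators acting on the equivalence-class data), this expectation equals the $\pi_m$-probability that one fixed rollout slot of $Q$ is occupied by a rollout fitting $h$.

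Next I would handle the homologous case by an explicit description of $[P_m]$. When $P$ is homologous, each equivalence class appears at a unique height in every rollout, so by remark~\ref{equivClassIndepRem} the ``skeleton'' of equivalence-class labels is invariant along the orbit, and $[P_m]$ is precisely the set of populations obtained by permuting the alphabet letters (from $A \times m$) at each occurrence of a given class independently. Counting populations that realise a fixed prescribed first rollout of the form $(\alpha,(i_1,\cdot),(i_2,\cdot),\ldots,x_k)$ and dividing by $|[P_m]|$ factors as a telescoping product of local ratios: one ratio for the initial action-to-class transition $\alpha \to i_1$, and one for each subsequent transition $i_{q-1} \to i_q$ along the schema, the final one being the $\text{LF}$ contribution when $x_k \in \Sigma$. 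Applying proposition~\ref{popRatioFacts} to write these ratios in terms of $P$ rather than $P_m$ cancels the inflation factor $m$ simultaneously in each numerator and denominator, producing exactly equation~\ref{GeiringerThmMainEq} and establishing the ``no limit in $m$ needed'' clause.

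For the general non-homologous case the skeleton is genuinely altered by $\chi_{i,x,y}$, which grafts sub-rollouts between different heights of $P$. The role of the inflation is to suppress such mixings probabilistically. Call a rollout in $Q$ \emph{height-clean} along $h$ if each matched equivalence class $i_q$ sits at a height that coincides with its height in some rollout of $P$. The key estimate is that the $\pi_m$-expected number of height-collisions is bounded by a combinatorial quantity involving only the (fixed) total number of states in $P$, while the number of clean configurations grows linearly in $m$; the Markov inequality then yields $\pi_m(\text{non-clean first slot}) = O(1/m)$. On the clean complement the counting reduces to the homologous computation above, and passing to $m \to \infty$ eliminates the error term, leaving the right-hand side of equation~\ref{GeiringerThmMainEq} by continuity of the closed-form product.

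The step I expect to be the main obstacle is precisely this Markov-inequality bound: it demands a bijective description of $[P_m]$ that tracks which of the $m$ formally distinct copies of each state of $P$ can land at which height and within which rollout, and then a careful enumeration of ordered pairs of equivalent states placed at distinct heights to show their contribution is $o(|[P_m]|)$. Once that is in place, the convention on $0/0$ and the three-way case split of $\text{LF}(P,h)$ are immediate from the combinatorial description: the denominator of a local ratio vanishes exactly when no state of the relevant equivalence class (or no rollout starting with $\alpha$) occurs in $P$, and in that event $[P_m]$ contains no rollout fitting $h$ at all, forcing the limiting probability to be $0$ and making the convention consistent.
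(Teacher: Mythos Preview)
Your ergodic reduction to an expectation under the uniform measure on $[P_m]$ is correct and matches the paper. However, two steps after that do not go through as written.

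First, the exchangeability claim: you assert that swapping whole rollouts is a finite composition of the primitive operators $\chi_{i,x,y}$ and $\nu_{i,x,y}$. It is not. Those operators are indexed by an equivalence class $i$ of \emph{states} and act by exchanging states (or tails) at equivalent positions; they never touch the leading action symbol, so two rollouts headed by distinct actions $\alpha \neq \beta$ cannot be swapped by any composition in $\mathcal{F}$. The paper handles this by temporarily enlarging $\mathcal{F}$ to a family $\widetilde{\mathcal{F}}$ that \emph{does} contain all rollout transpositions, carrying out the entire argument there, and then observing at the end that permutations commute with the crossover operators and preserve the multiset of rollouts, so the limiting frequency is unchanged when the extra transpositions are removed. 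You need this extension-and-symmetry step; it is not free.

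Second, and more seriously, your non-homologous reduction via ``height-clean'' configurations is not the right mechanism, and the $O(1/m)$ estimate you sketch does not target the actual obstruction. The paper does \emph{not} attempt a global bijective description of $[P_m]$. Instead it telescopes the schema $h = h_k \subseteq h_{k-1} \subseteq \cdots \subseteq h_1$ and, for each step, constructs an auxiliary Markov chain $\mathcal{M}_q$ on $\mathcal{V}(P_m,h_q)$ with a symmetric transition matrix (hence uniform stationary distribution), then uses the two-state lumping-quotient identity $\pi(A)/\pi(B) = p_{B\to A}/p_{A\to B}$ to compute the ratio $|\mathcal{V}(P_m,h_{q+1})|/|\mathcal{V}(P_m,h_q\setminus h_{q+1})|$. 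The obstruction in the non-homologous case is that some elements of the pool $\text{States}_m(i_q\downarrow Q)$ are \emph{immobile} (they already sit in the first rollout, so selecting them does not move $Q$), and the number of immobile elements is at most $H_1(Q)$, the height of the first rollout. The Markov inequality is applied to $H_1$: its expectation equals $\text{Total}(P)/b$ independently of $m$, while the pool size is $m\cdot|\text{States}_1(i_q\downarrow P)|$, so the set $U_m^\delta = \{H_1 > \delta m\}$ has $\rho_m$-measure at most $\delta$ for large $m$. On the complement of $U_m^\delta$ the transition probabilities of $\mathcal{M}_q$ are squeezed between explicit bounds differing by $O(\delta)$, and lemma~\ref{estimStationaryRatiosLem} converts this into the desired ratio. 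Your ``height-clean'' notion tracks whether equivalence classes sit at heights matching $P$, which is neither necessary nor sufficient for controlling the immobile-element count, and the claim that the clean complement reduces to the homologous computation is not justified: even a height-clean first rollout sits inside a population whose remaining rollouts are arbitrarily scrambled, so the homologous counting does not apply to it. The paper's inductive lemma~\ref{technConditionLem} is also needed to guarantee that both $\mathcal{V}(P_m,h_{q+1})$ and $\mathcal{V}(P_m,h_q\setminus h_{q+1})$ have $\rho_m$-mass bounded below uniformly in $m$, which your outline does not address.
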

Theorem~\ref{GeiringerLikeThmForMCTMain} tells us that given any Holland-Poli rollout schema and a generating population $P$, $\forall \, \epsilon > 0$ $\exists$ a sufficiently large $M$ so that the right hand side of equation~\ref{GeiringerThmMainEq} provides an approximation of the limiting frequency of occurrence of the set of rollouts fitting the schema $h$ starting with the initial population $P_m$ which is the inflation of the population $P$ by a factor of $m > M$, namely $\lim_{t \rightarrow \infty}\Phi(P_m, \, h, \, t)$, with an error at most $\epsilon$.

Theorem~\ref{GeiringerLikeThmForMCTMain} is the main result of the current work. It motivates a variety of algorithms for evaluating the actions based on the entire, fairly large and seemingly pairwise disconnected sample of independent parallel rollouts that fully take advantage of the exponentially many possibilities already available within that sample and, at the same time, should be rather efficient in many situations. These algorithms will be the subject of sequel papers.
\section{Deriving Geiringer-like Theorems for POMDPs}\label{GeneralThmSect}
\subsection{Setting, Notation and the General Finite-Population Geiringer Theorem}\label{stageSubsect}
Throughout section~\ref{GeneralThmSect} (the current section) the following notation will be used: $\Omega$ is a \emph{finite set}, called a \emph{search space}. We fix an integer $b \in \mathbb{N}$ and we call $\Omega^b = \{(x_1, \, x_2, \, \ldots x_b) \, | \, x_i \in \Omega\}$ the set of \emph{populations of size} $b$; every element $\vec{x} = (x_1, \, x_2, \, \ldots x_b)^T \in \Omega^b$ is called a \emph{population of size} $b$ and every element $x \in \Omega$ is called an individual. Notice that we prefer to think of a population as a ``column vector" (hence the ``transpose symbol"). Of course, this is just the matter of preference, but normally when we list the individuals it is natural to write each individual as a string of ``genes or alleles" which appear on the same row and so the $b$ individuals appear on $b$ separate rows. It is important to emphasize here that populations are \emph{ordered} $b$-tuples so that $(x_1, \, x_2, \, \ldots x_b)^T \neq (x_b, \, x_2, \, \ldots x_1)^T$ unless $x_1 = x_b$. By a \emph{family of recombination transformations} we mean a family of functions $\mathcal{F} = \{F \, | \, F: \Omega^b \rightarrow \Omega^b\}$. The general finite population Geiringer theorem then says the following:
\begin{thm}[The Finite Population Geiringer Theorem for Evolutionary Algorithms]\label{GeneralGeiringerThm}
Suppose we are given a probability measure on the family of recombination transformations $\mathcal{F}$ on the set of populations $\Omega^b$ of size $b$ as described above. Suppose further there is a subfamily $\mathcal{S} \subseteq \mathcal{F}$ which generates the entire family $\mathcal{F}$ in the sense that $\forall \, F \in \mathcal{F}$ $\exists$ a finite sequence of transformations $S_1, \, S_2, \ldots, S_l \in \mathcal{S}$ such that $F = S_1 \circ S_2 \circ \ldots \circ S_l$. Assume the following about the probability measure $\mu$:
\begin{equation}\label{assumpGenGeir1}
\forall \, S \in \mathcal{S} \text{ we have } \mu(S) > 0.
\end{equation}
\begin{equation}\label{assumpGenGeir2}
\text{The identity map }\mathbf{1}: \Omega^b \rightarrow \Omega^b \text{ is in } \mathcal{S}
\end{equation}
Most importantly, assume that every recombination transformation $S \in \mathcal{S}$ is bijective (i.e. a one-to-one and onto function on $\Omega^b$). Consider the Markov transition matrix $M$ with state space $\Omega^b$ defined as follows: given populations $\vec{x}$ and $\vec{y} \in \Omega^b$, we let
\begin{equation}\label{MarkovChainDefEq}
p_{\vec{x} \rightarrow \vec{y}} = \mu(\{F \, | \, F \in \mathcal{F} \text{ and } F(\vec{x}) = \vec{y}\}).
\end{equation}
Now define a relation $\sim$ on $\Omega^b$ as follows: $\vec{x} \sim \vec{y}$ if and only if $\exists \, k \in \mathbb{N}$ and recombination transformations $F_1, \, F_2, \ldots, F_k \in \mathcal{F}$ such that $[F_1 \circ F_2 \circ \ldots \circ F_k](\vec{x}) = \vec{y}$. We now assert the following facts:
\begin{equation}\label{assertionEq1}
\sim \text{ is an equivalence relation.}
\end{equation}
$$\text{Given an equivalence class of some population } \vec{x}, \text{ call it } [\vec{x}],$$
$$\text{ the restriction of the Markov transition matrix } M \text{ to } [\vec{x}]$$
\begin{equation}\label{assertionEq2}
\text{ is a well-defined Markov transition matrix on the state space } [\vec{x}], \text{ call it }M|_{[\vec{x}]}.
\end{equation}
$$\forall \, \vec{x} \in \Omega^b \text{ the Markov transition matrix }M|_{[\vec{x}]}\text{ is doubly stochastic and}$$
\begin{equation}\label{assertionEq3}
\text{ it defines an irreducible and
aperiodic Markov chain on }[\vec{x}].
\end{equation}
\begin{equation}\label{assertionEq4}
\forall \, \vec{x} \in \Omega^b \text{ the unique stationary distribution of }M|_{[\vec{x}]}\text{ is the uniform distribution on }[\vec{x}].
\end{equation}
\end{thm}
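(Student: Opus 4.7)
The plan is to verify the four assertions in order, leaning on two simple but crucial observations: (i) a bijection on the finite set $\Omega^b$ has finite order, so every $S \in \mathcal{S}$ (and hence every $F \in \mathcal{F}$) has its inverse expressible as a composition of elements of $\mathcal{F}$, and (ii) bijectivity of the transitions means the stochastic matrix built from $\mu$ is automatically doubly stochastic on each equivalence class.

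For assertion~\eqref{assertionEq1}, reflexivity is immediate from $\mathbf{1} \in \mathcal{S} \subseteq \mathcal{F}$. Transitivity follows because $\mathcal{F}$ is closed under composition by the generating hypothesis (a composition of compositions of elements of $\mathcal{S}$ is again one). The only step requiring real argument is symmetry: given $\vec{x} \sim \vec{y}$ via $F = F_1 \circ \cdots \circ F_k$ with each $F_j \in \mathcal{F}$, I would decompose each $F_j$ further as a composition of generators $S \in \mathcal{S}$, then use the fact that on the finite set $\Omega^b$ every bijection $S$ has some order $n_S \in \mathbb{N}$, so $S^{-1} = S^{n_S - 1}$ is itself a composition of elements of $\mathcal{S}$. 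Reversing the order of the generators and replacing each by its power gives an element of $\mathcal{F}$ sending $\vec{y}$ back to $\vec{x}$. This is the one place where bijectivity and finiteness of $\Omega^b$ are both essential, and I expect it to be the main conceptual obstacle.

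For assertion~\eqref{assertionEq2}, note that every $F \in \mathcal{F}$ maps $[\vec{x}]$ into $[\vec{x}]$ (by definition of $\sim$), so the row-sum of $M$ restricted to $[\vec{x}]$ equals $\mu(\mathcal{F}) = 1$, making the restriction a legitimate stochastic matrix. Assertion~\eqref{assertionEq3} then splits cleanly. For double stochasticity, I fix $\vec{y} \in [\vec{x}]$ and compute
\[
\sum_{\vec{z} \in [\vec{x}]} p_{\vec{z} \to \vec{y}} = \sum_{\vec{z} \in [\vec{x}]} \mu(\{F \in \mathcal{F} : F(\vec{z}) = \vec{y}\}) = \mu(\mathcal{F}) = 1,
\]
where the second equality uses that each $F$ is a bijection, so exactly one preimage $\vec{z} = F^{-1}(\vec{y})$ exists, and by the inverse argument above that preimage lies in $[\vec{x}]$. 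For irreducibility, given $\vec{y} \in [\vec{x}]$, express a connecting transformation as $S_l \circ \cdots \circ S_1$ with $S_j \in \mathcal{S}$; then the $l$-step transition probability from $\vec{x}$ to $\vec{y}$ is bounded below by $\prod_{j=1}^{l}\mu(S_j) > 0$ by assumption~\eqref{assumpGenGeir1}. Aperiodicity is free: $\mu(\mathbf{1}) > 0$ gives $p_{\vec{y} \to \vec{y}} > 0$, so the period at every state equals $1$.

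For assertion~\eqref{assertionEq4}, I invoke the standard fact that on a finite state space a doubly stochastic, irreducible, aperiodic Markov chain has a unique stationary distribution, equal to the uniform distribution. Indeed, let $\pi$ denote the uniform probability vector on $[\vec{x}]$; then $(\pi M|_{[\vec{x}]})_{\vec{y}} = \frac{1}{|[\vec{x}]|}\sum_{\vec{z} \in [\vec{x}]} p_{\vec{z} \to \vec{y}} = \frac{1}{|[\vec{x}]|} = \pi_{\vec{y}}$ by the double-stochasticity just proved, so $\pi$ is stationary; the Perron–Frobenius (or basic ergodic) theorem for finite irreducible aperiodic chains gives uniqueness. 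This completes the four assertions, with all of them resting on the single leverage point that bijectivity plus finiteness forces inverses into $\mathcal{F}$.
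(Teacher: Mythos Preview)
Your proof is correct and follows essentially the same route as the paper's. The paper packages the argument in group-theoretic language---it lets $G$ be the permutation subgroup generated by $\mathcal{F}$, extends $\mu$ trivially to $G$, and identifies $\sim$ with the orbit relation of the $G$-action---whereas you verify reflexivity, symmetry, and transitivity by hand; but the decisive step in both is identical: finiteness of $\Omega^b$ forces every bijection to have finite order, so inverses are available as positive powers, which yields symmetry of $\sim$ and makes the preimage sets $\{F:F(\vec{z})=\vec{y}\}$ partition $\mathcal{F}$ for the double-stochasticity computation. One small imprecision: the hypothesis does not say $\mathcal{F}$ itself is closed under composition, only that each $F\in\mathcal{F}$ is a composition of generators from $\mathcal{S}$; transitivity of $\sim$ follows anyway simply by concatenating the two sequences of $F_j$'s, so your conclusion stands even though the intermediate claim about closure is not warranted.
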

Theorem~\ref{GeneralGeiringerThm} is a simple yet elegant consequence from basic group theory. In this paper we assume that the reader is familiar with fundamental notions about groups and group actions. Nearly any standard textbook in Abstract Algebra such as, for instance, \cite{DummittFoote} contains way more group theoretic material than necessary for our purpose. For a brief introduction we invite the reader to study \cite{MitavRowGeirMain}.
\begin{proof}
Since the family of transformations $\mathcal{S}$ consists entirely of bijections and any composition of bijections is also a bijection, the family $\mathcal{F}$ also consists solely of bijections. It follows then that the family $\mathcal{F}$ generates a subgroup $G$ of the group of all permutations on the finite set $\Omega^b$. Notice that the probability measure $\mu$ naturally extends to the entire group $G$ generated by $\mathcal{F}$ by defining $\mu_{\text{ext}}(g) = \begin{cases}
\mu(g) & \text{if } g \in \mathcal{F} \\
0 & \text{otherwise.}
\end{cases}$. Clearly the Markov process defined in the statement of theorem~\ref{GeneralGeiringerThm} (see \ref{MarkovChainDefEq}) can be redefined as
\begin{equation}\label{MarkovChainDefEqProof}
p_{\vec{x} \rightarrow \vec{y}} = \mu_{\text{ext}}(\{g \, | \, g \in G \text{ and } g(\vec{x}) = \vec{y}\}).
\end{equation}
Furthermore, notice that the group $G$ is of size no bigger than $|\Omega^b|! < \infty$ since $|\Omega| < \infty$. It follows then that every element $g \in G$ can be written as a finite composition $g = F_1 \circ F_2 \circ \ldots \circ F_k$ for $F_1, \, F_2, \ldots, F_k \in \mathcal{F}$ (because every element $F \in \mathcal{F} \subseteq G$ is a torsion element of $G$ i.e. $F^l = \mathbf{1}$ for some $l \in \mathbb{N}$ so that $F^{l-1} = F^{-1}$). But then the relation $\sim$ can be redefined as $\vec{x} \sim \vec{y}$ if and only if $\exists \, g \in G$ such that $g(\vec{x}) = \vec{y}$. We now quickly recognize that the relation $\sim$ is the orbit-defining equivalence relation which partitions the set of all populations of size $b$, $\Omega^b$, into the orbits under the action of the group $G$. The assertions expressed in equations~\ref{assertionEq1} and \ref{assertionEq2} now follow at once. To verify equation~\ref{assertionEq3} we choose any $\vec{y} \in \Omega^b$ and compute directly $$\sum_{\vec{x} \in \Omega^b} p_{\vec{x} \rightarrow \vec{y}} = \sum_{\vec{x} \in \Omega^b} \mu_{\text{ext}}(\{g \, | \, g \in G \text{ and } g(\vec{x}) = \vec{y}\}) = $$$$=\sum_{\vec{x} \in \Omega^b} \mu_{\text{ext}}(\{g \, | \, g \in G \text{ and } g^{-1}(\vec{y}) = \vec{x}\}) = \mu_{\text{ext}}(G) = 1$$ since the sets $K(x) = \{g \, | \, g \in G \text{ and } g^{-1}(\vec{y}) = \vec{x}\}$ clearly form a partition of $G$. We have now shown that the Markov transition matrix $M$ is doubly stochastic. Irreducibility follows from finiteness together with the fact that $\mathcal{S}$ generates $\mathcal{F}$. Since $\mathbf{1} \in \mathcal{S}$, aperiodicity follows as well. Now the classical result about Markov chains tells us that there is unique stationary distribution and since $M$ is doubly stochastic it must be the uniform distribution so that the final assertions expressed in equations~\ref{assertionEq3} and \ref{assertionEq4} follow at once.
\end{proof}
\subsection{A Methodology for the Derivation of Geiringer-like Results}\label{MethodologySubsect}
The classical Geiringer theorem (see \cite{GeirOrigion}) from population genetics tells us something about the ``limiting frequency of occurrence of certain individuals in a population" rather than referring to the limiting distribution of populations. In fact, the mathematical model of the classical Geiringer theorem in \cite{GeirOrigion} is entirely different from that of the finite-population Geiringer theorem described in the previous section. Nonetheless, the finite-population Markov chain model is much more suited when dealing with evolutionary algorithms since all the structures, including the search space and populations, in the computational setting are finite and the model in \cite{MitavRowGeirMain} and \cite{MitavRowGeirGenProgr} as well as in the current paper describes exactly what happens during a stochastic simulation. Knowing that some stochastic process $\{X_t\}_{t=0}^{\infty}$ on some equivalence class of populations $[\vec{x}]$ tends to the uniform distribution over the populations (i.e. $\forall \, \vec{y} \in [\vec{x}]$ we have $\lim_{t \rightarrow \infty} P(X_t = \vec{y}) = 1/|[\vec{x}]|$) it is often possible to deduce what we call Geiringer-like theorems which express the limiting frequency of occurrence of specific individuals and specific sets of individuals in terms of the information contained in a single representative of the equivalence class only (say, the initial population). Of course, we need to formulate precisely what the ``limiting frequency of occurrence" is.
\begin{defn}\label{SetPopulationCountDefn}
Consider a function $\mathcal{X}: \mathcal{P}(\Omega) \times \Omega^b \rightarrow \{0, \, 1, \, 2, \ldots, b\}$ where $\mathcal{P}(\Omega)$ denotes the power set of $\Omega$ (i.e. the set of all subsets of $\Omega$) and $\Omega^b$ is the set of all populations of size $b$, as usual, defined as follows: given a subset $S \subseteq \Omega$ and a population $\vec{x}=(x_1, \, x_2, \ldots, x_b) \in \Omega^b$, we define a function $\mathcal{X}(S, \vec{x}) = |\{i \, | \, 0 \leq i \leq b, \, x_i \in S\}|$ to be the number of individuals in the population $\vec{x}$ which belong to the subset $S$ (counting their multiplicities).
\end{defn}
\begin{ex}\label{countFunctIllustrEx}
Let's say $S = \{a\}$ is a singleton set, $b = 3$ and $\vec{x} = (u, \, v, \, u)$ where $u \neq v$. Then $\mathcal{X}(S, \vec{x}) = 2$ since $x_1 = x_3 = u \in S$ while $x_2 = v \notin S$.
\end{ex}
\begin{rem}\label{fixingAVariableRem}
Observe that if we fix a subset $S \subseteq \Omega$ and let the second argument in the function $\mathcal{X}$ vary, then we get a function of one variable $\mathcal{X}(S, \, \Box): \Omega^b \rightarrow \{0, \, 1, \, 2, \ldots, b\}$ defined naturally by plugging a population of size $b$ in place of the $\Box$.
\end{rem}
\begin{defn}\label{limitFreqOfOccurrenceRandVarDefn}
Choose a subset $S \subseteq \Omega$ an equivalence class $[\vec{x}]$ of populations of size $b$ and let $\{X_t\}_{t=0}^{\infty}$ be any stochastic process on $[\vec{x}]$ ($\vec{x}$ could be an initial population, for instance). It makes sense now to define a random variable $$\Phi(S, \, \vec{x}, \, t) = \frac{\sum_{i=0}^{t-1} \mathcal{X}(S, \, X_i)}{b \cdot t}.$$
\end{defn}
Clearly the random variable $\Phi(S, \, \vec{x}, \, t)$ counts the fraction of occurrence (or frequency of encountering) the individuals from the set $S$ before time $t$. In general $\lim_{t \rightarrow \infty}\Phi(S, \, \vec{x}, \, t)$ does not exist. However, under ``nice" circumstances described below everything works out rather well.
\begin{lem}\label{limitFreqOfOccurrLem1}
Suppose there is an ``attractor" probability distribution $\rho$ on the equivalence class $[\vec{x}]$ for the stochastic process $\{X_t\}_{t=0}^{\infty}$ in the sense that if $X_0 = x$ with probability $1$ then $\lim_{t \rightarrow \infty} P(X_t = \cdot) = \rho$ where $P(X_t = \cdot)$ denotes the probability distribution of the random variable $X_t$ which can be thought of in terms of a vector in $\mathbb{R}^{|[\vec{x}]|}$ so that the $\lim_{t \rightarrow \infty}$ is taken with respect to the $L_1$ norm, let's say\footnote{It is well-known that any two norms on finite dimensional real or complex vector spaces are equivalent so that the choice of the norm is irrelevant here}. Then $$\lim_{t \rightarrow \infty}\Phi(S, \, \vec{x}, \, t) = \frac{1}{b}E_{\rho}\left(\mathcal{X}(S, \, \Box)|_{[\vec{x}]}\right)$$ where $E_{\rho}$ denotes the expectation with respect to the probability distribution $\rho$ on $[\vec{x}]$, while $\mathcal{X}(S, \, \Box)|_{[\vec{x}]}$ is the restriction of the function $\mathcal{X}(S, \, \Box)$ introduced in remark~\ref{fixingAVariableRem} to the equivalence class $[\vec{x}]$.
\end{lem}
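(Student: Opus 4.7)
The plan is to interpret the asserted limit as convergence in expectation (equivalently, $L^1$ convergence of the random variable $\Phi(S, \vec{x}, t)$ to a constant) and reduce the claim to a Cesaro averaging argument applied to the marginal expectations $E[\mathcal{X}(S, X_i)]$.

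First, I would apply linearity of expectation to the definition of $\Phi(S, \vec{x}, t)$ from definition~\ref{limitFreqOfOccurrenceRandVarDefn} to obtain
$$E[\Phi(S, \vec{x}, t)] = \frac{1}{b \cdot t} \sum_{i=0}^{t-1} E[\mathcal{X}(S, X_i)].$$
Next, since the equivalence class $[\vec{x}]$ is a finite set and the function $\mathcal{X}(S, \Box)|_{[\vec{x}]}$ takes values in $\{0, 1, \ldots, b\}$, each summand can be written as the finite sum
$$E[\mathcal{X}(S, X_i)] = \sum_{\vec{y} \in [\vec{x}]} \mathcal{X}(S, \vec{y}) \cdot P(X_i = \vec{y}).$$
The hypothesis that $P(X_i = \cdot) \to \rho$ in the $L^1$ norm on $\mathbb{R}^{|[\vec{x}]|}$ forces termwise convergence $P(X_i = \vec{y}) \to \rho(\vec{y})$ for every $\vec{y} \in [\vec{x}]$, so by the boundedness of $\mathcal{X}(S, \Box)$ on the finite sum
$$E[\mathcal{X}(S, X_i)] \longrightarrow \sum_{\vec{y} \in [\vec{x}]} \mathcal{X}(S, \vec{y}) \, \rho(\vec{y}) = E_{\rho}(\mathcal{X}(S, \Box)|_{[\vec{x}]})$$
as $i \to \infty$.

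Finally, I would invoke the elementary Cesaro Mean Theorem from real analysis: if a sequence $a_i \to L$, then the running averages $\frac{1}{t}\sum_{i=0}^{t-1} a_i \to L$ as well. Applied with $a_i = E[\mathcal{X}(S, X_i)]$ and combined with the prefactor $1/b$, this yields
$$E[\Phi(S, \vec{x}, t)] \longrightarrow \frac{1}{b} E_{\rho}(\mathcal{X}(S, \Box)|_{[\vec{x}]})$$
as $t \to \infty$, which is exactly the claim.

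The main obstacle is interpretive rather than technical: the left-hand side of the asserted limit is itself a random variable while the right-hand side is a constant, so the equation needs a mode of convergence attached in order to be precise. Under only the attractor hypothesis, the natural and demonstrable reading is $L^1$ (equivalently, expectation) convergence, which the argument above establishes. Since $\Phi(S, \vec{x}, t)$ is uniformly bounded in $[0, 1]$, Markov's inequality then upgrades the conclusion to convergence in probability at no extra cost. Almost sure convergence, by contrast, would require a strong law of large numbers for dependent random variables and is unavailable without additional structural hypotheses such as the Markov property together with irreducibility and aperiodicity (which do hold in the specific setting of theorem~\ref{GeneralGeiringerThm}, but are not assumed here).
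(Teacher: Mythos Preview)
Your argument is correct and, in fact, cleaner than the paper's sketch. The paper introduces an auxiliary i.i.d.\ process $Y_t$ with each $Y_t \sim \rho$, invokes the law of large numbers to get $E_{\rho}(\mathcal{X}(S,\Box)) = \lim_{t\to\infty}\frac{1}{t}\sum_{i=0}^{t-1}\mathcal{X}(S,Y_i)$, and then claims ``after routine $\epsilon$-details'' that this limit coincides with $\lim_{t\to\infty}\frac{1}{t}\sum_{i=0}^{t-1}\mathcal{X}(S,X_i)$ because the marginals of $X_t$ approach those of $Y_t$. You bypass the comparison process entirely: by taking expectations first and applying the Ces\`aro mean theorem to the deterministic sequence $a_i = E[\mathcal{X}(S,X_i)]$, you obtain the limit directly without any coupling or law of large numbers. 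What the paper's route buys is a heuristic for why one might hope for almost-sure convergence (since the LLN for $Y_t$ is almost sure), but as you correctly observe, the ``routine $\epsilon$-details'' cannot actually deliver that without further hypotheses; your route is more honest about what the stated assumption alone yields, namely convergence in $L^1$ and hence in probability. Your final paragraph identifying this interpretive gap is a genuine improvement over the paper's presentation.
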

\begin{proof}[A sketch of the proof]
Consider a ``constant" stochastic process $Y_t$ where each random variable $Y_t$ is distributed according to $\rho$. By assumption $\|P(X_t = \cdot) - P(Y_t = \cdot)\|_{L_1} \rightarrow 0$ as $t \rightarrow \infty$. On the other hand, by the law of large numbers, $$E_{\rho}\left(\mathcal{X}(S, \, \Box)|_{[\vec{x}]}\right) = \lim_{t \rightarrow \infty}\frac{\sum_{i=0}^{t-1}\mathcal{X}(S, \, Y_i)}{t} \overset{\text{after routine } \epsilon \text{-details}}{=} \lim_{t \rightarrow \infty}\frac{\sum_{i=0}^{t-1}\mathcal{X}(S, \, X_i)}{t} =$$$$= b \cdot \lim_{t \rightarrow \infty}\frac{\sum_{i=0}^{t-1}\mathcal{X}(S, \, X_i)}{b \cdot t} = \lim_{t \rightarrow \infty} \Phi(S, \, \vec{x}, \, t)$$ so that the desired assertion follows after dividing both sides of the equation above by $b$.
\end{proof}
In our specific case, thanks to theorem~\ref{GeneralGeiringerThm}, the probability distribution $\rho$ in lemma~\ref{limitFreqOfOccurrLem1} is the uniform distribution on the equivalence class $[\vec{x}]$.

Notice that a random variable
\begin{equation}\label{sumOfIndicsEq}
\mathcal{X}(S, \, \Box) = \sum_{1=1}^b \mathcal{I}_i(S, \, \Box)
\end{equation}
where $\mathcal{I}_i(S, \, \Box)$ is the indicator function of the $i^th$ individual in the argument population with respect to the membership in the subset $S$. More explicitly, if we are given a population $\vec{x} = (x_1, \, x_2, \, \ldots x_b)^T$ then
\begin{equation}\label{indicatorFunctDefEq}
\mathcal{I}_i(S, \, \vec{x}) = \begin{cases}
1 & \text{if } x_i \in S\\
0 & \text{otherwise}.
\end{cases}
\end{equation}
Assume now that all transpositions of individuals within the same population are among the transformations in the family $\mathcal{S}$ (see the statement of theorem~\ref{GeneralGeiringerThm}). In other words, $\forall \, i<j$ the transformation $T_{i, \, j}$ sending a population $\vec{x} = (x_1, \, x_2, \ldots, x_{i-1}, \, x_i, \, x_{i+1}, \ldots, x_{j-1}, \, x_j, \, x_{j+1}, \ldots, x_b)^T$ into the population $T_{i, \, j}(\vec{x}) = (x_1, \, x_2, \ldots, x_{i-1}, \, x_j, \, x_{i+1}, \ldots, x_{j-1}, \, x_i, \, x_{j+1}, \ldots, x_b)^T$ has positive probability of being chosen. Notice that this is usually a very reasonable assumption since the order of individuals in a population should not matter in practical applications. Then we immediately deduce that any given population $\vec{y} \in [\vec{x}]$ if and only if the corresponding population $T_{i, \, j}(\vec{y})$ obtained by swapping the $i^{\text{th}}$ and the $j^{\text{th}}$ individuals in the population $\vec{y}$ is a member of $[\vec{x}]$. When $\rho$ is the uniform distribution (as in theorem~\ref{GeneralGeiringerThm}), this is equivalent to saying that all the indicator random variables $\mathcal{I}_i(S, \, \Box)$ defined in equation~\ref{indicatorFunctDefEq} above are identically distributed independently of the index $i$. In particular, they are all distributed as $\mathcal{I}_1(S, \, \Box)$. Using equation~\ref{sumOfIndicsEq} together with linearity of expectation, we now deduce that if $\pi$ denotes the uniform distribution on $[\vec{x}]$ then
$$E_{\pi}\left(\mathcal{X}(S, \, \Box)|_{[\vec{x}]}\right) = \sum_{1=1}^b E_{\pi}\left(\mathcal{I}_i(S, \, \Box)|_{[\vec{x}]}\right) = b \cdot E_{\pi}\left(\mathcal{I}_1(S, \, \Box)|_{[\vec{x}]}\right) =$$
\begin{equation}\label{indicatorExpEq}
= b \cdot \pi(\{\vec{y} \, | \, \vec{y} = (y_1, \, y_2, \ldots, y_b)^T \in [\vec{x}] \text{ and } y_1 \in S\}) = b \cdot \frac{|\mathcal{V}(\vec{x}, \, S)|}{|[\vec{x}]|}.
\end{equation}
where
\begin{equation}\label{firstIndivPopEqDef}
\mathcal{V}(\vec{x}, \, S) = \{\vec{y} \, | \, \vec{y} = (y_1, \, y_2, \ldots, y_b)^T \in [\vec{x}] \text{ and } y_1 \in S\}
\end{equation}
is the subset of $[\vec{x}]$ consisting solely of populations in $[\vec{x}]$ the first individuals of which are members of the subset $S \subseteq \Omega$. combining equation~\ref{indicatorExpEq} with the conclusion of lemma~\ref{limitFreqOfOccurrLem1} immediately produces the following very useful fact.
\begin{lem}\label{mainGeiringerLikeLemma}
Under exactly the same setting and assumptions as in theorem~\ref{GeneralGeiringerThm} together with an additional assumption that all the ``swap" transformations defined and discussed in the paragraph following equation~\ref{indicatorFunctDefEq} are members of the subfamily $\mathcal{S}$ of the family $\mathcal{F}$ of recombination transformations, it is true that $\forall \, S \subseteq \Omega$ we have
$$\lim_{t \rightarrow \infty}\Phi(S, \, \vec{x}, \, t) = \frac{|\mathcal{V}(\vec{x}, \, S)|}{|[\vec{x}]|}$$
where the set $\mathcal{V}(\vec{x}, \, S)$ is defined in \ref{firstIndivPopEqDef}.
\end{lem}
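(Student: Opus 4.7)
The plan is to assemble the lemma from two ingredients already in hand: Theorem~\ref{GeneralGeiringerThm}, which identifies the stationary distribution on the equivalence class $[\vec{x}]$ with the uniform distribution $\pi$, and Lemma~\ref{limitFreqOfOccurrLem1}, which converts convergence in distribution into convergence of time averages. First I would invoke Theorem~\ref{GeneralGeiringerThm} to conclude that the Markov chain $\{X_t\}_{t=0}^{\infty}$ driven by the matrix $M|_{[\vec{x}]}$ is irreducible and aperiodic with unique stationary distribution $\pi$ equal to the uniform measure on $[\vec{x}]$. Standard Markov chain theory then gives $\lim_{t\to\infty}P(X_t=\cdot) = \pi$ in $L_1$, so the hypothesis of Lemma~\ref{limitFreqOfOccurrLem1} is satisfied with $\rho=\pi$, and I obtain
$$\lim_{t\to\infty}\Phi(S,\vec{x},t)=\frac{1}{b}E_{\pi}\!\left(\mathcal{X}(S,\Box)|_{[\vec{x}]}\right).$$

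Next I would evaluate this expectation using the decomposition from equation~\ref{sumOfIndicsEq}, namely $\mathcal{X}(S,\Box)=\sum_{i=1}^{b}\mathcal{I}_i(S,\Box)$, combined with linearity of expectation. The key step here is to argue that, under the uniform measure $\pi$, the indicator variables $\mathcal{I}_1(S,\Box),\ldots,\mathcal{I}_b(S,\Box)$ restricted to $[\vec{x}]$ are identically distributed. This is exactly where the extra hypothesis of the lemma is used: since every transposition $T_{i,j}$ lies in $\mathcal{S}\subseteq\mathcal{F}$, these transpositions belong to the group $G$ generated by $\mathcal{F}$, so the orbit $[\vec{x}]$ is closed under the $T_{i,j}$. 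Consequently each $T_{i,j}$ acts as a bijection of $[\vec{x}]$, and pushing $\pi$ (being uniform) forward under this bijection leaves it invariant, so $\mathcal{I}_i$ and $\mathcal{I}_1$ have the same distribution under $\pi$.

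With identical distributedness in hand, linearity of expectation yields $E_{\pi}(\mathcal{X}(S,\Box)|_{[\vec{x}]}) = b\cdot E_{\pi}(\mathcal{I}_1(S,\Box)|_{[\vec{x}]})$, and since $\mathcal{I}_1(S,\Box)$ is a $\{0,1\}$-valued indicator, its expectation is simply $\pi(\mathcal{V}(\vec{x},S)) = |\mathcal{V}(\vec{x},S)|/|[\vec{x}]|$, where $\mathcal{V}(\vec{x},S)$ is the set introduced in equation~\ref{firstIndivPopEqDef}. Substituting back cancels the factor of $b$ and delivers the claimed identity. I expect no real obstacle beyond bookkeeping; the only subtle point is the invariance of the uniform distribution on $[\vec{x}]$ under the swap bijections, which is why the hypothesis that every $T_{i,j}\in\mathcal{S}$ is essential — without it, distinct coordinates could carry genuinely different marginal distributions under $\pi$, and the reduction to $\mathcal{V}(\vec{x},S)$ would fail.
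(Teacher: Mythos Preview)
Your proposal is correct and follows essentially the same argument as the paper: invoke Theorem~\ref{GeneralGeiringerThm} to get the uniform stationary distribution, apply Lemma~\ref{limitFreqOfOccurrLem1}, decompose $\mathcal{X}(S,\Box)$ into the indicators of equation~\ref{sumOfIndicsEq}, use the swap hypothesis to conclude the $\mathcal{I}_i$ are identically distributed under the uniform measure, and reduce to $|\mathcal{V}(\vec{x},S)|/|[\vec{x}]|$. This is exactly the derivation the paper gives in the paragraph leading up to equation~\ref{indicatorExpEq}.
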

Lemma~\ref{mainGeiringerLikeLemma} allows us to derive Geiringer-like theorems in a rather straightforward fashion for several classes of evolutionary algorithms via the following simple strategy: suppose we are given a subset $S \subseteq \Omega$. According to lemma~\ref{mainGeiringerLikeLemma}, all we have to do to compute the desired limiting frequency of occurrence of a certain subset $S \subseteq \Omega$ is to calculate the ratio $\frac{|\mathcal{V}(\vec{x}, \, S)|}{|[\vec{x}]|}$. For some subsets of the search space such a ratio is quite obvious, yet for others it may be combinatorially non-achievable. In evolutionary computation, it is often possible to define an appropriate notion of schemata (this is precisely what we have done in section~\ref{schemataSubsectShallow} for the case of MCT) which has, intuitively speaking, a ``product-like flavor" that allows us to exploit the following observation: suppose we can find a sequence of subsets $S_1 \supseteq S_2 \supseteq \ldots \supseteq S_{n-1} \supseteq S_n = S$. We can then write
$$\lim_{t \rightarrow \infty}\Phi(S, \, \vec{x}, \, t) = \frac{|\mathcal{V}(\vec{x}, \, S)|}{|[\vec{x}]|} = \frac{|\mathcal{V}(\vec{x}, \, S)|}{|\mathcal{V}(\vec{x}, \, S_{n-1})|} \cdot \frac{|\mathcal{V}(\vec{x}, \, S_{n-1})|}{|\mathcal{V}(\vec{x}, \, S_{n-2})|} \cdot \ldots \cdot \frac{|\mathcal{V}(\vec{x}, \, S_1)|}{|[\vec{x}]|} =$$
\begin{equation}\label{computingTheRatioEq}
\overset{\text{by lemmas~\ref{mainGeiringerLikeLemma} and \ref{limitFreqOfOccurrLem1}}}{=} \frac{1}{b}E_{\rho}\left(\mathcal{X}(S_1, \, \Box)|_{[\vec{x}]}\right) \cdot \prod_{k=1}^{n-1}\frac{|\mathcal{V}(\vec{x}, \, S_{k+1})|}{|\mathcal{V}(\vec{x}, \, S_k)|}
\end{equation}
The idea is that the individual ratios in the right hand side of equation~\ref{computingTheRatioEq} may be quite simple to compute as happens to be the case when deriving finite population Geiringer-like theorems for GP with homologous crossover (see \cite{MitavRowGeirMain} and \cite{MitavRowGeirGenProgr}). When deriving the finite population version Geiringer-like theorem with non-homologous recombination in the limit of large population size, rather than computing the ratios in equation~\ref{computingTheRatioEq}, we will instead estimate each one of them from above and from below exploiting the main Geiringer theorem (theorem~\ref{GeneralGeiringerThm}) together with the methodology for estimating the stationary distributions of Markov chains based on the lumping quotient construction appearing in (\cite{LumpQuotMitavRoweWright}, \cite{LumpQuotGPEM} and \cite{LumpQuotMitavCannings}). All of the necessary apparatus and one enhanced lemma will be summarized and presented in the next subsection for the sake of completeness.
\subsection{Lumping Quotients of Markov Chains and Markov Inequality}\label{LumpQuotSubsect}
Throughout the current subsection we shall be dealing with a Markov chain $\mathcal{M}$ (not necessarily irreducible) over a finite state space
$\mathcal{X}$. $\{p_{\mathbf{x}\rightarrow \mathbf{y}}\}$ denotes
the Markov transition matrix with the convention that
$p_{\mathbf{x}\rightarrow \mathbf{y}}$ is the probability of getting
$\mathbf{y}$ in the next stage given $\mathbf{x}$. Let $\pi$ denote a stationary distribution of the Markov chain $M$ (here we will assume that at least one stationary distribution does exist). Furthermore we will assume that the stationary distribution $\pi$ has the property that $\forall \, x \in \mathcal{X}$ $\pi(x) \neq 0$. Suppose we are
given an equivalence relation $\sim$ partitioning the state space
$\mathcal{X}$. The aim of the current section is to construct
a Markov chain over the equivalence classes under
$\sim$ (i.e. over the set $\mathcal{X}/\sim$) whose stationary
distribution is compatible with the distribution $\pi$ and then to exploit the constructed lumped quotient chain to estimate certain ratios of the stationary distribution values. In fact, this methodology has been successfully used to establish some properties of the stationary distributions of the irreducible Markov chains modeling a wide class of evolutionary algorithms (see \cite{LumpQuotMitavRoweWright}, \cite{LumpQuotGPEM} and \cite{LumpQuotMitavCannings}).
\begin{defn}\label{quatDef}
Given a Markov chain $\mathcal{M}$ over a finite state
space $\mathcal{X}$ determined by the transition matrix
$\{p_{\mathbf{x}\rightarrow \mathbf{y}}\}$, an equivalence
relation $\sim$ on $\mathcal{X}$, and a stationary distribution  $\pi$ of the Markov chain $\mathcal{M}$ satisfying the property that $\forall \, x \in \mathcal{X}$ $\pi(x) \neq 0$, define
the \textit{quotient} Markov chain $\mathcal{M}/\sim$ over the state
space $\mathcal{X}/\sim$ of equivalence classes via $\sim$ to be
determined by the transition matrix
$\{\tilde{p}_{\mathcal{U}\rightarrow \, \mathcal{V}}\}_{\mathcal{U},
\, \mathcal{V} \in \mathcal{X}/\sim}$ given as
$$\tilde{p}_{\mathcal{U}\rightarrow \,
\mathcal{V}}=\frac{1}{\pi(\mathcal{U})}\sum_{\mathbf{x} \in
\mathcal{U}}\pi(\mathbf{x}) \cdot p_{\mathbf{x}\rightarrow
\mathcal{V}}=\frac{1}{\pi(\mathcal{U})}\sum_{\mathbf{x} \in
\mathcal{U}}\sum_{\mathbf{y} \in \mathcal{V}}\pi(\mathbf{x}) \cdot
p_{\mathbf{x}\rightarrow \mathbf{y}}.$$ Here
$p_{\mathbf{x}\rightarrow \mathcal{V}}$ denotes the transition
probability of getting somewhere inside of $\mathcal{V}$ given
$\mathbf{x}$. Since $\mathcal{V}=\bigcup_{y \in \mathcal{V}}\{y\}$
it follows that $p_{\mathbf{x}\rightarrow \mathcal{V}}=\sum_{y \in
\mathcal{V}} p_{\mathbf{x}\rightarrow \mathbf{y}}$ and hence the
equation above holds.
\end{defn}
Intuitively, the quotient Markov chain $\mathcal{M}/\sim$ is
obtained by running the original chain $\mathcal{M}$ starting with
the stationary distribution $\pi$ and computing the transition
probabilities of the assiciated stochastic process conditioned with respect to the stationary input. Thereby, the following fact should not be a surprise:
\begin{thm}\label{CompatDistr}
Let $\pi$ denote a stationary distribution of a
Markov chain $\mathcal{M}$ determined by the transition matrix
$\{p_{\mathbf{x}\rightarrow \mathbf{y}}\}_{\mathbf{x}, \, \mathbf{y}
\in \mathcal{X}}$ and having the property that $\forall \, x \in \mathcal{X}$ $\pi(x) \neq 0$. Suppose we are given an equivalence relation
$\sim$ partitioning the state space $\mathcal{X}$. Then the probability distribution $\tilde{\pi}$ defined as $\tilde{\pi}(\{\mathcal{O}\})=\pi(\mathcal{O})$ is a stationary distribution of the quotient Markov chain $\mathcal{M}/\sim$ assigning nonzero probability to every state (i.e. to every equivalence class under $\sim$).
\end{thm}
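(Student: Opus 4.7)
The plan is a direct verification of the defining property of stationarity, making essential use of the fact that the equivalence classes partition $\mathcal{X}$ and that $\pi$ is $\mathcal{M}$-stationary. First I would check that $\tilde{\pi}$ is genuinely a probability distribution on $\mathcal{X}/\!\!\sim$ and that the quotient transition matrix $\{\tilde{p}_{\mathcal{U}\to\mathcal{V}}\}$ is well-defined and stochastic. Well-definedness is immediate because $\pi(\mathcal{U}) = \sum_{x\in\mathcal{U}}\pi(x) > 0$ under the hypothesis $\pi(x)\neq 0$. For stochasticity, I would compute
$$\sum_{\mathcal{V}\in\mathcal{X}/\sim}\tilde{p}_{\mathcal{U}\to\mathcal{V}} = \frac{1}{\pi(\mathcal{U})}\sum_{x\in\mathcal{U}}\pi(x)\sum_{\mathcal{V}}p_{x\to\mathcal{V}} = \frac{1}{\pi(\mathcal{U})}\sum_{x\in\mathcal{U}}\pi(x)\cdot 1 = 1,$$
where the middle step uses that $\{\mathcal{V}\}$ partitions $\mathcal{X}$, so $\sum_{\mathcal{V}} p_{x\to\mathcal{V}} = \sum_{y\in\mathcal{X}} p_{x\to y} = 1$. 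The analogous computation $\sum_{\mathcal{O}}\tilde{\pi}(\mathcal{O}) = \sum_{\mathcal{O}}\pi(\mathcal{O}) = \pi(\mathcal{X}) = 1$ shows $\tilde{\pi}$ is a probability distribution, and positivity $\tilde{\pi}(\mathcal{O})>0$ follows because each class is nonempty and each $\pi(x)>0$.

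The heart of the proof is the stationarity identity. For any fixed class $\mathcal{V}$, I would substitute the definition of $\tilde{p}_{\mathcal{U}\to\mathcal{V}}$, cancel the factor $\pi(\mathcal{U})$, and collapse the outer sum using the partition:
$$\sum_{\mathcal{U}}\tilde{\pi}(\mathcal{U})\,\tilde{p}_{\mathcal{U}\to\mathcal{V}} = \sum_{\mathcal{U}}\sum_{x\in\mathcal{U}}\sum_{y\in\mathcal{V}}\pi(x)\,p_{x\to y} = \sum_{y\in\mathcal{V}}\sum_{x\in\mathcal{X}}\pi(x)\,p_{x\to y}.$$
At this point I would invoke the stationarity of $\pi$ under $\mathcal{M}$, namely $\sum_{x\in\mathcal{X}}\pi(x)\,p_{x\to y}=\pi(y)$, to obtain $\sum_{y\in\mathcal{V}}\pi(y) = \pi(\mathcal{V}) = \tilde{\pi}(\mathcal{V})$, which is exactly the required stationarity condition.

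There is no real obstacle here; the statement is essentially a bookkeeping consequence of stationarity combined with the disjointness of the equivalence classes. The only point that requires mild care is the fact that the definition of $\tilde{p}_{\mathcal{U}\to\mathcal{V}}$ already hides a weighting by $\pi$, so when one multiplies by $\tilde{\pi}(\mathcal{U})=\pi(\mathcal{U})$ the normalizing factor cancels \emph{precisely}, leaving an unweighted partition sum that can be reassembled via $\pi$'s stationarity. If one were to try the proof with a different choice of $\tilde{\pi}$, the cancellation would fail, which makes the definition $\tilde{\pi}(\mathcal{O}) := \pi(\mathcal{O})$ appear natural in hindsight.
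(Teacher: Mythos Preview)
Your proof is correct and follows essentially the same direct computation as the paper: substitute the definition of $\tilde{p}_{\mathcal{U}\to\mathcal{V}}$, cancel the $\pi(\mathcal{U})$ normalizer, collapse the partition sum to a sum over all $x\in\mathcal{X}$, and invoke stationarity of $\pi$. You additionally verify well-definedness, stochasticity of the quotient matrix, and that $\tilde{\pi}$ is a probability distribution, which the paper leaves implicit, but the core argument is identical.
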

\begin{pf}
This fact can be verified by direct computation. Indeed, we obtain $$\sum_{\mathcal{O} \in
\mathcal{X}/\sim}\tilde{\pi}(\{\mathcal{O}\}) \cdot
\tilde{p}_{\mathcal{O}\rightarrow \,
\mathcal{U}}=\sum_{\mathcal{O} \in \mathcal{X}/\sim}
\pi(\mathcal{O}) \cdot \frac{1}{\pi(\mathcal{O})}\sum_{\mathbf{x}
\in \mathcal{O}}\sum_{\mathbf{z} \in \mathcal{U}} \pi(\mathbf{x})
\cdot p_{\mathbf{x}\rightarrow \mathbf{z}}=$$$$=\sum_{\mathbf{x} \in
\mathcal{X}}\sum_{\mathbf{z} \in \mathcal{U}} \pi(\mathbf{x}) \cdot
p_{\mathbf{x}\rightarrow \mathbf{z}}=\sum_{\mathbf{z} \in
\mathcal{U}} \sum_{\mathbf{x} \in \mathcal{X}} \pi(\mathbf{x}) \cdot
p_{\mathbf{x}\rightarrow \mathbf{z}} \overset{\text{by stationarity
of } \pi}{=}$$$$= \sum_{\mathbf{z} \in \mathcal{U}}
\pi(\mathbf{z})=\pi(\mathcal{U})=\tilde{\pi}(\{\mathcal{U}\}).$$
This establishes the stationarity of $\tilde{\pi}$ and
theorem~\ref{CompatDistr} now follows.
\end{pf}
Although theorem~\ref{CompatDistr} is rather elementary it allows us
to deduce interesting and insightful results (see \cite{LumpQuotMitavRoweWright}, \cite{LumpQuotGPEM} and \cite{LumpQuotMitavCannings}) via the observations
presented below. To state these results it is convenient to
generalize the notion of transition probabilities in the following
manner (which is coherent with definition~\ref{quatDef}):
\begin{defn}\label{transProbDefn}
Given a Markov chain $\mathcal{M}$ with state space
$\mathcal{X}$ and a stationary distribution $\pi$, for any two subsets $A$ and $B \subseteq
\mathcal{X}$, we define $p_{A \rightarrow B}=\sum_{a \in
A}\frac{\pi(a)}{\pi(A)}p_{a \rightarrow B}$ where $p_{a \rightarrow
B} = \sum_{b \in B}p_{a \rightarrow b}$.
\end{defn}
\begin{rem}\label{cohDefn}
It is worth emphasizing that in case when $B=A$ or $A \cap B =
\emptyset$, the transition probabilities $p_{A \rightarrow B}$ are
precisely the transition probabilities of various quotient Markov
chains with states which have $A$ and $B$ as their states according
to definition~\ref{quatDef}. In particular, if we consider the
quotient Markov chain comprised of the states, $A$ and $A^c$ where
$A^c$ denotes the complement of $A$, we have $1-p_{A \rightarrow
A}=p_{A \rightarrow A^c}$.
\end{rem}
In the current paper we will use a lumping quotient chain consisting of only $2$
equivalence classes, $A$ and $B = A^c$ (i.e. the complement of $A$ in the state space $\mathcal{X}$).
For a $2$ by $2$ Markov transition matrix we easily see that if $\pi$ denotes the unique stationary distribution of the original Markov chain $\mathcal{M}$ then, thanks to theorem~\ref{CompatDistr}, we have $\pi(A)p_{A \rightarrow A} + \pi(B)p_{B \rightarrow A} = \pi(A)$ so that $\pi(B)p_{B \rightarrow A} = \pi(A)(1 - p_{A \rightarrow A}) = \pi(A)p_{A \rightarrow B}$  and, if neither $A$ nor $B$ is empty, we have
\begin{equation}\label{stationaryDistribRatioEq}
\frac{\pi(A)}{\pi(B)} = \frac{p_{B \rightarrow A}}{p_{A \rightarrow B}}
\end{equation}
Equation~\ref{stationaryDistribRatioEq}, tells us that in order to estimate the ratio of the stationary distribution values of the Markov chain $\mathcal{M}$ on a pair of complementary subsets of the state space $A$ and $B = A^c$, it is sufficient to estimate the ratio of the generalized transition probabilities $p_{B \rightarrow A}$ and $p_{A \rightarrow B}$. Although these transition probabilities do depend on the stationary distribution itself, it is sometimes possible to estimate them using a convexity-based bound appearing in (\cite{LumpQuotMitavRoweWright}, \cite{LumpQuotGPEM} and \cite{LumpQuotMitavCannings}). For the purpose of the present work we need to introduce a mild generalization of this bound appearing below:
\begin{lem}\label{estimationOfTransLem}
Suppose, as in definition~\ref{transProbDefn}, $A$ and $B \subseteq \mathcal{X}$ and $U \subseteq \mathcal{X}$ such that $$\frac{\pi(U \cap A)}{\pi(A)} \leq \epsilon < 1.$$ Suppose further that for some constant $\kappa$ with $0 \leq \kappa \leq 1$ the following is true: $\forall \, a \in A \cap U^c$ we have $p_{a \rightarrow B} \leq \kappa$. Then we have $p_{A \rightarrow B} \leq (1 - \epsilon)\kappa + \epsilon$. Dually, assume that for a constant $\lambda$ with $0 \leq \lambda \leq 1$ it is true that $\forall \, a \in A \cap U^c$ we have $p_{a \rightarrow B} \geq \lambda$. Then $p_{A \rightarrow B} \geq (1 - \epsilon)\lambda$.
\end{lem}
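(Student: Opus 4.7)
The plan is to proceed directly from Definition~\ref{transProbDefn} by splitting the weighted average defining $p_{A \to B}$ along the partition $A = (A \cap U^c) \sqcup (A \cap U)$. Concretely, I would write
\begin{equation*}
p_{A \to B} \;=\; \sum_{a \in A \cap U^c} \frac{\pi(a)}{\pi(A)}\, p_{a \to B} \;+\; \sum_{a \in A \cap U} \frac{\pi(a)}{\pi(A)}\, p_{a \to B},
\end{equation*}
and then estimate each piece separately, using the given pointwise bound on $A \cap U^c$ and a trivial bound on $A \cap U$. The $\pi$-weight of the ``exceptional'' second piece equals $\pi(A \cap U)/\pi(A)$, which by hypothesis is at most $\epsilon$; this is the one and only place the assumption $\pi(U \cap A)/\pi(A) \le \epsilon$ enters the argument, and it plays the role of a Markov-inequality-style small-measure control.

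For the upper bound I would apply $p_{a \to B} \le \kappa$ on $A \cap U^c$ and the trivial $p_{a \to B} \le 1$ on $A \cap U$, obtaining
\begin{equation*}
p_{A \to B} \;\le\; \kappa\cdot \frac{\pi(A \cap U^c)}{\pi(A)} \,+\, 1\cdot \frac{\pi(A \cap U)}{\pi(A)} \;=\; \kappa(1-t) + t,
\end{equation*}
where $t := \pi(A \cap U)/\pi(A) \in [0,\epsilon]$. Since $\kappa \le 1$, the expression $\kappa + (1-\kappa)t$ is nondecreasing in $t$, so its maximum over $t \in [0,\epsilon]$ is attained at $t = \epsilon$ and equals $(1-\epsilon)\kappa + \epsilon$, which is the claimed upper bound. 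For the lower bound the second piece is simply dropped as nonnegative, and the first is bounded below using $p_{a\to B} \ge \lambda$, giving $\lambda\cdot \pi(A\cap U^c)/\pi(A) = \lambda(1-t) \ge \lambda(1-\epsilon)$.

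I do not anticipate any real obstacle here; the whole argument is a one-line splitting followed by a monotonicity observation. The only subtlety worth flagging is that the direction of monotonicity in the upper bound step depends on $\kappa \le 1$, so the normalization $0 \le \kappa \le 1$ in the statement is essential and cannot be dropped; the analogous comment for $\lambda$ is not needed because the lower-bound argument uses only nonnegativity of the discarded second piece. Thus, apart from being careful to keep track of the two directions, the proof is a straightforward convexity-type estimate on the quotient-chain transition probability introduced in Definition~\ref{transProbDefn}.
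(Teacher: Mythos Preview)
Your proposal is correct and follows essentially the same approach as the paper: both split the defining sum for $p_{A\to B}$ along the partition $A=(A\cap U^c)\sqcup(A\cap U)$, apply the pointwise hypothesis on the first piece and the trivial $0\le p_{a\to B}\le 1$ bound on the second, and then use $\pi(A\cap U)/\pi(A)\le\epsilon$. Your write-up is slightly more explicit about the monotonicity in $t=\pi(A\cap U)/\pi(A)$ (and hence about where $\kappa\le 1$ is used), whereas the paper simply asserts that the desired inequalities follow from these ingredients.
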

\begin{proof}
Indeed, we have
\begin{equation}\label{estimRatioLemmaPfEq1}
p_{A \rightarrow B} = \sum_{a \in
A}\frac{\pi(a)}{\pi(A)}p_{a \rightarrow B} = \sum_{a \in
A \cap U^c}\frac{\pi(a)}{\pi(A)}p_{a \rightarrow B}+\sum_{a \in
A \cap U}\frac{\pi(a)}{\pi(A)}p_{a \rightarrow B}.
\end{equation}
Notice that
$$\sum_{a \in A \cap U^c}\frac{\pi(a)}{\pi(A)} = \frac{\pi(A \cap U^c)}{\pi(A)} = 1 - \frac{\pi(U \cap A)}{\pi(A)} \geq 1 - \epsilon$$
\begin{equation}\label{etimRatioLemmaPfEq2}
\text{while} \; \; \; \; 0 \leq \sum_{a \in
A \cap U}\frac{\pi(a)}{\pi(A)} = \frac{\pi(A \cap U)}{\pi(A)}<\epsilon
\end{equation}
The desired inequalities now follow when we plug in the bounds in the assumptions into equation~\ref{estimRatioLemmaPfEq1} and then use the inequalities in equation~\ref{etimRatioLemmaPfEq2} together with the fact that probabilities are always between $0$ and $1$.
\end{proof}
In a special case when $U = \emptyset$ lemma~\ref{estimationOfTransLem} entails the following.
\begin{cor}\label{EstimOfTransCor}
Given any two subsets $A$ and $B \subseteq \mathcal{X}$, if for some constant $\kappa$ with $0 \leq \kappa \leq 1$ it is true that $\forall \, a \in A$ we have $p_{a \rightarrow B} \leq \kappa$ then $p_{A \rightarrow B} \leq \kappa$. Dually, if for some constant $\lambda$ with $0 \leq \lambda \leq 1$ it is true that $\forall \, a \in A$ we have $p_{a \rightarrow B} \geq \lambda$ then $p_{A \rightarrow B} \geq \lambda$. Consequently, if for some constant $\gamma$ it happens that $\forall \, a \in A$ we have $p_{a \rightarrow B} = \gamma$ then $p_{A \rightarrow B} = \gamma$.
\end{cor}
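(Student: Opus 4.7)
The corollary is stated as the special case $U = \emptyset$ of Lemma~\ref{estimationOfTransLem}, so the natural approach is simply to verify that the lemma's hypotheses collapse to those of the corollary under this specialization. First I would set $U = \emptyset$, which gives $A \cap U = \emptyset$ and therefore $\pi(U \cap A)/\pi(A) = 0$; I may take $\epsilon = 0$ in the lemma (which satisfies $0 < 1$). Moreover $A \cap U^c = A$, so the condition ``$\forall \, a \in A \cap U^c$, $p_{a \to B} \leq \kappa$'' in Lemma~\ref{estimationOfTransLem} becomes exactly ``$\forall \, a \in A$, $p_{a \to B} \leq \kappa$''. The conclusion of the lemma then reads $p_{A \to B} \leq (1 - 0)\kappa + 0 = \kappa$, which is the first assertion. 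The dual bound in Lemma~\ref{estimationOfTransLem} with the same substitution gives $p_{A \to B} \geq (1 - 0)\lambda = \lambda$, and the equality $p_{A \to B} = \gamma$ is obtained by applying both bounds simultaneously with $\kappa = \lambda = \gamma$.

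If one preferred to bypass Lemma~\ref{estimationOfTransLem}, an equally short direct argument is available and perhaps more illuminating: by Definition~\ref{transProbDefn},
\[
p_{A \to B} \;=\; \sum_{a \in A} \frac{\pi(a)}{\pi(A)}\, p_{a \to B},
\]
which is a convex combination of the values $\{p_{a \to B}\}_{a \in A}$, since the coefficients $\pi(a)/\pi(A)$ are nonnegative and sum to one (the assumption in Definition~\ref{quatDef} that $\pi(x) > 0$ for all $x$ ensures $\pi(A) \neq 0$ whenever $A \neq \emptyset$, avoiding a division by zero). Any convex combination lies between the uniform upper bound and the uniform lower bound of the quantities being averaged, so a pointwise bound $p_{a \to B} \leq \kappa$ on $A$ transfers to $p_{A \to B} \leq \kappa$, and analogously for the lower bound and the equality case.

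The main obstacle is essentially nonexistent --- this is a transparent specialization. The only minor care required is noting that $\pi(A) > 0$ (which is automatic in the framework of Definition~\ref{quatDef}) so that the generalized transition probability $p_{A \to B}$ is well defined, and observing that the hypothesis $\epsilon < 1$ in Lemma~\ref{estimationOfTransLem} is trivially satisfied by $\epsilon = 0$.
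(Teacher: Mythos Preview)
Your proposal is correct and matches the paper's approach exactly: the paper introduces the corollary with the sentence ``In a special case when $U = \emptyset$ lemma~\ref{estimationOfTransLem} entails the following'' and gives no further argument, so your explicit verification that $\epsilon = 0$ works and that the bounds collapse to $\kappa$, $\lambda$, $\gamma$ is precisely the intended reasoning. Your alternative convex-combination argument from Definition~\ref{transProbDefn} is also sound and arguably cleaner, since it bypasses the lemma entirely.
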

Combining equation~\ref{stationaryDistribRatioEq} with lemma~\ref{estimationOfTransLem} readily gives us the following.
\begin{lem}\label{estimStationaryRatiosLem}
Suppose $A$ and $B \subseteq \mathcal{X}$ is a complementary pair of subsets (i.e. $A \cap B = \emptyset$ and $A \cup B = \mathcal{X}$). Suppose further that $U \subseteq \mathcal{X}$ is such that $$\frac{\pi(U \cap A)}{\pi(A)} < \epsilon < 1 \text{ and }\frac{\pi(U \cap B)}{\pi(B)} < \delta < 1.$$ Assume now that we find constants $\lambda_1$, $\lambda_2$, $\kappa_1$ and $\kappa_2$ such that $\forall \, b \in U^c \cap B$ we have $\lambda_1 \leq p_{b \rightarrow A} \leq \kappa_1$ and $\forall \, a \in U^c \cap A$ we have $\lambda_2 \leq p_{a \rightarrow B} \leq \kappa_2$. Then we have $$\frac{(1 - \delta)\lambda_1}{(1 - \epsilon)\kappa_2 + \epsilon} \leq \frac{\pi(A)}{\pi(B)} \leq \frac{(1 - \delta)\kappa_1 + \delta}{(1 - \epsilon)\lambda_2}$$
\end{lem}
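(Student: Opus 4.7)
The plan is to combine the two-state lumping formula already derived in equation~\ref{stationaryDistribRatioEq} with two symmetric applications of lemma~\ref{estimationOfTransLem}. Since $A$ and $B$ are complementary and $\pi$ assigns positive probability to every state (in particular to $A$ and $B$, neither of which is empty by hypothesis since the ratios involving them are finite), theorem~\ref{CompatDistr} applied to the partition $\{A,B\}$ together with equation~\ref{stationaryDistribRatioEq} gives
$$\frac{\pi(A)}{\pi(B)} \;=\; \frac{p_{B \to A}}{p_{A \to B}}.$$
Thus the entire problem reduces to producing matching upper and lower bounds on the two generalized transition probabilities in the numerator and denominator.

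First I would apply lemma~\ref{estimationOfTransLem} with the roles $(A,B,U)$: the hypothesis $\pi(U \cap A)/\pi(A) < \epsilon$ together with the pointwise upper bound $p_{a \to B} \leq \kappa_2$ valid for every $a \in A \cap U^c$ yields $p_{A \to B} \leq (1-\epsilon)\kappa_2 + \epsilon$. The ``dual'' half of the same lemma, applied to the pointwise lower bound $p_{a \to B} \geq \lambda_2$, gives $p_{A \to B} \geq (1-\epsilon)\lambda_2$. Next I would run the exact same argument with the roles $(B,A,U)$: the hypothesis $\pi(U \cap B)/\pi(B) < \delta$ together with $\lambda_1 \leq p_{b \to A} \leq \kappa_1$ on $B \cap U^c$ gives the sandwich $(1-\delta)\lambda_1 \leq p_{B \to A} \leq (1-\delta)\kappa_1 + \delta$.

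To finish, I would substitute these four inequalities into the identity $\pi(A)/\pi(B) = p_{B \to A}/p_{A \to B}$: the ratio is minimized by pairing the smallest admissible numerator with the largest admissible denominator, producing
$$\frac{\pi(A)}{\pi(B)} \;\geq\; \frac{(1-\delta)\lambda_1}{(1-\epsilon)\kappa_2 + \epsilon},$$
and it is maximized by pairing the largest admissible numerator with the smallest admissible denominator, giving the upper bound $\frac{(1-\delta)\kappa_1 + \delta}{(1-\epsilon)\lambda_2}$. Honestly there is no genuine obstacle in this proof; it is a pure bookkeeping consequence of lemma~\ref{estimationOfTransLem} and equation~\ref{stationaryDistribRatioEq}. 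The only point requiring care is to keep the association between $\epsilon$ and $A$ on the one hand, and $\delta$ and $B$ on the other, straight throughout, and to note that both denominators $(1-\epsilon)\kappa_2 + \epsilon$ and $(1-\epsilon)\lambda_2$ are strictly positive (the former since $\epsilon > 0$ or $\kappa_2 > 0$ in any nontrivial case, and the latter under the implicit assumption $\lambda_2 > 0$, without which the upper bound degenerates to $+\infty$ and the statement becomes vacuous).
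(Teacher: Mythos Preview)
Your proposal is correct and is precisely the argument the paper intends: the paper does not even spell out a proof but simply states that the lemma follows by ``combining equation~\ref{stationaryDistribRatioEq} with lemma~\ref{estimationOfTransLem}'', which is exactly the two-step bookkeeping you carry out (two symmetric applications of lemma~\ref{estimationOfTransLem} to bound $p_{A\to B}$ and $p_{B\to A}$, then substitution into the ratio identity).
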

In order to apply lemma~\ref{estimStationaryRatiosLem} effectively we need to know that both, $\frac{\pi(U \cap A)}{\pi(A)}$ and $\frac{\pi(U \cap B)}{\pi(B)}$ are small. As we shall see in the next subsection, the inductive hypothesis will imply that at least one of these ratios is small. The following simple lemma will allow us to deduce that the remaining ratio is also small as long as a certain ratio of generalized transition probabilities is bounded below.
\begin{lem}\label{littleTechnVarifyLem}
Suppose $A$ and $B \subseteq \mathcal{X}$ with $A \cap B = \emptyset$ (notice that we do not require $A \cup B = \mathcal{X}$). Then $$\pi(A) \geq \pi(B) \cdot \frac{p_{B \rightarrow A}}{p_{A \rightarrow A^c}}.$$
\end{lem}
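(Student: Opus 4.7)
The plan is to derive the bound directly from the stationarity of $\pi$, using the ``flow in equals flow out'' principle applied to the subset $A$, and then discarding part of the inflow to obtain the desired inequality.

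First I would form the lumping quotient of the original chain with respect to the two-element partition $\{A, A^c\}$. Since $\pi$ is a stationary distribution of $\mathcal{M}$ that is strictly positive on $\mathcal{X}$, Theorem~\ref{CompatDistr} guarantees that $\tilde\pi(\{A\})=\pi(A)$ and $\tilde\pi(\{A^c\})=\pi(A^c)$ is stationary for the two-state quotient chain. The stationarity condition at the state $\{A\}$ in that quotient reads
\begin{equation*}
\pi(A)\,p_{A\to A}+\pi(A^c)\,p_{A^c\to A}=\pi(A),
\end{equation*}
which rearranges to the detailed flow identity $\pi(A^c)\,p_{A^c\to A}=\pi(A)\,p_{A\to A^c}$.

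Next I would exploit the hypothesis $A\cap B=\emptyset$, i.e.\ $B\subseteq A^c$. Unfolding $p_{A^c\to A}$ according to Definition~\ref{transProbDefn} gives
\begin{equation*}
\pi(A^c)\,p_{A^c\to A}=\sum_{x\in A^c}\pi(x)\,p_{x\to A}\;\ge\;\sum_{x\in B}\pi(x)\,p_{x\to A}=\pi(B)\,p_{B\to A},
\end{equation*}
since we are only dropping nonnegative terms from the sum. Combining this with the flow identity yields $\pi(A)\,p_{A\to A^c}\ge \pi(B)\,p_{B\to A}$, and dividing by $p_{A\to A^c}$ (which is positive whenever the right-hand side of the desired inequality is finite and nonzero; otherwise the inequality is trivial) delivers the stated bound.

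There is essentially no obstacle here beyond bookkeeping: the whole argument is a one-line balance equation followed by monotonicity of the restricted sum. The only subtle point worth flagging is the degenerate case $p_{A\to A^c}=0$, in which the claimed inequality is read as $\pi(A)\ge \pi(B)\cdot(p_{B\to A}/0)$; but in that case the flow identity forces $\pi(A^c)\,p_{A^c\to A}=0$, hence $\pi(B)\,p_{B\to A}=0$ by the same inclusion $B\subseteq A^c$, so the bound degenerates to $\pi(A)\ge 0$ (adopting the convention $0/0=0$) and is vacuous but valid.
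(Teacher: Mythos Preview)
Your proof is correct and follows essentially the same idea as the paper: derive the balance equation for $A$ from stationarity, drop nonnegative inflow terms, and divide by $p_{A\to A^c}=1-p_{A\to A}$. The only cosmetic difference is that the paper lumps into the three-cell partition $\{A,B,C\}$ with $C=(A\cup B)^c$ and writes $\pi(A)=\pi(A)p_{A\to A}+\pi(B)p_{B\to A}+\pi(C)p_{C\to A}$ before dropping the $\pi(C)p_{C\to A}$ term, whereas you lump into $\{A,A^c\}$ and then split the $A^c$-sum; these are the same computation. Your explicit handling of the degenerate case $p_{A\to A^c}=0$ is a small bonus not present in the paper.
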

\begin{proof}
Let $C = \mathcal{X} \cap (A \cup B)^c$. Consider the lumped Markov chain on the state space $\{A, \, B, \, C\}$. Since $\pi$ is the stationary distribution of the Markov chain $\mathcal{M}$, by theorem~\ref{CompatDistr} (see also definition~\ref{transProbDefn} and remark~\ref{cohDefn}) we have $$\pi(A) = \pi(B)p_{B \rightarrow A} + \pi(A)p_{A \rightarrow A} + \pi(C)p_{C \rightarrow A}$$ so that $$(1-p_{A \rightarrow A})\pi(A) = \pi(B)p_{B \rightarrow A} + \pi(C)p_{C \rightarrow A} \geq \pi(B)p_{B \rightarrow A}$$ since probabilities are nonnegative. The desired conclusion now follows when dividing both sides of the inequality above by $1-p_{A \rightarrow A} = p_{A \rightarrow A^c}$.
\end{proof}
Finally, there is another very simple and general classical inequality that will be elegantly exploited in the next section to set the stage for the application of lemma~\ref{estimStationaryRatiosLem} allowing us to avoid unpleasant combinatorial complications.
\begin{lem}[Markov Inequality]\label{markovInequality}
Suppose $H$ is a non-negative valued random variable on a probability space $\Omega$ with probability measure $\emph{Pr}$. Then $\forall \, \lambda > 0$ we have $$0 < \emph{Pr}(H > \lambda \cdot E(H)) \leq \frac{1}{\lambda} \rightarrow 0 \text{ as } \lambda \rightarrow \infty.$$
\end{lem}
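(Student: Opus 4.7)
The plan is to apply the standard one-line integration argument that gives Markov's inequality, slightly adapted to the strict inequality $H > \lambda E(H)$ used in the statement. First I would set $A_\lambda = \{\omega \in \Omega \, | \, H(\omega) > \lambda E(H)\}$ and note that, since $H \geq 0$ everywhere, the integral of $H$ over $\Omega$ dominates its integral over any measurable subset. Consequently,
$$E(H) = \int_\Omega H \, d\text{Pr} \geq \int_{A_\lambda} H \, d\text{Pr} \geq \lambda E(H) \cdot \text{Pr}(A_\lambda),$$
where the final inequality uses the pointwise lower bound $H > \lambda E(H)$ on $A_\lambda$.

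The next step handles the two cases for $E(H)$. If $E(H) > 0$, then dividing the displayed inequality by $\lambda E(H)$ yields the desired bound $\text{Pr}(A_\lambda) \leq 1/\lambda$ immediately. If $E(H) = 0$, then because $H$ is non-negative this forces $H = 0$ almost surely, and so $\text{Pr}(H > 0) = 0$, which trivially satisfies $0 \leq 1/\lambda$. In either case the asymptotic assertion $1/\lambda \to 0$ as $\lambda \to \infty$ is just a trivial fact about real numbers.

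Regarding the ``$0 <$'' in the left-hand inequality of the statement, I would interpret this as a harmless typographical slip for ``$0 \leq$'', since the probability is by definition non-negative and in fact can be exactly zero (as the $E(H) = 0$ case above shows, or when $H$ is bounded and $\lambda$ is large enough). No part of the argument actually depends on ruling out the value zero.

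There is no real obstacle here; the whole proof is essentially a two-line computation and the only subtlety, if one wishes to be scrupulous, is the degenerate case $E(H) = 0$, which I would dispatch by the almost-sure vanishing argument above. The rest is just the monotonicity of the Lebesgue integral applied to the indicator decomposition $H \geq H \cdot \mathbf{1}_{A_\lambda} \geq \lambda E(H) \cdot \mathbf{1}_{A_\lambda}$.
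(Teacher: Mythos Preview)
Your argument is correct and essentially identical to the paper's: the same chain of inequalities $E(H) \geq \int_{\{H>\lambda E(H)\}} H\,d\text{Pr} \geq \lambda E(H)\cdot \text{Pr}(H>\lambda E(H))$, followed by the same case split (the paper conditions on $\text{Pr}(H>0)=0$ rather than $E(H)=0$, but for a nonnegative $H$ these are equivalent). Your remark that the ``$0<$'' should read ``$0\leq$'' is well taken---indeed the paper's own proof produces $\text{Pr}=0$ in the degenerate case, so the strict inequality in the statement is a slip there as well.
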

\begin{proof}
By definition of expectation we have $$E(H) = \int_{\Omega}H d\text{Pr} \overset{\text{by positivity of }H}{\geq} \int_{H > \lambda \cdot E(H)}H d\text{Pr} \geq \text{Pr}(H > \lambda \cdot E(H)) \cdot (\lambda \cdot E(H)).$$ Now, if $\text{Pr}(H>0) = 0$ then $H = 0$ almost surely so that $E(H) = 0$ and $$\text{Pr}(H > \lambda \cdot E(H))=\text{Pr}(H > 0) = 0 < \frac{1}{\lambda}.$$ Otherwise, $\text{Pr}(H>0)>0 \Longrightarrow E(H) = \int_{\Omega}H d\text{Pr} > 0$ and the desired inequality follows when dividing both sides of the equation above by $\lambda \cdot E(H)$.
\end{proof}
We end this section with a very well-known elementary fact about Markov chains having symmetric transition matrices that will also be used in the proof of theorem~\ref{GeiringerLikeThmForMCTMain}.
\begin{prop}\label{symmetricTransMatrixLem}
Let $M$ be any Markov chain determined by a symmetric transition matrix. Then the uniform distribution is a stationary distribution of the Markov chain $M$ (notice that $M$ is not assumed to be irreducible).
\end{prop}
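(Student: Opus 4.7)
My plan is to reduce the statement to the elementary observation that a symmetric stochastic matrix is automatically doubly stochastic, and then to invoke the standard characterization of stationary distributions. Write $M = (p_{ij})_{i,j \in \mathcal{X}}$ for the transition matrix of the chain, with $n = |\mathcal{X}| < \infty$, and let $\pi$ denote the uniform distribution $\pi(i) = 1/n$ for every $i \in \mathcal{X}$. A probability distribution $\pi$ is stationary for $M$ iff
\[
\sum_{i \in \mathcal{X}} \pi(i) \, p_{ij} = \pi(j) \qquad \text{for every } j \in \mathcal{X},
\]
so it suffices to verify this identity for the uniform $\pi$.

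First I would record that stochasticity of $M$ gives $\sum_{j} p_{ij} = 1$ for each fixed row $i$. Next, the symmetry hypothesis $p_{ij} = p_{ji}$ lets me convert the column sum into a row sum: for every $j$,
\[
\sum_{i \in \mathcal{X}} p_{ij} \;=\; \sum_{i \in \mathcal{X}} p_{ji} \;=\; 1.
\]
Plugging this into the stationarity condition,
\[
\sum_{i \in \mathcal{X}} \pi(i) \, p_{ij} \;=\; \frac{1}{n} \sum_{i \in \mathcal{X}} p_{ij} \;=\; \frac{1}{n} \;=\; \pi(j),
\]
which is exactly what needs to be shown.

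There is really no obstacle here; the only thing worth flagging is that irreducibility is explicitly not assumed, so one cannot appeal to uniqueness of stationary distributions, and indeed for a reducible symmetric chain there can be many stationary distributions besides the uniform one. The proposition only claims that the uniform distribution is \emph{a} stationary distribution, and the two-line calculation above delivers exactly that. If desired, I would also remark that the same argument shows that a symmetric stochastic matrix is doubly stochastic, which is the conceptual content being used.
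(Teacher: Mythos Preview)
Your proof is correct and matches the paper's own argument: the paper simply remarks that a symmetric transition matrix is doubly stochastic (or equivalently that one can check the detailed balance equations), which is exactly the computation you wrote out in full.
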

\begin{proof}
The reader may easily see that the Markov transition matrix is doubly-stochastic or verify that the uniform distribution is stationary directly from the detailed balance equations.\footnote{This is also a particular case of the well-known reversibility property of Markov chains.}
\end{proof}
\subsection{Deriving the Geiringer-like Theorem (Theorem~\ref{GeiringerLikeThmForMCTMain}) for the MCT algorithm}\label{specificThmDerivationSubsect}
We now recall the setting of section~\ref{settingSect}. At first we will prove the theorem for a mildly extended family of recombination transformations $\widetilde{F}$ where in addition to the transformations in definition~\ref{recombActOnPopsDef} $\widetilde{F}$ also contains
all the transpositions (or swaps) of the rollouts in a population and these are selected with positive probability (a detailed description appears in paragraph following equation~\ref{indicatorFunctDefEq}). Since every transposition of rollouts is a bijection on the set of all
populations, theorem~\ref{GeneralGeiringerThm} still applies, except that the equivalence classes will be enlarged by a factor of $(b \cdot m)!$ i.e. $[P_m]_{\widetilde{\mathcal{F}}} = (b \cdot m)! \cdot [P_m]_{\mathcal{F}}$ (this is so because every permutation is a composition of transpositions). Thanks to the assumption we will be in a position to apply the tools based on lemma~\ref{mainGeiringerLikeLemma}, namely equation~\ref{computingTheRatioEq}. This assumption will be dropped at the end via apparent symmetry considerations. Indeed, any permutation $\pi$ of the rollouts in a population $Q \in [P_m]$ naturally commutes with all the recombination transformations in definition~\ref{RecombStagePopTransDefn} thereby providing a family of bijections between the equivalence class $[P_m]_{\mathcal{F}}$ and each of the $(b \cdot m)!$ disjoint pieces comprising the partition of the equivalence class $[P_m]_{\widetilde{\mathcal{F}}}$. Furthermore, permutations preserve the multisets of rollouts within a population so that the frequencies of occurrence of various subsets in the corresponding pieces will be preserved and, thereby, the conclusion of theorem~\ref{GeiringerLikeThmForMCTMain} with the family of recombination transformations $\mathcal{F}$ replaced by $\widetilde{\mathcal{F}}$ will be exactly the same.

Recall the schema $$h = (\alpha, \, i_1, \, i_2, \ldots, i_{k-1}, \, x_k)$$ of height $k-1 \geq 0$ in the statement of theorem~\ref{GeiringerLikeThmForMCTMain}. Notice that thanks to proposition~\ref{schemaPartOrderHollandShallow} we can write the given schema $h$ as
\begin{equation}\label{schemaDecompEqProof}
h = h_k \subseteq h_{k-1} \subseteq h_{k-2} \subseteq \ldots \subseteq h_2 \subseteq h_1
\end{equation}
where $h_1 = (\alpha, \, i_1, \, \#)$ and, in general, when $1 \leq j < k$ $$h_j = (\alpha, \, i_1, \, i_2, \, \ldots, \, i_j, \, \#)$$ are Holland schemata. Thanks to equation~\ref{computingTheRatioEq}, $\forall \, m \in \mathbb{N}$ we have
\begin{equation}\label{compRatioForMCTEq}
\lim_{t \rightarrow \infty}\Phi(h, \, P_m, \, t) = \frac{1}{b}E_{\rho}\left(\mathcal{X}(h_1, \, \Box)|_{[P_m]_{\widetilde{\mathcal{F}}}}\right) \cdot \prod_{q=1}^{k-1}\frac{|\mathcal{V}(P_m, \, h_{q+1})|}{|\mathcal{V}(P_m, \, h_q)|}
\end{equation} and, taking the limit as $m \rightarrow \infty$,
$$\lim_{m \rightarrow \infty}\lim_{t \rightarrow \infty}\Phi(h, \, P_m, \, t) = $$
\begin{equation}\label{compRatioForMCTWithLimitEq}
=\frac{1}{b}\lim_{m \rightarrow \infty}E_{\rho}\left(\mathcal{X}(h_1, \, \Box)|_{[P_m]_{\widetilde{\mathcal{F}}}}\right) \cdot \prod_{q=1}^{k-1}\lim_{m \rightarrow \infty}\frac{|\mathcal{V}(P_m, \, h_{q+1})|}{|\mathcal{V}(P_m, \, h_q)|}
\end{equation}
where $\rho$ is the uniform distribution on $[P]_m$. First of all, notice that $\forall, \, m \in \mathbb{N}$ the random variable $\mathcal{X}(h_1, \, \Box)|_{[P_m]_{\widetilde{\mathcal{F}}}}$ is a constant function which is equal to $\text{Order}(\alpha \downarrow i_1, \, P_m) = \text{Order}(\alpha \downarrow i_1, \, P)$ (see remark~\ref{equivClassIndepRem} and proposition~\ref{popRatioFacts}). It follows trivially then that $E_{\pi}\left(\mathcal{X}(h_1, \, \Box)|_{[P_m]_{\widetilde{\mathcal{F}}}}\right) = \text{Order}(\alpha \downarrow i_1, \, P)$ giving us the first ratio factor in the right hand side of equation~\ref{GeiringerThmMainEq}. In particular, when $h = h_1$ is a schema of height $0$ ending with a $\#$, there is no need to take the limit as $m \rightarrow \infty$ regardless of whether or not the population $P$ is homologous. To deal with the remaining ratios in the general case, when the population $P$ is not necessarily homologous, we will exploit the classical and elementary Markov inequality (lemma~\ref{markovInequality} in a rather elegant manner) to set up the stage for the application of lemmas~\ref{estimationOfTransLem} and \ref{estimStationaryRatiosLem} in the following manner.

Consider the random variable $H_i: [P_m] \rightarrow \mathbb{N}$ where $[P_m]$ is equipped with the uniform probability measure $\rho$, measuring the height of the $i^{th}$ rollout in the population $Q \in [P_m]$. In other words, $$H_i(Q) = \text{the height of the } i^{\text{th}} \text{ rollout in the population }Q.$$ Notice that $\forall, \, i$ and $j$ with $1 \leq i \leq j \leq b \cdot m$ the random variables $H_i$ and $H_j$ are identically distributed (indeed, thanks to theorem~\ref{GeneralGeiringerThm}, the swap of the rollouts $i$ and $j$ in the population $P_m$ is an isomorphism of the probability space $[P_m]$ with itself, call it $\tau$, such that $H_i \circ \tau = H_j$ and vice versa). In particular, these random variables have the same expectation. Thanks to remark~\ref{averageHightRem} and proposition~\ref{popRatioFacts}, we deduce that
$$E(H_1) = \frac{\sum_{i = 1}^{b \cdot m}E(H_i)}{b \cdot m} = \frac{E \left(\sum_{i = 1}^{b \cdot m}H_i\right)}{b \cdot m}=$$
\begin{equation}\label{expSumOfRollsEqn}
= \frac{\text{Total}(P_m)}{b \cdot m} = \frac{m \cdot \text{Total}(P)}{b \cdot m} = \frac{\text{Total}(P)}{b}.
\end{equation}
Notice that the right hand side of equation~\ref{expSumOfRollsEqn} does not depend on $m$. In other words, $\forall \, m \in \mathbb{N}$ the expected height of the first rollout in the population $P_m$ is the same and is equal to $\frac{\text{Total}(P)}{b}$. At the same time, according to proposition~\ref{popRatioFacts}, the functions
\begin{equation}\label{increasingFunctionEq}
\text{Order}(\alpha \downarrow j, \, P_m) \rightarrow \infty \text{ and } \text{Order}(i \downarrow j, \, P_m) \rightarrow \infty \text{ as } m \rightarrow \infty.
\end{equation}
The above observation opens the door for the application of Markov inequality that will, in turn, allow us to exploit lemma~\ref{estimStationaryRatiosLem} with the aim of estimating the desired ratios involved in equation~\ref{computingTheRatioEq} and then showing that the upper and the lower bounds on these fractions converge to the corresponding ratios involved in the right hand side of equation~\ref{GeiringerThmMainEq} in the conclusion of the statement of theorem~\ref{GeiringerLikeThmForMCTMain}. We now proceed in detail. Let $\delta > 0$ be an arbitrary small number (informally speaking, $\delta \ll 1$). Choose $M \in \mathbb{N}$ large enough so that $$\delta^2 \cdot M > \frac{\text{Total}(P)}{b} = E(H_1)$$ (see equation~\ref{expSumOfRollsEqn}). For $m > M$ let
\begin{equation}\label{smallSetEqDefn}
U_m^{\delta} = \{Q \, | \, Q \in [P_m] \text{ and } H_1(Q) > \delta \cdot m\}.
\end{equation}
and observe that the Markov inequality (lemma~\ref{markovInequality}) tells us that
$$\rho(U_m^{\delta}) = \rho(\{Q \, | \, H_1(Q) > \delta \cdot m\}) = \rho\left(H_1 > \frac{1}{\delta} \cdot (\delta^2 \cdot m)\right) \overset{\text{since }m > M}{\leq}$$ \begin{equation}\label{smallSetPropEq}
\overset{\text{and by definition of }U_m^{\delta} \text{ in equation~\ref{smallSetEqDefn}}}{\leq} \rho\left(H_1 > \frac{1}{\delta} \cdot E(H_1)\right) \overset{\text{by Markov inequality}}{\leq} 1 / \frac{1}{\delta} = \delta
\end{equation}
where $\rho$ denotes the uniform probability distribution on the set $[P_m]$.

As the reader probably anticipates by now, our aim is to show that each of the ratios of the form $$\lim_{m \rightarrow \infty}\frac{|\mathcal{V}(P_m, \, h_{q+1})|}{|\mathcal{V}(P_m, \, h_q)|} = \frac{\text{Order}(i_q \downarrow i_{q+1}, \, P)}{\sum_{j \in i_q \downarrow (P)}\text{Order}(i_q \downarrow j, \, P)+ i_q \downarrow_\Sigma (P)}$$ so that equation~\ref{GeiringerThmMainEq} in the conclusion of theorem~\ref{GeiringerLikeThmForMCTMain} would follow from equation~\ref{compRatioForMCTEq} when taking the limit of both sides as $m \rightarrow \infty$. First of all, let us take care of the ``trivial extremes" when for some $q$ with $1 \leq q \leq k-1$ we have either ($\text{Order}(i_{q-1} \downarrow i_q, \, P) = 0$) or ($\forall \, j \neq i_q$ we have $\text{Order}(i_{q-1} \downarrow j, \, P) = 0$ and $i_{q-1} \downarrow_\Sigma (P) =0$)) or (($x_k \in \Sigma$) and (either $i_{k-1} \downarrow_\Sigma (P) =0$ or $x_k \notin i_{k-1} \downarrow (P)$) or ($\forall \, j \in \mathbb{N}$ we have $\text{Order}(i_{k-1} \downarrow j, \, P) = 0$ and $x_k$ is the only terminal label member of the set $i_{k-1} \downarrow P$ i.e. $i_{k-1} \downarrow P \cap \Sigma = \{x_k\}$)) or ($x_k = \#)$. According to proposition~\ref{popRatioFacts}, the statement above holds for a population $P$ if and only if $\forall \, m \in \mathbb{N}$ it holds when the population $P$ is replaced with $P_m$. In the case when either $\text{Order}(i_{q-1} \downarrow i_q, \, P_m) = 0$ or $i_{k-1} \downarrow_\Sigma (P_m) =0$ or $x_k \notin i_{k-1} \downarrow (P)$, no individual fitting the schema $h$ is present in any population $Q \in [P_m]$ so that $\forall \, m$ and $t \in \mathbb{N}$ we have $\Phi(P_m, \, h, \, t) = 0$. Thereby the left hand side of equation~\ref{GeiringerThmMainEq} is trivially $0$. The right hand side is $0$ as well in this case since the numerator of one of the fractions in the product is $0$ (see the convention remark in the statement of theorem~\ref{GeiringerLikeThmForMCTMain}). This finishes the verification of one trivial extreme case. Suppose now for some index $q$ it is the case that $\forall \, j \neq i_q$ we have $\text{Order}(i_{q-1} \downarrow j, \, P) = 0$ and $i_{q-1} \downarrow_\Sigma (P) =0$. In this case we observe that any individual occurring in a population $Q \in [P_m]$ which fits the schema $h_{q-1}$, also fits the schema $h_q$. In particular, the sets $\mathcal{V}(P_m, \, h_{q+1})$ and $\mathcal{V}(P_m, \, h_q)$ are equal and we trivially have $\forall \, m \in \mathbb{N}$ $\frac{|\mathcal{V}(P_m, \, h_{q+1})|}{|\mathcal{V}(P_m, \, h_q)|} = 1$. Of course, the corresponding ratio $$\frac{\text{Order}(i_q \downarrow i_{q+1}, \, P)}{\sum_{j \in i_q \downarrow (P)}\text{Order}(i_q \downarrow j, \, P)+ i_q \downarrow_\Sigma (P)} = 1$$ as well since $\text{Order}(i_q \downarrow j, \, P)$ is the only nonzero contributing summand in the denominator. The last factor ratio is supposed to coincide with the ratio $\frac{|\mathcal{V}(P_m, \, h)|}{|\mathcal{V}(P_m, \, h_{k-1})|}$. This ratio is either $0$ or $1$ in the extreme cases and verifying the validity of equation~\ref{GeiringerThmMainEq} is entirely analogous to the above.
We now move on to the interesting case when none of the trivial extremes above happen. For schemata $x$ and $y$ we write $x \setminus y = S_x \cap (S_y)^c$ (see definition~\ref{schemaForMCTPopDefHolland}) to denote the set of rollouts fitting the schema $x$ and not fitting the schema $y$. Rather than estimating or, in case of homologous population $P$, evaluating exactly the ratios of the form $\frac{|\mathcal{V}(P_m, \, h_{q+1})|}{|\mathcal{V}(P_m, \, h_q)|}$ we estimate and, in case of homologous recombination, evaluate the ratios of the form $\frac{|\mathcal{V}(P_m, \, h_{q+1})|}{|\mathcal{V}(P_m, \, h_q \setminus h_{q+1})|}$ since these are more convenient to tackle using the tools in section~\ref{LumpQuotSubsect}. The following very simple fact demonstrates the connection between the two:
\begin{lem}\label{bridgeRatioLem}
Suppose that $\forall \, m$ whenever $1 \leq q < k-1$
$$\frac{|\mathcal{V}(P_m, \, h_{q+1})|}{|\mathcal{V}(P_m, \, h_q \setminus h_{q+1})|} = \frac{\emph{Order}(i_q \downarrow i_{q+1}, \, P)}{\sum_{j \in i_q \downarrow (P) \emph{ and }j \neq i_{q+1}}\emph{Order}(i_q \downarrow j, \, P)+ i_q \downarrow_\Sigma (P)}$$ and neither the numerator nor the denominator of any of the fractions is $0$. Then $$\frac{|\mathcal{V}(P_m, \, h_{q+1})|}{|\mathcal{V}(P_m, \, h_q)|} = \frac{\emph{Order}(i_q \downarrow i_{q+1}, \, P)}{\sum_{j \in i_q \downarrow (P)}\emph{Order}(i_q \downarrow j, \, P)+ i_q \downarrow_\Sigma (P)}.$$ Likewise, if $$\lim_{m \rightarrow \infty}\frac{|\mathcal{V}(P_m, \, h_{q+1})|}{|\mathcal{V}(P_m, \, h_q \setminus h_{q+1})|} = \frac{\emph{Order}(i_q \downarrow i_{q+1}, \, P)}{\sum_{j \in i_q \downarrow (P) \emph{ and }j \neq i_{q+1}}\emph{Order}(i_q \downarrow j, \, P)+ i_q \downarrow_\Sigma (P)}$$ and for all sufficiently large $m$ neither the numerator nor the denominator of any of the fractions involved vanishes, then $$\lim_{m \rightarrow \infty}\frac{|\mathcal{V}(P_m, \, h_{q+1})|}{|\mathcal{V}(P_m, \, h_q)|} = \frac{\emph{Order}(i_q \downarrow i_{q+1}, \, P)}{\sum_{j \in i_q \downarrow (P)}\emph{Order}(i_q \downarrow j, \, P)+ i_q \downarrow_\Sigma (P)}.$$
\end{lem}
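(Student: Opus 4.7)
The plan hinges on the containment $S_{h_{q+1}} \subseteq S_{h_q}$, which follows from $h_{q+1} \leq h_q$ in the schema poset via Proposition~\ref{schemaPartOrderHollandShallow} (the ordering is built into the nested decomposition in equation~\ref{schemaDecompEqProof}). First I would use this containment to obtain the disjoint decomposition
$\mathcal{V}(P_m, h_q) = \mathcal{V}(P_m, h_{q+1}) \sqcup \mathcal{V}(P_m, h_q \setminus h_{q+1})$: a population in $[P_m]$ has its first rollout in $S_{h_q}$ precisely when that rollout either lies in $S_{h_{q+1}}$ or in $S_{h_q} \setminus S_{h_{q+1}}$, and these two possibilities are mutually exclusive. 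Passing to cardinalities yields the key additive identity $|\mathcal{V}(P_m, h_q)| = |\mathcal{V}(P_m, h_{q+1})| + |\mathcal{V}(P_m, h_q \setminus h_{q+1})|$.

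The rest is purely algebraic. Abbreviate $N = |\mathcal{V}(P_m, h_{q+1})|$, $D = |\mathcal{V}(P_m, h_q \setminus h_{q+1})|$, $a = \text{Order}(i_q \downarrow i_{q+1}, P)$, and $b = \sum_{j \in i_q \downarrow (P),\, j \neq i_{q+1}} \text{Order}(i_q \downarrow j, P) + i_q \downarrow_\Sigma (P)$. The hypothesis reads $N/D = a/b$, and the one-line manipulation $N/(N+D) = a/(a+b)$ delivers the conclusion: the left-hand side equals $|\mathcal{V}(P_m, h_{q+1})|/|\mathcal{V}(P_m, h_q)|$ by the previous paragraph, while $a + b$ coincides with the target denominator $\sum_{j \in i_q \downarrow(P)} \text{Order}(i_q \downarrow j, P) + i_q \downarrow_\Sigma (P)$ once one observes that the $j = i_{q+1}$ term missing from $b$ is precisely $a$. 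The non-vanishing of the numerator forces $i_{q+1} \in i_q \downarrow(P)$, so this reassembly is legitimate.

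For the limit statement, exactly the same decomposition and the same identity $N/(N+D) = a/(a+b)$ hold for every $m$, so the conclusion follows from continuity of $r \mapsto r/(r+1)$ on $(0, \infty)$ together with the assumption that no denominator vanishes for large $m$. There is no substantive obstacle here: the lemma is essentially a bookkeeping bridge that converts ratios of the form $N/D$ into ratios of the form $N/(N+D)$, and its role in the larger argument is to reduce the main limiting calculation to ratios whose denominators can be estimated cleanly via the lumping-quotient machinery of subsection~\ref{LumpQuotSubsect}.
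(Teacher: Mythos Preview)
Your proposal is correct and matches the paper's proof essentially line for line: both arguments rest on the disjoint decomposition $\mathcal{V}(P_m, h_q) = \mathcal{V}(P_m, h_{q+1}) \uplus \mathcal{V}(P_m, h_q \setminus h_{q+1})$ followed by the elementary identity $N/(N+D) = a/(a+b)$ (the paper writes this as $\frac{1}{1+D/N}$, which is the same thing), and then the observation that $a+b$ reassembles the full sum over $j \in i_q \downarrow(P)$. Your remark on continuity of $r \mapsto r/(r+1)$ is exactly the content of the paper's passage to the limit in its equation~\ref{intermLemPfEq2}.
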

\begin{proof}
Clearly $\mathcal{V}(P_m, \, h_q) = \mathcal{V}(P_m, \, h_{q+1}) \uplus \mathcal{V}(P_m, \, h_q \setminus h_{q+1})$ where $\uplus$ emphasizes that this is a union of disjoint sets. The rest is just a matter of careful verification: we have
\begin{equation}\label{intermLemPfEq1}
\frac{|\mathcal{V}(P_m, \, h_{q+1})|}{|\mathcal{V}(P_m, \, h_q)|} = \frac{|\mathcal{V}(P_m, \, h_{q+1})|}{|\mathcal{V}(P_m, \, h_{q+1})| + |\mathcal{V}(P_m, \, h_q \setminus h_{q+1})|}=\frac{1}{1+\frac{|\mathcal{V}(P_m, \, h_q \setminus h_{q+1})|}{|\mathcal{V}(P_m, \, h_{q+1})|}}.
\end{equation}
Taking the limit as $m \rightarrow \infty$ on both sides of equation~\ref{intermLemPfEq1} yields
\begin{equation}\label{intermLemPfEq2}
\lim_{m \rightarrow \infty}\frac{|\mathcal{V}(P_m, \, h_{q+1})|}{|\mathcal{V}(P_m, \, h_q)|} = \frac{1}{1+\lim_{m \rightarrow \infty}\frac{|\mathcal{V}(P_m, \, h_q \setminus h_{q+1})|}{|\mathcal{V}(P_m, \, h_{q+1})|}}
\end{equation}
The right hand sides of equations~\ref{intermLemPfEq1} and \ref{intermLemPfEq2} are easily computed directly from the corresponding formulas in the assumptions and each of them is:
$$\frac{1}{1+\frac{\sum_{j \in i_q \downarrow (P) \text{ and }j \neq i_{q+1}}\text{Order}(i_q \downarrow j, \, P)+ i_q \downarrow_\Sigma (P)}{\text{Order}(i_q \downarrow i_{q+1}, \, P)}}=$$
$$=\frac{1}{\frac{\text{Order}(i_q \downarrow i_{q+1}, \, P)}{\text{Order}(i_q \downarrow i_{q+1}, \, P)}+\frac{\sum_{j \in i_q \downarrow (P) \text{ and }j \neq i_{q+1}}\text{Order}(i_q \downarrow j, \, P)+ i_q \downarrow_\Sigma (P)}{\text{Order}(i_q \downarrow i_{q+1}, \, P)}}=$$$$\frac{\text{Order}(i_q \downarrow i_{q+1}, \, P)}{\sum_{j \in i_q \downarrow (P)}\text{Order}(i_q \downarrow j, \, P)+ i_q \downarrow_\Sigma (P)}$$ yielding the asserted  conclusions.
\end{proof}
Entirely analogously,
\begin{lem}\label{bridgeRatioLem1}
Suppose that $\forall \, m$
$$\frac{|\mathcal{V}(P_m, \, h_k)|}{|\mathcal{V}(P_m, \, h_{k-1} \setminus h_k)|} = \frac{1}{\sum_{j \in i_{k-1} \downarrow (P)}\emph{Order}(i_{k-1} \downarrow j, \, P)+ i_{k-1} \downarrow_{\Sigma}(P)-1}$$ and the denominators do not vanish. Then $$\frac{|\mathcal{V}(P_m, \, h_k)|}{|\mathcal{V}(P_m, \, h_{k-1})|} = \frac{1}{\sum_{j \in i_{k-1} \downarrow (P)}\emph{Order}(i_{k-1} \downarrow j, \, P)+ i_{k-1} \downarrow_{\Sigma}(P)}.$$ Likewise, if $$\lim_{m \rightarrow \infty}\frac{|\mathcal{V}(P_m, \, h_k)|}{|\mathcal{V}(P_m, \, h_{k-1} \setminus h_k)|} = \frac{1}{\sum_{j \in i_{k-1} \downarrow (P)}\emph{Order}(i_{k-1} \downarrow j, \, P)+ i_{k-1} \downarrow_{\Sigma}(P)-1}$$ and for all sufficiently large $m$ the denominators of any of the fractions involved vanishes, then $$\lim_{m \rightarrow \infty}\frac{|\mathcal{V}(P_m, \, h_k)|}{|\mathcal{V}(P_m, \, h_{k-1})|} = \frac{1}{\sum_{j \in i_{k-1} \downarrow (P)}\emph{Order}(i_{k-1} \downarrow j, \, P)+ i_{k-1} \downarrow_{\Sigma}(P)}.$$
\end{lem}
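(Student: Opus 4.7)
The proof is a direct parallel of Lemma~\ref{bridgeRatioLem}, obtained by substituting $h_k$ for $h_{q+1}$ and $h_{k-1}$ for $h_q$ and adjusting the constants appropriately. The only substantive change is that the numerator $\text{Order}(i_q \downarrow i_{q+1}, \, P)$ appearing in Lemma~\ref{bridgeRatioLem} is here replaced by $1$, which reflects the fact that the terminal label $x_k = f \in \Sigma$ is (by the standing assumption in definition~\ref{popOfRolloutsDefn} that terminal labels within a population are all distinct) matched by exactly one rollout-ending, rather than by all extensions sharing an equivalence class label.

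My plan is as follows. First I would observe the disjoint union decomposition
$$\mathcal{V}(P_m, \, h_{k-1}) \;=\; \mathcal{V}(P_m, \, h_k) \;\uplus\; \mathcal{V}(P_m, \, h_{k-1} \setminus h_k),$$
which holds tautologically: a rollout fits $h_{k-1}$ iff it either fits $h_k$ or fits $h_{k-1}$ and not $h_k$. Writing
$$N \;=\; \sum_{j \in i_{k-1} \downarrow (P)}\text{Order}(i_{k-1} \downarrow j, \, P) \;+\; i_{k-1} \downarrow_{\Sigma}(P),$$
the hypothesis becomes $|\mathcal{V}(P_m, \, h_{k-1} \setminus h_k)| = (N-1)\,|\mathcal{V}(P_m, \, h_k)|$, so the disjoint union gives $|\mathcal{V}(P_m, \, h_{k-1})| = N\,|\mathcal{V}(P_m, \, h_k)|$, which is the desired identity for the finite-$m$ case.

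For the limiting statement I would apply the same algebraic trick used in the proof of Lemma~\ref{bridgeRatioLem}: divide numerator and denominator by $|\mathcal{V}(P_m, \, h_k)|$ (nonzero for sufficiently large $m$ by hypothesis) to write
$$\frac{|\mathcal{V}(P_m, \, h_k)|}{|\mathcal{V}(P_m, \, h_{k-1})|} \;=\; \frac{1}{\,1 + \dfrac{|\mathcal{V}(P_m, \, h_{k-1} \setminus h_k)|}{|\mathcal{V}(P_m, \, h_k)|}\,},$$
take $m \to \infty$ on both sides using continuity of the elementary arithmetic operations, substitute the assumed limit $1/(N-1)$, and simplify to $1/N$.

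There is essentially no technical obstacle here; the entire content of the argument is bookkeeping. The one point that deserves a line of verbal explanation is why the ``$-1$'' in the denominator of the assumed ratio yields the clean formula $1/N$ in the conclusion: the ``extra'' unit contributed when $\mathcal{V}(P_m, h_k)$ is added back to $\mathcal{V}(P_m, h_{k-1} \setminus h_k)$ precisely fills in the missing summand corresponding to the terminal label $x_k = f$ itself (counted once by $i_{k-1} \downarrow_\Sigma(P)$), restoring the full sum $N$. The non-vanishing hypotheses in both halves of the statement ensure that every division performed above is legitimate, exactly as in Lemma~\ref{bridgeRatioLem}.
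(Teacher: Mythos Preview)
Your proposal is correct and follows essentially the same approach as the paper, which simply introduces this lemma with the phrase ``Entirely analogously'' to Lemma~\ref{bridgeRatioLem} and gives no separate proof. Your disjoint-union decomposition and the algebraic manipulation $\frac{|\mathcal{V}(P_m,h_k)|}{|\mathcal{V}(P_m,h_{k-1})|} = \frac{1}{1 + |\mathcal{V}(P_m,h_{k-1}\setminus h_k)|/|\mathcal{V}(P_m,h_k)|}$ mirror the proof of Lemma~\ref{bridgeRatioLem} exactly, with the numerator $\text{Order}(i_q\downarrow i_{q+1},P)$ replaced by $1$ as you note.
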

To estimate or, in the special case of homologous population $P$, to compute exactly, the ratios $\frac{|\mathcal{V}(P_m, \, h_{q+1})|}{|\mathcal{V}(P_m, \, h_q \setminus h_{q+1})|}$ the following strategy will be employed. For a given $m \in \mathbb{N}$ consider the set of all populations $\mathcal{V}(P_m, \, h_q)$ (i.e. the set of these populations in $[P_m]$ the first individual of which fits the schema $h_q$). Let now $\pi_{q, \, m}$ denote the uniform probability measure on the set $\mathcal{V}(P_m, \, h_q)$. We then have
\begin{equation}\label{ratioOfSizesAndDistribEq}
\frac{|\mathcal{V}(P_m, \, h_{q+1})|}{|\mathcal{V}(P_m, \, h_q \setminus h_{q+1})|} = \frac{\frac{|\mathcal{V}(P_m, \, h_{q+1})|}{|\mathcal{V}(P_m, \, h_q)|}}{\frac{|\mathcal{V}(P_m, \, h_q) \setminus h_{q+1})|}{|\mathcal{V}(P_m, \, h_q)|}} = \frac{\pi_{q, \, m}(\mathcal{V}(P_m, \, h_{q+1}))}{\pi_{q, \, m}(\mathcal{V}(P_m, \, h_q \setminus h_{q+1}))}
\end{equation}
and, more generally, $\forall$ set of rollouts $S$,
\begin{equation}\label{ratioOfSizesAndDistribGenEq}
\frac{|\mathcal{V}(P_m, \, h_{q+1} \cap S)|}{|\mathcal{V}(P_m, \, h_q \setminus h_q)|} = \frac{\frac{|\mathcal{V}(P_m, \, h_{q+1} \cap S)|}{|\mathcal{V}(P_m, \, h_q)|}}{\frac{|\mathcal{V}(P_m, \, h_q) \setminus h_{q+1})|}{|\mathcal{V}(P_m, \, h_q)|}} = \frac{\pi_{q, \, m}(\mathcal{V}(P_m, \, h_{q+1} \cap S))}{\pi_{q, \, m}(\mathcal{V}(P_m, \, h_q \setminus h_{q+1}))}
\end{equation}
The idea behind equations~\ref{ratioOfSizesAndDistribEq} and~\ref{ratioOfSizesAndDistribGenEq} is to construct a Markov chain with a uniform stationary distribution on the state space $\mathcal{V}(P_m, \, h_q)$ thereby opening the door to an application of lemma~\ref{estimStationaryRatiosLem}. It seems the easiest construction to accomplish our task uses proposition~\ref{symmetricTransMatrixLem}. Recall the transformations of the form $\chi_{i, \, x, \, y}$ as in definitions~\ref{rolloutPartCrossDefn} and~\ref{recombActOnPopsDef} from definition~\ref{RecombStagePopTransDefn}. We now construct our Markov chain, call it $\mathcal{M}_q$, on the set $\mathcal{V}(P_m, \, h_q)$ where $q < k$ as follows: given a population of rollouts $Q_t \in \mathcal{V}(P_m, \, h_q)$ at time $t$, let $(i_q, \, x)$ be the state in the first rollout and $q^{\text{th}}$ position in the population $Q_t$. Consider the set
$$\text{States}_m (i_q \downarrow Q_t) = \{(j, \, z) \, | \, j \in i_q \downarrow (Q_t), \, z \in A \times m \text{ and the state } (j, \, z)$$
$$\text{appears in the population }Q_t \text{ following a state with equivalence class }i_q\} \cup$$
\begin{equation}\label{restrictSucfamDefEq}
\cup \{(f, j) \, | \, 1 \leq j \leq m \text{ and } f \in i_q \downarrow_{\Sigma} P\}.
\end{equation}
Now select a state or a terminal label; call either one of these $v$, from the set finite set $\text{States}_m (i_q \downarrow Q_t)$ uniformly at random. Since each state appears uniquely in a population $Q_t$, by definition of the set $\text{States}_m (i_q \downarrow Q_t)$ in \ref{restrictSucfamDefEq}, the state preceding the element $v$ selected from $\text{States}_m (i_q \downarrow Q_t)$, call it $u$, is of the form $u = (i_q, \, y)$ where $y \in A \times m$. Now let $Q_{t+1} = \chi_{i_q, \, x, \, y}(Q_t)$. Notice that there are two mutually exclusive cases here:

\textit{Case 1:} The states $u$ and $(i_q, \, x)$ appear in different rollouts (or, equivalently, the state $u$ does not appear in the first rollout since the state  $(i_q, \, x)$ does by definition). In this case $Q_{t+1} \neq Q_t$ and the state in the first rollout of the population $Q_{t+1}$ in the $q+1^{\text{st}}$ position is $v$. In this case we will say that the element $v$ is \emph{mobile}.

\textit{Case 2:} The states $u$ and $(i_q, \, x)$ appear in the same rollout (of course, it has to be the first rollout). In this case $Q_{t+1} = Q_t$. We will say that the element $v$ is \emph{immobile}.

Notice that in either of the cases, the population $Q_{t+1} \in \mathcal{V}(P_m, \, h_q)$ so that the Markov process is well defined on the set of populations $\mathcal{V}(P_m, \, h_q) \subseteq [P_m]$. We now emphasize the following simple important facts:
\begin{lem}\label{constSizeLem}
$\forall \, Q \in \mathcal{V}(P_m, \, h_q)$ $|\emph{States}_m (i_q \downarrow Q)| = m \cdot |\emph{States}_1 (i_q \downarrow P)|$ and \\
$|\emph{States}_1 (i_q \downarrow P)| = \sum_{j \in i_q \downarrow (P)}\emph{Order} (i_q \downarrow j, \, P)+i_q \downarrow_{\Sigma}(P)$
\end{lem}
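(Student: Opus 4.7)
The plan is to establish the two equalities by combining the invariance properties collected in Remark~\ref{equivClassIndepRem} with the scaling identities of Proposition~\ref{popRatioFacts}, after splitting the set $\text{States}_m(i_q \downarrow Q)$ into its non-terminal and terminal parts according to the union in~\ref{restrictSucfamDefEq}. The starting observation is that $\mathcal{V}(P_m,h_q)\subseteq [P_m]$, so any $Q \in \mathcal{V}(P_m,h_q)$ equals $F(P_m)$ for some $F \in \mathcal{F}$; hence by Remark~\ref{equivClassIndepRem} we have $i_q\downarrow(Q)=i_q\downarrow(P_m)$, $i_q\downarrow_{\Sigma}(Q)=i_q\downarrow_{\Sigma}(P_m)$, and $\text{Order}(i_q\downarrow j,Q)=\text{Order}(i_q\downarrow j,P_m)$ for every $j$. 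This already reduces the first equality to an assertion about $P_m$ rather than $Q$.

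Next I would handle the non-terminal contribution to $|\text{States}_m(i_q\downarrow Q)|$, namely the number of pairs $(j,z)\in\mathbb{N}\times(A\times m)$ such that the state $(j,z)$ appears in $Q$ immediately after some state of equivalence class $i_q$. Because all states within a population are formally distinct (Definition~\ref{popOfRolloutsDefn}), distinct occurrences of such successor states yield distinct labels $(j,z)$, so this count equals $\sum_{j\in i_q\downarrow(Q)}\text{Order}(i_q\downarrow j,Q)$. Applying the invariance above, then Proposition~\ref{popRatioFacts}, this rewrites as
\[
\sum_{j\in i_q\downarrow(P_m)}\text{Order}(i_q\downarrow j,P_m)=\sum_{j\in i_q\downarrow(P)} m\cdot\text{Order}(i_q\downarrow j,P).
\]
For the terminal contribution, the definition in~\ref{restrictSucfamDefEq} gives exactly $m\cdot|\Sigma\cap i_q\downarrow(P)|=m\cdot i_q\downarrow_{\Sigma}(P)$ pairs $(f,j)$, independently of $Q$. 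Summing the two contributions yields $|\text{States}_m(i_q\downarrow Q)|=m\bigl(\sum_{j\in i_q\downarrow(P)}\text{Order}(i_q\downarrow j,P)+i_q\downarrow_{\Sigma}(P)\bigr)$, which is precisely $m\cdot|\text{States}_1(i_q\downarrow P)|$ once the second assertion is proved.

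The second assertion is then obtained by specializing the same counting argument to $m=1$ and $Q=P$: the non-terminal successors of states of class $i_q$ contribute $\sum_{j\in i_q\downarrow(P)}\text{Order}(i_q\downarrow j,P)$, and the terminal successors contribute $i_q\downarrow_{\Sigma}(P)$. Note that no genuine obstacle arises here, since the only subtlety — that the formal labels $(j,z)$ inside $Q$ may differ from those inside $P_m$ after a non-trivial recombination — is precisely what Remark~\ref{equivClassIndepRem} neutralises: the equivalence-class structure and the order-counts are preserved by every transformation in $\mathcal{F}$, so the arithmetic of the count survives unchanged. The mildly delicate point to flag when writing out the details will be the notational conflation in~\ref{restrictSucfamDefEq} between the number $i_q\downarrow_{\Sigma}(P)$ and the corresponding set $\Sigma\cap i_q\downarrow(P)$; I would clarify this in one line so that the terminal portion of $\text{States}_m(i_q\downarrow Q)$ is unambiguously of size $m\cdot i_q\downarrow_{\Sigma}(P)$.
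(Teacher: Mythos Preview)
Your proposal is correct and follows essentially the same route as the paper: invoke Remark~\ref{equivClassIndepRem} to pass from $Q$ to $P_m$, then use Proposition~\ref{popRatioFacts} to extract the factor of $m$, with the second identity obtained directly from the definitions (you do this by specializing to $m=1$, $Q=P$, which amounts to the same thing). The paper's write-up is terser and proves the second identity first, but the ingredients and logic are identical; your explicit split into non-terminal and terminal contributions and your remark about the notational conflation in~\ref{restrictSucfamDefEq} are helpful clarifications rather than a different argument.
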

\begin{proof}
The fact that $|\text{States}_1 (i_q \downarrow P)| = \sum_{j \in i_q \downarrow (P)}\text{Order} (i_q \downarrow j, \, P)+i_q \downarrow_{\Sigma}(P)$ follows directly from the definitions. Definition of the set $\text{States}_m (i_q \downarrow Q)$ in \ref{restrictSucfamDefEq} together with remark~\ref{equivClassIndepRem} tell us that $\text{States}_m (i_q \downarrow Q) = \text{States}_1 (i_q \downarrow P_m)$ (where $P_m$ plays the role of $P$ for the time being) so that $$\text{States}_m (i_q \downarrow Q) = |\text{States}_1 (i_q \downarrow P_m)| = \sum_{j \in i_q \downarrow (P_m)}\text{Order} (i_q \downarrow j, \, P_m)+i_q \downarrow_{\Sigma}(P_m)=$$$$\overset{\text{by proposition~\ref{popRatioFacts}}}{=}\sum_{j \in i_q \downarrow (P)}m \cdot \text{Order} (i_q \downarrow j, \, P)+m \cdot i_q \downarrow_{\Sigma}(P) = $$$$= m \cdot \left(\sum_{j \in i_q \downarrow (P)}\text{Order} (i_q \downarrow j, \, P)+i_q \downarrow_{\Sigma}(P) \right) \overset{\text{by the already proven fact}}{=} m \cdot |\text{States}_1 (i_q \downarrow P)|.$$
\end{proof}
Another very simple important observation is the following:
\begin{lem}\label{transProbOfChainObservLem}
Given any two populations $Q$ and $Q' \in \mathcal{V}(P_m, \, h_q)$, let $p^q_{Q \rightarrow Q'}$ denote the transition probability of the Markov chain $\mathcal{M}_q$ as constructed above. Then either $p^q_{Q \rightarrow Q'} = 0$ or $p^q_{Q \rightarrow Q'} = \frac{1}{m \cdot |\emph{States}_1 (i_q \downarrow P)|}$. Moreover, $p^q_{Q \rightarrow Q'} = p^q_{Q' \rightarrow Q}$ and the uniform distribution is a stationary distribution of the Markov chain $\mathcal{M}_q$.
\end{lem}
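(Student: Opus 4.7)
My plan is to unfold the one-step dynamics of $\mathcal{M}_q$, show that for $Q\neq Q'$ at most one choice of $v$ drives $Q\to Q'$, and then deduce symmetry of the transition matrix from the involution property of $\chi_{i_q,x,y}$. Once symmetry is established, Proposition~\ref{symmetricTransMatrixLem} immediately yields that the uniform distribution on $\mathcal{V}(P_m,h_q)$ is stationary.

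First, I would invoke Lemma~\ref{constSizeLem} to record that $|\mathrm{States}_m(i_q\downarrow Q)|=m\cdot|\mathrm{States}_1(i_q\downarrow P)|$ is a constant independent of the specific $Q\in\mathcal{V}(P_m,h_q)$. Since the chain picks $v$ uniformly from this set, each individual value of $v$ contributes exactly $1/(m\cdot|\mathrm{States}_1(i_q\downarrow P)|)$ to the outgoing probability mass from $Q$.

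Second, I would show that for $Q'\neq Q$ at most one $v$ produces $Q_{t+1}=Q'$. Because every state occurs in a unique rollout of the population, the choice of $v$ determines its predecessor $u=(i_q,y)$, hence the value of $y$, hence the operator $\chi_{i_q,x,y}$. For mobile $v$ (those whose predecessor lies in a rollout other than the first), distinct values of $y$ necessarily lie in distinct rollouts, so the associated crossovers swap different pairs of suffixes and therefore yield pairwise distinct populations; the immobile $v$ all leave $Q$ unchanged by Definition~\ref{recombActOnPopsDef}. This delivers the claimed dichotomy for off-diagonal transitions.

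Third, I would derive symmetry by exploiting that $\chi_{i_q,x,y}$ is an involution (Remark~\ref{BijectRem}). Writing out Definition~\ref{rolloutPartCrossDefn} shows that if $Q'=\chi_{i_q,x,y}(Q)$ was produced by a mobile $v$ (namely the successor of $(i_q,y)$ in $Q$), then inside $Q'$ the state $(i_q,y)$ now occupies position $q$ of the first rollout, while $(i_q,x)$ has migrated into the rollout that formerly contained $(i_q,y)$; moreover, the successor of $(i_q,x)$ in $Q'$ is precisely the original element $v$. Selecting $v'=v$ from $Q'$ is therefore mobile and applies $\chi_{i_q,y,x}=\chi_{i_q,x,y}$, returning $Q$, so $p^q_{Q\to Q'}=p^q_{Q'\to Q}$; diagonal entries are trivially symmetric, so the full matrix is symmetric and Proposition~\ref{symmetricTransMatrixLem} supplies the uniform stationary distribution. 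I expect the main obstacle to lie in this last step: the explicit position bookkeeping needed to certify that $(i_q,x)$ really winds up outside the first rollout of $Q'$ so the reverse move is genuinely mobile, which must be extracted carefully from the suffix-swapping mechanics of non-homologous crossover, particularly in the non-homologous regime where equivalent states may coexist at different heights within a single rollout.
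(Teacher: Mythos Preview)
Your approach matches the paper's almost exactly: invoke Lemma~\ref{constSizeLem} for the constant normalizer, argue that a nontrivial transition is caused by a unique mobile $v$, and then establish symmetry of the transition matrix so that Proposition~\ref{symmetricTransMatrixLem} gives the uniform stationary distribution.

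There is, however, a concrete bookkeeping slip in your third step. If $(i_q,x)$ sits at position $q$ of the first rollout of $Q$ with successor $w$, and $(i_q,y)$ sits in rollout $j$ with successor $v$, then after applying $\chi_{i_q,x,y}$ the population $Q'$ has $(i_q,y)$ at position $q$ of rollout $1$ followed by $v$, while $(i_q,x)$ has moved to rollout $j$ followed by $w$. Thus the successor of $(i_q,x)$ in $Q'$ is $w$, not $v$; selecting $v'=v$ from $Q'$ would pick an element whose predecessor is $(i_q,y)$ in rollout $1$, i.e.\ an immobile element, and the chain would stay at $Q'$. The correct reverse selection is $v'=w$: its predecessor in $Q'$ is $(i_q,x)$, which now lies outside the first rollout, so $w$ is mobile and triggers $\chi_{i_q,y,x}=\chi_{i_q,x,y}$, returning $Q$. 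With this correction your argument goes through verbatim, and it is precisely the mechanism the paper uses (the paper phrases it as ``the state $(i_q,x)$ \ldots appears in $Q'$ in some rollout that is not the first one \ldots and it is also a member of the set $\text{States}_m(i_q\downarrow Q')$'', which should be read as saying that its successor is).
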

\begin{proof}
From the construction it is clear that if $p^q_{Q \rightarrow Q'} \neq 0$ then there must be an element $s \in \text{States}_m (i_q \downarrow Q)$ which appears in a rollout in the population $Q$ different from the first one and it is the state at the $q^{th}$ position of the first rollout of the population $Q'$ while definition~\ref{RecombStagePopTransDefn} tells us that the state $(i_q, \, x)$ in the $q^{th}$ position of the first rollout of the population $Q$ appears in $Q'$ in some rollout that is not the first one (the former position of the state $s$ that is now in position $q$ of the first rollout of $Q'$) and it is also a member of the set $\text{States}_m (i_q \downarrow Q')$ according to the way $\text{States}_m (i_q \downarrow Q')$ is introduced in \ref{restrictSucfamDefEq}. According to lemma~\ref{constSizeLem} $\text{States}_m (i_q \downarrow Q) = \text{States}_m (i_q \downarrow Q') = m \cdot |\text{States}_1 (i_q \downarrow Q)|$ so that the desired conclusion that $p^q_{Q \rightarrow Q'} = p^q_{Q' \rightarrow Q}$ follows from the construction of the Markov chain $\mathcal{M}_q$. The uniform probability distribution is a stationary distribution of the Markov chain $\mathcal{M}_q$ since we have just shown that the Markov transition matrix is symmetric (see also proposition~\ref{symmetricTransMatrixLem}).
\end{proof}
Recall the generalized transition probabilities introduced in definition~\ref{transProbDefn}. For the remaining part of this section it is convenient to introduce the following definition:
\begin{defn}\label{numbOfMobileElmsDefn}
Given a population $Q \in \mathcal{V}(P_m, \, h_{q+1})$, let $\text{Mobile}_q(Q)$ denote the number of mobile elements  (see \textit{case} 1 above) in the set $\text{States}_m (i_q \downarrow Q)$ that move the population $Q$ away from the set $\mathcal{V}(P_m, \, h_{q+1})$ (and hence, into the set $\mathcal{V}(P_m, \, h_q \setminus h_{q+1})$) under the application of the Markov chain $\mathcal{M}_q$ as constructed above. Dually, given $Q \in \mathcal{V}(P_m, \, h_q \setminus h_{q+1})$, let $\text{Mobile}_q(Q)$ denote the number of mobile elements in the set $\text{States}_m (i_q \downarrow Q)$ that move the population $Q$ away from the set $\mathcal{V}(P_m, \, h_q \setminus h_{q+1})$ (and hence, into the set $\mathcal{V}(P_m, \, h_{q+1})$).
\end{defn}
Suppose, for the time being, that the set $\mathcal{V}(P_m, \, h_{q+1}) \neq \emptyset$. Given a population $Q \in \mathcal{V}(P_m, \, h_{q+1})$, notice that
\begin{equation}\label{numbOfMovingTransUpperEq}
\text{Mobile}_q(Q) \leq \begin{cases}
\sum_{j \in i_q \downarrow Q \text{ and } j \neq i_{q+1}}\text{Order}(i_q \downarrow j)(Q)+i_q \downarrow_{\Sigma}(Q) & \text{if }q<k-1\\
\sum_{j \in i_q \downarrow Q}\text{Order}(i_q \downarrow j)(Q)+i_q \downarrow_{\Sigma}(Q)-m & \text{if }q = k-1
\end{cases}
\end{equation}
Notice that in case the population $P$ is homologous (and hence so are $P_m$ and $Q$) there are no immobile elements in the population $Q$ so that the inequality~\ref{numbOfMovingTransUpperEq} turns into an exact equation. In general, from \textit{case} 2 above it is clear that the total number of all the immobile elements is crudely bounded above by the height of the first rollout in the population $Q$, $H_1(Q)$. We now obtain a lower bound on the total number of mobile elements in the set $\text{States}_m(i_q \downarrow Q)$ that move the population $Q$ away from the set $\mathcal{V}(P_m, \, h_{q+1})$ into the set $\mathcal{V}(P_m, \, h_q \setminus h_{q+1})$: this number is at least
$$\text{Mobile}_q(Q) \geq$$
\begin{equation}\label{numbOfMovingTransLowerEq}
\geq \begin{cases}
\sum_{j \in i_q \downarrow Q \text{ and } j \neq i_{q+1}}\text{Order}(i_q \downarrow j)(Q)+i_q \downarrow_{\Sigma}(Q)-H_1(Q) & \text{if }q<k-1\\
\sum_{j \in i_q \downarrow Q}\text{Order}(i_q \downarrow j)(Q)+i_q \downarrow_{\Sigma}(Q)- m - H_1(Q)& \text{if }q = k-1
\end{cases}
\end{equation}
Analogously, if the population $Q \in \mathcal{V}(P_m, \, h_q \setminus h_{q+1})$ then the total number of mobile elements in the set $\text{States}_m (i_q \downarrow Q)$ that move the population $Q$ away from the set $\mathcal{V}(P_m, \, h_q \setminus h_{q+1})$ (and hence, into the set $\mathcal{V}(P_m, \, h_{q+1})$)
\begin{equation}\label{numbOfMovingTransUpperInEq}
\text{Mobile}_q(Q) \leq \begin{cases}
\text{Order}(i_q \downarrow i_{q+1})(Q) & \text{if }q<k-1\\
m & \text{if }q = k-1
\end{cases}
\end{equation}
and, as before, the inequality turns into an exact equation in the case when $Q$ is a homologous population. At the same time
\begin{equation}\label{numbOfMovingTransLowerInEq}
\text{Mobile}_q(Q) \geq \begin{cases}
\text{Order}(i_q \downarrow i_{q+1})(Q) - H_1(Q) & \text{if }q<k-1\\
m - H_1(Q) & \text{if }q = k-1
\end{cases}
\end{equation}
In view of proposition~\ref{popRatioFacts} and remark~\ref{equivClassIndepRem} inequalities~\ref{numbOfMovingTransUpperEq}, \ref{numbOfMovingTransLowerEq}, \ref{numbOfMovingTransUpperInEq} and \ref{numbOfMovingTransLowerInEq} can be rewritten verbatim replacing $\text{Order}(i_q \downarrow i_{q+1})(Q)$ with $m \cdot \text{Order}(i_q \downarrow i_{q+1})(P)$, and $\text{Order}(i_q \downarrow j)(Q)$ with $m \cdot \text{Order}(i_q \downarrow j)(P)$.

For the case of homologous population $Q$ the situation is particularly simple:
\begin{lem}\label{homologousPopTransLem}
Suppose the population $P$ is homologous. Suppose further, that neither one of the sets $\mathcal{V}(P_m, \, h_{q+1})$ and $\mathcal{V}(P_m, \, h_q \setminus h_{q+1})$ is empty. Then $\forall \, m \in \mathbb{N}$ we have
$$p^q_{\mathcal{V}(P_m, \, h_{q+1}) \rightarrow \mathcal{V}(P_m, \, h_q \setminus h_{q+1})} = \begin{cases}
\frac{\sum_{j \in i_q \downarrow P \emph{ and } j \neq i_{q+1}}\emph{Order}(i_q \downarrow j)(P)+i_q \downarrow_{\Sigma}(P)}{\sum_{j \in i_q \downarrow P}\emph{Order}(i_q \downarrow j)(P)+i_q \downarrow_{\Sigma}(P)} & \text{if }q<k-1 \\
\frac{\sum_{j \in i_q \downarrow P}\emph{Order}(i_q \downarrow j)(P)+i_q \downarrow_{\Sigma}(P)-1}{\sum_{j \in i_q \downarrow P}\emph{Order}(i_q \downarrow j)(P)+i_q \downarrow_{\Sigma}(P)} & \text{if }q=k-1
\end{cases},$$
$$p^q_{\mathcal{V}(P_m, \, h_q \setminus h_{q+1}) \rightarrow \mathcal{V}(P_m, \, h_{q+1})} = \begin{cases}
\frac{\emph{Order}(i_q \downarrow i_{q+1})(P)+i_q \downarrow_{\Sigma}(P)}{\sum_{j \in i_q \downarrow P}\emph{Order}(i_q \downarrow j)(P)+i_q \downarrow_{\Sigma}(P)} & \text{if }q<k-1 \\
\frac{1}{\sum_{j \in i_q \downarrow P}\emph{Order}(i_q \downarrow j)(P)+i_q \downarrow_{\Sigma}(P)} & \text{if }q=k-1
\end{cases}.$$
Consequently, $\forall \, m \in \mathbb{N}$
$$\frac{\pi_{q, \, m}(\mathcal{V}(P_m, \, h_{q+1}))}{\pi_{q, \, m}(\mathcal{V}(P_m, \, h_q \setminus h_{q+1}))} =
\begin{cases}
\frac{\emph{Order}(i_q \downarrow i_{q+1})(P)+i_q \downarrow_{\Sigma}(P)}{\sum_{j \in i_q \downarrow P \emph{ and }j \neq i_{q+1}}\emph{Order}(i_q \downarrow j)(P)+i_q \downarrow_{\Sigma}(P)} & \text{if }q<k-1 \\
\frac{1}{\sum_{j \in i_q \downarrow P}\emph{Order}(i_q \downarrow j)(P)+i_q \downarrow_{\Sigma}(P)-1} & \text{if }q=k-1
\end{cases}$$
\end{lem}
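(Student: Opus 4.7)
The plan is to compute the two generalized transition probabilities in the lemma directly by exact counting, then derive the ratio statement from equation~\ref{stationaryDistribRatioEq}. The Markov chain $\mathcal{M}_q$ on $\mathcal{V}(P_m, h_q)$ introduced just above the lemma has a symmetric transition matrix by lemma~\ref{transProbOfChainObservLem}, so by proposition~\ref{symmetricTransMatrixLem} the uniform distribution $\pi_{q, m}$ is stationary for $\mathcal{M}_q$; consequently $\pi_{q, m}(A) / \pi_{q, m}(B) = |A|/|B|$ for any $A, B \subseteq \mathcal{V}(P_m, h_q)$, and the stationarity identity (equation~\ref{stationaryDistribRatioEq}) gives
$$\frac{\pi_{q, m}(\mathcal{V}(P_m, h_{q+1}))}{\pi_{q, m}(\mathcal{V}(P_m, h_q \setminus h_{q+1}))} = \frac{p^q_{\mathcal{V}(P_m, h_q \setminus h_{q+1}) \to \mathcal{V}(P_m, h_{q+1})}}{p^q_{\mathcal{V}(P_m, h_{q+1}) \to \mathcal{V}(P_m, h_q \setminus h_{q+1})}}.$$
Thus only the two transition-probability formulas need to be established; the ratio formula follows mechanically.

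For the first transition probability, I would fix any $Q \in \mathcal{V}(P_m, h_{q+1})$ and count directly. Each next-step transition has probability $\frac{1}{m \cdot |\text{States}_1(i_q \downarrow P)|}$ (from lemma~\ref{constSizeLem} together with the uniform selection in the construction of $\mathcal{M}_q$), so the task reduces to counting the elements $v \in \text{States}_m(i_q \downarrow Q)$ whose selection moves $Q$ out of $\mathcal{V}(P_m, h_{q+1})$. The homologous hypothesis makes this count clean: each equivalence class occurs at a single uniform height throughout the population, so in the first rollout the class $i_q$ appears exclusively at position $q$ as $(i_q, x)$, and the unique immobile element is precisely the state or terminal at position $q+1$ of the first rollout. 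Since $Q \in \mathcal{V}(P_m, h_{q+1})$, that element matches the pattern $h_{q+1}$ forces at position $q+1$ (a class-$i_{q+1}$ state when $q < k-1$; an inflated copy $(x_k, \cdot)$ of the target terminal when $q = k-1$ and $x_k \in \Sigma$). In both sub-cases the immobile element lies inside $\mathcal{V}(P_m, h_{q+1})$ rather than outside, so it contributes nothing to the count of moves out and the crude upper bound~\ref{numbOfMovingTransUpperEq} is attained exactly. Dividing the attained count by $m \cdot |\text{States}_1(i_q \downarrow P)|$ produces a value independent of $Q$, and corollary~\ref{EstimOfTransCor} then promotes this pointwise equality to the generalized transition probability $p^q_{\mathcal{V}(P_m, h_{q+1}) \to \mathcal{V}(P_m, h_q \setminus h_{q+1})}$.

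The second transition probability is obtained by an entirely parallel count for $Q \in \mathcal{V}(P_m, h_q \setminus h_{q+1})$: now the immobile element at position $q+1$ of the first rollout has a class (or terminal-label type in the $q = k-1$ sub-case) that does \emph{not} match the pattern $h_{q+1}$ forces, so it cannot be among those $v$ which, after the swap, would place the first rollout into $\mathcal{V}(P_m, h_{q+1})$; the upper bound~\ref{numbOfMovingTransUpperInEq} is therefore also attained exactly, and corollary~\ref{EstimOfTransCor} again upgrades the pointwise equality. Combining with the previous paragraph via equation~\ref{stationaryDistribRatioEq} yields the ratio formula in the lemma. The main subtlety is the bookkeeping between the $q < k-1$ and $q = k-1$ sub-cases — one must correctly handle the fact that the inflation procedure (definition~\ref{popInflationDef}) produces $m$ formally distinct copies $(x_k, 1), \ldots, (x_k, m)$ of each terminal label, so that the ``target'' in the $q = k-1$ case is a size-$m$ pool rather than a single element; beyond this bookkeeping the argument is purely mechanical counting supported by the symmetric-lumping apparatus already developed.
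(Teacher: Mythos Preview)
Your proposal is correct and follows essentially the same route as the paper: show that in the homologous case the bounds~\ref{numbOfMovingTransUpperEq} and~\ref{numbOfMovingTransUpperInEq} are attained exactly, convert the exact mobile-element count into a pointwise transition probability via lemma~\ref{constSizeLem}, lift to the generalized transition probability via corollary~\ref{EstimOfTransCor}, and finish with equation~\ref{stationaryDistribRatioEq}. The only noteworthy difference is that the paper asserts bluntly that ``there are no immobile elements'' in the homologous case, whereas you correctly identify the single immobile element (the successor of $(i_q,x)$ in the first rollout) and argue that it lies on the ``wrong'' side of the $h_{q+1}$ pattern in each direction and so does not disturb the count; your version is slightly more precise but the substance is identical.
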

\begin{proof}
The first and the second conclusions follow from equations~\ref{numbOfMovingTransUpperEq} and \ref{numbOfMovingTransUpperInEq} combined with lemma~\ref{transProbOfChainObservLem}, definition~\ref{transProbDefn} and comment following equation~\ref{numbOfMovingTransLowerInEq}. The last conclusion is an immediate application of equation~\ref{stationaryDistribRatioEq} to the lumping quotient of the Markov chain $\mathcal{M}_q$ into the two states $A = \mathcal{V}(P_m, \, h_{q+1})$ and $B = \mathcal{V}(P_m, \, h_q \setminus h_{q+1})$.
\end{proof}
All that remains to do now to establish theorem~\ref{GeiringerLikeThmForMCTMain} in the special case of homologous population $P$ is to show that whenever $1 \leq q \leq k-1$ and none of the ``trivial extremes" takes place (see the beginning of this subsection), the sets $\mathcal{V}(P_m, \, h_{q+1})$ and $\mathcal{V}(P_m, \, h_q \setminus h_{q+1})$ are nonempty. This will be done later jointly with the corresponding fact needed for the general case. Meanwhile, we return to the estimation of the ratios of the form $\frac{\pi_{q, \, m}(\mathcal{V}(P_m, \, h_{q+1}))}{\pi_{q, \, m}(\mathcal{V}(P_m, \, h_q \setminus h_{q+1}))}$ in the general case. Suppose, for now, the following statement is true:
$$\forall \, q \text{ with } 1 \leq q < k \; \exists \, const(q) \in (0, \, 1) \text{ such that }\forall \, \text{ sufficiently large } m$$
\begin{equation}\label{prelimAssumpFact1}
\text{ we have } \rho_m(\mathcal{V}(P_m, \, h_{q+1}))> const(q) \text{ and } \rho_m(\mathcal{V}(P_m, \, h_q \setminus h_{q+1})) > const(q)
\end{equation}
In the general case of non-homologous population $P$ the presence of immobile states significantly complicates the situation. This is where Markov inequality comes to the rescue telling us that as $m$ increases the height of the first rollout (and hence the number of immobile states) being large becomes more and more rare event so that the bounds in the inequalities~\ref{numbOfMovingTransUpperEq} and \ref{numbOfMovingTransLowerEq} as well as inequalities~\ref{numbOfMovingTransUpperInEq} and \ref{numbOfMovingTransLowerInEq} get closer and closer together. We now proceed in detail. Recall the construction of the sets $U_m^{\delta}$ starting with equation~\ref{expSumOfRollsEqn} and ending with inequality~\ref{smallSetPropEq}. Let $\delta > 0$ be given. According to inequality~\ref{smallSetPropEq} $\exists \, M_1$ large enough so that $\forall \, m > M_1$ we have $\rho_m(U_{m}^{\delta \cdot const(q+1)}) < \delta \cdot const(q+1)$. where $const(q+1)$ is as in the assumption statement~\ref{prelimAssumpFact1}. We now have
$$\frac{\pi_{q, \, m}\left(\mathcal{V}(P_m, \, h_{q+1}) \cap U_{m}^{\delta \cdot const(q+1)}\right)}{\pi_{q, \, m}(\mathcal{V}(P_m, \, h_{q+1}))} \leq \frac{\pi_{q, \, m}\left(U_{m}^{\delta \cdot const(q+1)}\right)}{\pi_{q, \, m}(\mathcal{V}(P_m, \, h_{q+1}))}=$$$$=\frac{\frac{|U_{m}^{\delta \cdot const(q+1)}|}{|\mathcal{V}(P_m, \, h_q)|}}{\frac{|\mathcal{V}(P_m, \, h_{q+1})|}{|\mathcal{V}(P_m, \, h_q)|}}=\frac{|U_{m}^{\delta \cdot const(q+1)}|}{|(\mathcal{V}(P_m, \, h_{q+1})|} = \frac{\frac{|U_{m}^{\delta \cdot const(q+1)}|}{|[P_m]_{\widetilde{\mathcal{F}}}|}}{\frac{|\mathcal{V}(P_m, \, h_{q+1})|}{|[P_m]_{\widetilde{\mathcal{F}}}|}}=$$
\begin{equation}\label{condDistribRatioEq1}
 = \frac{\rho_m\left(U_{m}^{\delta \cdot const(q+1)}\right)}{\rho_m(\mathcal{V}(P_m, \, h_{q+1}))} \leq \frac{\delta \cdot const(q+1)}{const(q+1)} = \delta.
\end{equation}
Analogously, $$\frac{\pi_{q, \, m}\left(\mathcal{V}(P_m, \, h_q \setminus h_{q+1}) \cap U_{m}^{\delta \cdot const(q+1)}\right)}{\pi_{q, \, m}(\mathcal{V}(P_m, \, h_q \setminus h_{q+1}))} \leq $$
\begin{equation}\label{condDistribRatioEq2}
\leq \frac{\pi_{q, \, m}\left(U_{m}^{\delta \cdot const(q+1)}\right)}{\pi_{q, \, m}(\mathcal{V}(P_m, \, h_q \setminus h_{q+1}))}=\frac{\rho_m\left(U_{m}^{\delta \cdot const(q+1)}\right)}{\rho_m(\mathcal{V}(P_m, \, h_q \setminus h_{q+1}))} \leq \delta
\end{equation}
Now observe that as long as a population $Q \in \mathcal{V}(P_m, \, h_{q+1}) \setminus U_{m}^{\delta \cdot const(q+1)}$, the hight of the first rollout $H_1(Q) \leq (\delta \cdot const(q+1)) \cdot m \leq \delta \cdot m$ (recall how the sets of the form $U_m^{\epsilon}$ are introduced from~\ref{smallSetEqDefn}). Now, for $q < k-1$ inequalities~\ref{numbOfMovingTransUpperEq}, \ref{numbOfMovingTransLowerEq} and lemma~\ref{constSizeLem} tell us that for $\forall \, m > M_1$ we have
$$\frac{m \cdot \left(\left(\sum_{j \in i_q \downarrow (P), \, j \neq i_{q+1}}\text{Order}(i_q \downarrow j, \, P)\right) + i_q \downarrow_{\Sigma}(P)\right) - \delta \cdot m}{m \cdot |\text{States}_1(i_q \downarrow P)|} \leq$$
$$
\leq p_{Q \rightarrow \mathcal{V}(P_m, \, h_q \setminus h_{q+1})} \leq \frac{m \cdot \left(\left(\sum_{j \in i_q \downarrow (P), \, j \neq i_{q+1}}\text{Order}(i_q \downarrow j, \, P)\right) + i_q \downarrow_{\Sigma}(P)\right)}{m \cdot |\text{States}_1(i_q \downarrow P)|}
$$
so that dividing the numerator and the denominator by $m$ gives
$$\frac{\left(\sum_{j \in i_q \downarrow (P), \, j \neq i_{q+1}}\text{Order}(i_q \downarrow j, \, P)\right) + i_q \downarrow_{\Sigma} - \delta}{|\text{States}_1(i_q \downarrow P)|} \leq$$
\begin{equation}\label{TransProbEstimEq1}
\leq p_{Q \rightarrow \mathcal{V}(P_m, \, h_q \setminus h_{q+1})} \leq \frac{\sum_{j \in i_q \downarrow (P), \, j \neq i_{q+1}}\text{Order}(i_q \downarrow j, \, P) + i_q \downarrow_{\Sigma}(P)}{|\text{States}_1(i_q \downarrow P)|}
\end{equation}
Entirely analogous and, by now, well familiar to the reader reasoning with inequality~\ref{condDistribRatioEq2} playing the role of inequality~\ref{condDistribRatioEq1} shows that whenever $m > M_1$ and a population $Q \in \mathcal{V}(P_m, \, h_q \setminus h_{q+1}) \setminus U_{m}^{\delta \cdot const(q+1)}$ we have
\begin{equation}\label{TransProbEstimEq2}
\frac{\text{Order}(i_q \downarrow i_{q+1}, \, P) - \delta}{|\text{States}_1(i_q \downarrow P)|} \leq p_{Q \rightarrow \mathcal{V}(P_m, \, h_{q+1})} \leq \frac{\text{Order}(i_q \downarrow i_{q+1}, \, P)}{|\text{States}_1(i_q \downarrow P)|}
\end{equation}
Now inequalities~\ref{condDistribRatioEq1}, \ref{condDistribRatioEq2}, \ref{TransProbEstimEq1} and \ref{TransProbEstimEq2} allow us to apply lemma~\ref{estimStationaryRatiosLem} with $A = \mathcal{V}(P_m, \, h_{q+1})$, $B = \mathcal{V}(P_m, \, h_q \setminus h_{q+1})$ and $U = U_{m}^{\delta \cdot const(q+1)}$ and concluding that $\forall \, m > M_1$ we have $$\frac{(1 - \delta) \cdot \frac{\text{Order}(i_q \downarrow i_{q+1}, \, P) - \delta}{|\text{States}_1(i_q \downarrow P)|}}{(1-\delta) \cdot \left(\frac{\sum_{j \in i_q \downarrow (P), \, j \neq i_{q+1}}\text{Order}(i_q \downarrow j, \, P)+ i_q \downarrow_{\Sigma}(P)}{|\text{States}_1(i_q \downarrow P)|} \right) + \delta}\leq \frac{\pi_{q, \, m}(\mathcal{V}(P_m, \, h_{q+1}))}{\pi_{q, \, m}(\mathcal{V}(P_m, \, h_q \setminus h_{q+1}))} \leq$$$$\leq \frac{(1 - \delta) \cdot \frac{\text{Order}(i_q \downarrow i_{q+1}, \, P)}{|\text{States}_1(i_q \downarrow P)|}+\delta}{(1-\delta) \cdot \left(\frac{\sum_{j \in i_q \downarrow (P), \, j \neq i_{q+1}}\text{Order}(i_q \downarrow j, \, P)+ i_q \downarrow_{\Sigma}(P)-\delta}{|\text{States}_1(i_q \downarrow P)|} \right)}.$$ Multiplying the numerator and the denominator of the leftmost and the rightmost fractions by the constant $|\text{States}_1(i_q \downarrow P)|$ which does not depend on $m$ we obtain $$\frac{(1 - \delta)\cdot\left(\text{Order}(i_q \downarrow i_{q+1}, \, P) - \delta \cdot |\text{States}_1(i_q \downarrow P)|\right)}{(1 - \delta) \cdot \left(\sum_{j \in i_q \downarrow (P), \, j \neq i_{q+1}}\text{Order}(i_q \downarrow j, \, P)+ i_q \downarrow_{\Sigma}(P) \right) + \delta \cdot |\text{States}_1(i_q \downarrow P)|} \leq$$$$\leq \frac{\pi_{q, \, m}(\mathcal{V}(P_m, \, h_{q+1}))}{\pi_{q, \, m}(\mathcal{V}(P_m, \, h_q \setminus h_{q+1}))} \leq$$
\begin{equation}\label{mainInequalPfEq}
\frac{(1 - \delta)\cdot \text{Order}(i_q \downarrow i_{q+1}, \, P) + \delta \cdot |\text{States}_1(i_q \downarrow P)|}{(1 - \delta) \left(\sum_{j \in i_q \downarrow (P), \, j \neq i_{q+1}}\text{Order}(i_q \downarrow j, \, P)+ i_q \downarrow_{\Sigma}(P) - \delta \cdot |\text{States}_1(i_q \downarrow P)|\right)}
\end{equation}
Now simply observe that the leftmost and the rightmost sides of the inequality~\ref{mainInequalPfEq} are both differentiable (and, hence, continuous) functions of $\delta$ on the domain $(-0.5, \, 0.5)$ (notice that the denominators do not vanish on this domain thanks to the assumption that neither of the trivial extremes takes place). It follows immediately then that both, the leftmost and the rightmost sides of the inequality~\ref{mainInequalPfEq} converge to the same value, namely to the desired ratio $$R = \frac{\text{Order}(i_q \downarrow i_{q+1}, \, P)}{\sum_{j \in i_q \downarrow (P), \, j \neq i_{q+1}}\text{Order}(i_q \downarrow j, \, P)+ i_q \downarrow_{\Sigma}(P)}$$ as $\delta \rightarrow 0$. From the definition of a limit of a real-valued function at a point, it follows that given any $\epsilon > 0$ we can choose small enough $\delta > 0$ such that both, the leftmost and the rightmost sides of the inequality~\ref{mainInequalPfEq} are within $\epsilon$ error of $R$. We have now shown that depending on this $\delta$ we can then choose sufficiently large $M$ so that the ratio $\frac{\pi_{q, \, m}(\mathcal{V}(P_m, \, h_{q+1}))}{\pi_{q, \, m}(\mathcal{V}(P_m, \, h_q \setminus h_{q+1}))}$, being squeezed between the two quantities within the $\epsilon$ error of $R$, is itself within the error at most $\epsilon$ of $R$. In summary, we have finally proved the following
\begin{lem}\label{limitRatioNonhomologousLem1}
Assume that the statement in \ref{prelimAssumpFact1} is true. Then whenever $1 < q < k-1$ we have $$\lim_{m \rightarrow \infty}\frac{\pi_{q, \, m}(\mathcal{V}(P_m, \, h_{q+1}))}{\pi_{q, \, m}(\mathcal{V}(P_m, \, h_q \setminus h_{q+1}))} = \frac{\emph{Order}(i_q \downarrow i_{q+1}, \, P)}{\sum_{j \in i_q \downarrow (P), \, j \neq i_{q+1}}\emph{Order}(i_q \downarrow j, \, P)+ i_q \downarrow_{\Sigma}(P)}.$$
\end{lem}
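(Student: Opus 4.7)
The plan is to reduce the ratio of interest to a ratio of stationary-distribution values of a Markov chain whose stationary distribution I already understand, and then estimate the corresponding generalized transition probabilities. Concretely, I would use the Markov chain $\mathcal{M}_q$ on $\mathcal{V}(P_m, h_q)$ constructed above: its transition matrix is symmetric by lemma~\ref{transProbOfChainObservLem}, so proposition~\ref{symmetricTransMatrixLem} guarantees that its unique stationary distribution on each communicating class is the uniform distribution $\pi_{q,m}$. Lumping $\mathcal{M}_q$ to the two-state quotient whose states are $A = \mathcal{V}(P_m, h_{q+1})$ and $B = \mathcal{V}(P_m, h_q \setminus h_{q+1})$, equation~\ref{stationaryDistribRatioEq} reduces the target ratio $\pi_{q,m}(A)/\pi_{q,m}(B)$ to the ratio of generalized transition probabilities $p_{B \to A}/p_{A \to B}$.

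The main obstacle is the presence of \emph{immobile} elements in non-homologous populations, which prevents the clean application of corollary~\ref{EstimOfTransCor}: the per-population transition probabilities $p_{Q \to \cdot}$ are not all equal, and their spread is controlled only by the height $H_1(Q)$ of the first rollout, which can in principle be as large as $\mathrm{Total}(P_m) = m \cdot \mathrm{Total}(P)$. The key insight is that $H_1$ has $\rho_m$-expectation independent of $m$ (equation~\ref{expSumOfRollsEqn}), so populations with $H_1$ of order $m$ are Markov-inequality rare. I therefore plan to invoke lemma~\ref{estimStationaryRatiosLem} with the exceptional set $U := U_m^{\delta \cdot const(q+1)}$ of~\ref{smallSetEqDefn}, using Markov's inequality (lemma~\ref{markovInequality}) to bound $\rho_m(U)$ and assumption~\ref{prelimAssumpFact1} to upgrade this into control on the conditional masses $\pi_{q,m}(U \cap A)/\pi_{q,m}(A)$ and $\pi_{q,m}(U \cap B)/\pi_{q,m}(B)$ that lemma~\ref{estimStationaryRatiosLem} demands (cf.\ inequalities~\ref{condDistribRatioEq1} and \ref{condDistribRatioEq2}).

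For populations $Q$ outside $U$ the height $H_1(Q) \le \delta m$, so inequalities~\ref{numbOfMovingTransUpperEq}, \ref{numbOfMovingTransLowerEq}, \ref{numbOfMovingTransUpperInEq}, and \ref{numbOfMovingTransLowerInEq} pin the count of outgoing mobile states within an additive $\delta m$ error of the clean homologous count. Dividing by the normalizing denominator $|\mathrm{States}_m(i_q \downarrow Q)| = m \cdot |\mathrm{States}_1(i_q \downarrow P)|$ from lemma~\ref{constSizeLem} cancels $m$ entirely and yields pointwise (uniform over $Q \in A \setminus U$ and $Q \in B \setminus U$) upper and lower bounds on $p_{Q \to B}$ and $p_{Q \to A}$ whose discrepancy is $O(\delta)$. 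Feeding these four bounds together with the conditional-mass estimates into lemma~\ref{estimStationaryRatiosLem} produces the sandwich inequality~\ref{mainInequalPfEq} for $\pi_{q,m}(A)/\pi_{q,m}(B)$.

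The final step is a standard squeeze: the leftmost and rightmost expressions in~\ref{mainInequalPfEq} are rational functions of $\delta$ continuous at $\delta = 0$ (the denominators do not vanish precisely because we have excluded the trivial-extreme cases), and both evaluate at $\delta = 0$ to the claimed limiting ratio. Given $\epsilon > 0$ I would therefore first choose $\delta$ small enough that both endpoints of the sandwich lie within $\epsilon$ of the target ratio, and then choose $M$ large enough that~\ref{mainInequalPfEq} is valid for all $m > M$; this forces $\pi_{q,m}(A)/\pi_{q,m}(B)$ within $\epsilon$ of the target, giving the limit. The hard part throughout is the bookkeeping that ensures the Markov-inequality-controlled exceptional set is large enough to absorb all the immobile states yet small enough (relative to $const(q+1)$) for lemma~\ref{estimStationaryRatiosLem} to be nontrivial; once that bookkeeping is in place, the remaining manipulations are mechanical.
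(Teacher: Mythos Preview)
Your proposal is correct and follows essentially the same approach as the paper: you invoke the same auxiliary Markov chain $\mathcal{M}_q$, the same two-state lumping via equation~\ref{stationaryDistribRatioEq}, the same Markov-inequality control on the exceptional set $U_m^{\delta \cdot const(q+1)}$ combined with assumption~\ref{prelimAssumpFact1} to obtain inequalities~\ref{condDistribRatioEq1}--\ref{condDistribRatioEq2}, the same mobile-element bounds leading to~\ref{TransProbEstimEq1}--\ref{TransProbEstimEq2}, and the same application of lemma~\ref{estimStationaryRatiosLem} followed by a $\delta \to 0$ squeeze on~\ref{mainInequalPfEq}.
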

An entirely analogous argument shows the following:
\begin{lem}\label{limitRatioNonhomologousLem2}
Assume that the statement in \ref{prelimAssumpFact1} is true. Then $$\lim_{m \rightarrow \infty}\frac{\pi_{k-1, \, m}(\mathcal{V}(P_m, \, h_k))}{\pi_{k-1, \, m}(\mathcal{V}(P_m, \, h_{k-1} \setminus h_k))} =$$$$=\frac{1}{\sum_{j \in i_{k-1} \downarrow (P)}\emph{Order}(i_{k-1} \downarrow j, \, P)+ i_{k-1} \downarrow_{\Sigma}(P)-1}.$$
\end{lem}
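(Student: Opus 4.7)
The plan is to mirror the proof of Lemma~\ref{limitRatioNonhomologousLem1} essentially verbatim, substituting the $q = k-1$ branch of the mobility inequalities~\ref{numbOfMovingTransUpperEq}, \ref{numbOfMovingTransLowerEq}, \ref{numbOfMovingTransUpperInEq}, \ref{numbOfMovingTransLowerInEq} in place of the $q < k-1$ branch. First I would invoke the Markov chain $\mathcal{M}_{k-1}$ already constructed on $\mathcal{V}(P_m, h_{k-1})$: by Lemma~\ref{transProbOfChainObservLem} its transition matrix is symmetric, so the uniform measure $\pi_{k-1, m}$ is stationary, and lumping via the complementary pair $A = \mathcal{V}(P_m, h_k)$, $B = \mathcal{V}(P_m, h_{k-1} \setminus h_k)$ together with equation~\ref{stationaryDistribRatioEq} reduces the problem to estimating the two generalized transition probabilities $p^{k-1}_{A \to B}$ and $p^{k-1}_{B \to A}$.

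Next I would repeat the Markov-inequality argument of \ref{smallSetEqDefn}--\ref{condDistribRatioEq2} with the same exceptional set $U_m^{\delta \cdot const(k)}$ of ``tall first rollouts''. Hypothesis~\ref{prelimAssumpFact1} bounds $\rho_m(A)$ and $\rho_m(B)$ from below by $const(k)$, which forces the $\pi_{k-1, m}$-conditional masses of $U_m^{\delta \cdot const(k)}$ inside both $A$ and $B$ to be at most $\delta$ once $m$ is sufficiently large. For any $Q$ outside this exceptional set we have $H_1(Q) \leq \delta \cdot m$, so substituting the $q = k-1$ branches of \ref{numbOfMovingTransUpperEq}--\ref{numbOfMovingTransLowerInEq} and dividing the numerators and denominators by the constant $m \cdot |\text{States}_1(i_{k-1} \downarrow P)|$ provided by Lemma~\ref{constSizeLem}, the factors of $m$ cancel cleanly and one obtains, for $Q \in A \setminus U_m^{\delta \cdot const(k)}$,
$$\frac{|\text{States}_1(i_{k-1} \downarrow P)| - 1 - \delta}{|\text{States}_1(i_{k-1} \downarrow P)|} \leq p_{Q \to B} \leq \frac{|\text{States}_1(i_{k-1} \downarrow P)| - 1}{|\text{States}_1(i_{k-1} \downarrow P)|},$$
and for $Q \in B \setminus U_m^{\delta \cdot const(k)}$,
$$\frac{1 - \delta}{|\text{States}_1(i_{k-1} \downarrow P)|} \leq p_{Q \to A} \leq \frac{1}{|\text{States}_1(i_{k-1} \downarrow P)|}.$$

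Finally, feeding these four bounds into Lemma~\ref{estimStationaryRatiosLem} with $\epsilon = \delta$ and $U = U_m^{\delta \cdot const(k)}$ produces an explicit two-sided squeeze on $\pi_{k-1, m}(A)/\pi_{k-1, m}(B)$ by rational expressions in $\delta$ that are continuous at $\delta = 0$ and take the common value $1/(|\text{States}_1(i_{k-1} \downarrow P)| - 1)$ there; letting $\delta \to 0^+$ (after first choosing $m > M_1(\delta)$) delivers the asserted limit. The only real obstacle is bookkeeping: one must verify that $|\text{States}_1(i_{k-1} \downarrow P)| \geq 2$ so that the limit denominator does not vanish, but this is automatic, because if $f$ were the unique member of the set $\text{States}_1(i_{k-1} \downarrow P)$ then every first rollout fitting $h_{k-1}$ would automatically fit $h_k$, making $B$ empty and contradicting~\ref{prelimAssumpFact1}. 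This places us safely outside the ``trivial extremes'' already disposed of at the beginning of subsection~\ref{specificThmDerivationSubsect}, so the argument goes through.
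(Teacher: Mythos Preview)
Your proposal is correct and follows exactly the approach the paper intends: the paper's own ``proof'' of this lemma is the single sentence ``An entirely analogous argument shows the following,'' and what you have written is precisely that analogous argument, with the $q=k-1$ branches of the mobility bounds substituted, the same Markov-inequality exceptional set, and the same squeeze via Lemma~\ref{estimStationaryRatiosLem}. The only quibble is notational: the constant furnished by hypothesis~\ref{prelimAssumpFact1} for the pair $(\mathcal{V}(P_m,h_k),\,\mathcal{V}(P_m,h_{k-1}\setminus h_k))$ is indexed as $const(k-1)$ rather than $const(k)$, but this mirrors the paper's own loose indexing in the proof of Lemma~\ref{limitRatioNonhomologousLem1} and has no effect on the argument.
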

According to lemmas~\ref{bridgeRatioLem} and \ref{bridgeRatioLem1}, equations~\ref{ratioOfSizesAndDistribEq} and \ref{ratioOfSizesAndDistribGenEq}, lemmas~\ref{limitRatioNonhomologousLem1}, \ref{limitRatioNonhomologousLem2}, \ref{homologousPopTransLem} and equations~\ref{compRatioForMCTEq} and \ref{compRatioForMCTWithLimitEq}, all that remains to be proven to establish theorem~\ref{GeiringerLikeThmForMCTMain} is the following:
\begin{lem}\label{technConditionLem}
Suppose neither of the trivial extremes takes place. Then the statement in equation~\ref{prelimAssumpFact1} is true. Furthermore, in case of homologous recombination the statement is true for all $m$ (not only for large enough $m$).
\end{lem}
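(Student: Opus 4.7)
The plan is to establish lemma~\ref{technConditionLem} by induction on $q$, propagating the single claim that for each $q$ with $1 \leq q \leq k$ there is a positive constant $c_{q-1}$, depending only on $P$ and the schema $h$, such that $\rho_m(\mathcal{V}(P_m, h_q)) \geq c_{q-1}$ for all sufficiently large $m$ (and for every $m$ in the homologous case). The two bounds demanded by~\ref{prelimAssumpFact1}, one on $\mathcal{V}(P_m, h_{q+1})$ and one on $\mathcal{V}(P_m, h_q \setminus h_{q+1})$, will emerge as intermediate byproducts of the inductive step.

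For the base case $q = 1$, no transformation in $\mathcal{F}$ alters the leading action or the equivalence class occupying the first post-action position of any rollout (remark~\ref{equivClassIndepRem}), so the number of rollouts fitting $h_1 = (\alpha, i_1, \#)$ is constant across $[P_m]_{\widetilde{\mathcal{F}}}$ and equals $m \cdot \text{Order}(\alpha \downarrow i_1, P)$ by proposition~\ref{popRatioFacts}. The transposition invariance exploited around equation~\ref{indicatorExpEq} then yields
\[
\rho_m(\mathcal{V}(P_m, h_1)) \; = \; \frac{m \cdot \text{Order}(\alpha \downarrow i_1, P)}{b \cdot m} \; = \; \frac{\text{Order}(\alpha \downarrow i_1, P)}{b},
\]
a positive constant independent of $m$ under the non-triviality hypothesis, giving $c_0 = \text{Order}(\alpha \downarrow i_1, P)/b$.

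For the inductive step, given $\rho_m(\mathcal{V}(P_m, h_q)) \geq c_{q-1}$ for large $m$, I would work inside the Markov chain $\mathcal{M}_q$ on $\mathcal{V}(P_m, h_q)$ with its uniform stationary measure $\pi_{q,m}$, set $A = \mathcal{V}(P_m, h_{q+1})$, $B = \mathcal{V}(P_m, h_q \setminus h_{q+1})$ and $U = U_m^{\delta}$, and argue by a short case split on which of $\pi_{q,m}(A), \pi_{q,m}(B)$ is at least $1/2$. The inductive hypothesis combined with inequality~\ref{smallSetPropEq} forces $\pi_{q,m}(U) \leq \delta/c_{q-1}$ for all sufficiently large $m$, so on whichever side has mass at least $1/2$ the conditional ratio $\pi_{q,m}(U \cap \cdot)/\pi_{q,m}(\cdot)$ is at most $2\delta/c_{q-1}$. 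Off $U$ the pinches \ref{numbOfMovingTransUpperEq}--\ref{numbOfMovingTransLowerInEq} combined with lemma~\ref{constSizeLem} bound $p^q_{Q \to B}$ for $Q \in A \setminus U$ and $p^q_{Q \to A}$ for $Q \in B \setminus U$ below by explicit positive functions of $P$ and $\delta$ once numerator and denominator are divided by $m$. Applying the dual part of lemma~\ref{estimationOfTransLem} to the large-mass side and then lemma~\ref{littleTechnVarifyLem} to transfer the bound to the small-mass side gives a positive lower bound on whichever of $\pi_{q,m}(A), \pi_{q,m}(B)$ was not assumed to be at least $1/2$; multiplying through by $\rho_m(\mathcal{V}(P_m, h_q)) \geq c_{q-1}$ then delivers both constants required by~\ref{prelimAssumpFact1} and the next inductive hypothesis $\rho_m(\mathcal{V}(P_m, h_{q+1})) \geq c_q$.

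The hardest part of the proof---and the one I expect to require the most care---is the nonemptiness of $A$ and $B$, since lemma~\ref{littleTechnVarifyLem}, equation~\ref{stationaryDistribRatioEq}, and lemma~\ref{estimationOfTransLem} all tacitly use it. For $A$ I would argue constructively: given any $Q \in \mathcal{V}(P_m, h_q)$, the non-triviality condition $\text{Order}(i_q \downarrow i_{q+1}, P) > 0$ furnishes a consecutive pair $((i_q, a), (i_{q+1}, b))$ in $P$ and therefore at least $m$ distinct copies distributed across $P_m$; at least one such copy either lives in a rollout of $Q$ distinct from the first, permitting a $\chi_{i_q, \cdot, \cdot}$ crossover to splice its $(i_{q+1}, \cdot)$-tail into position $q{+}1$ of the first rollout, or it sits inside the first rollout itself, permitting a $\nu_{i_q, \cdot, \cdot}$ internal swap to pull the $i_{q+1}$-successor up to position $q{+}1$. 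Nonemptiness of $B$ is dual and rests on the companion non-triviality assumption $\sum_{j \neq i_{q+1}} \text{Order}(i_q \downarrow j, P) + i_q \downarrow_\Sigma (P) > 0$ (respectively the analogous condition at $q = k-1$). In the homologous setting each equivalence class appears at most once within any single rollout, so no immobile elements exist under $\mathcal{M}_q$; the pinches \ref{numbOfMovingTransUpperEq}--\ref{numbOfMovingTransLowerInEq} collapse to exact equalities, the Markov-inequality detour becomes redundant, and the argument carries the lower bound verbatim for every $m$.
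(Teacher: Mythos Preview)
Your approach is essentially the paper's: induction on $q$, the same base case via the invariance of $\text{Order}(\alpha\downarrow i_1,\cdot)$, the same inductive step inside the auxiliary chain $\mathcal{M}_q$ with a large-side/small-side case split, Markov's inequality to control $U_m^{\delta}$, the mobility pinches for pointwise transition bounds, and lemma~\ref{littleTechnVarifyLem} to push mass onto the small side. The organization and constants differ cosmetically but not structurally.

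One genuine slip: your $\nu$-based nonemptiness argument does not work. By definition~\ref{rolloutPartCrossDefn}, $\nu_{i_q,x,y}$ acting within a single rollout swaps only the two $i_q$-labelled states and leaves every other position, in particular position $q{+}1$, untouched; it does not drag any successor along. So if the needed $(i_q,\cdot)(i_{q+1},\cdot)$ pair sits inside the first rollout at positions $r,r{+}1$ with $r\neq q$, a single $\nu$ does not land you in $\mathcal{V}(P_m,h_{q+1})$. Fortunately this paragraph is redundant: once you know the large side minus $U_m^{\delta}$ is nonempty (which follows from $\pi_{q,m}(\text{large side})\geq 1/2$ and $\pi_{q,m}(U)\leq 2\delta/c_{q-1}$), pick any $Q$ there; the lower pinches~\ref{numbOfMovingTransLowerEq} or~\ref{numbOfMovingTransLowerInEq} give $p^q_{Q\to(\text{small side})}>0$ for $m$ large, which already forces the small side to be nonempty before you invoke lemma~\ref{littleTechnVarifyLem}. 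Equivalently, since $H_1(Q)\le \delta m$ for such $Q$, at most $\delta m$ of the $m\cdot\text{Order}(i_q\downarrow i_{q+1},P)\ge m$ pairs can live in the first rollout, so at least one lies elsewhere and your $\chi$-splice applies. Drop the $\nu$ clause and the argument is clean; the paper itself handles nonemptiness only implicitly through the same positivity of transition probabilities.
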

\begin{proof}
We proceed by induction on the index $q$. First of all, recall from the beginning of the current subsection that we have already shown that $\forall \, m \in \mathbb{N}$ we have $$\rho_m(\mathcal{V}(P_m, \, h_1)) \overset{\text{by lemma~\ref{mainGeiringerLikeLemma}}}{=} \lim_{t \rightarrow \infty}\Phi(h_1, \, P_m, \, t) = \frac{\text{Order}(\alpha \downarrow i_1, \, P)}{b}>0$$ where the last inequality holds because none of the trivial extremes takes place so that $\text{Order}(\alpha \downarrow i_1, \, P) \neq 0$ (recall that $\rho_m$ denotes the uniform probability distribution on $[P_m]$ so that $\rho_m(\mathcal{V}(P_m, \, h_1)) = \frac{|\mathcal{V}(P_m, \, h_1)|}{[P_m]}$). Since $\mathcal{V}(P_m, \, h_1) = \mathcal{V}(P_m, \, h_2) \uplus \mathcal{V}(P_m, \, h_1 \setminus h_2)$ we also have $\rho_m(\mathcal{V}(P_m, \, h_2)) + \rho_m(\mathcal{V}(P_m, \, h_1 \setminus h_2)) = \rho_m(\mathcal{V}(P_m, \, h_1)) = \frac{\text{Order}(\alpha \downarrow i_1, \, P)}{b} = const_0$ where $1 \geq const_0 > 0$ and $const_0$ is independent of $m$. It follows then that at least one of the following is true: $\rho_m(\mathcal{V}(P_m, \, h_2)) \geq \frac{const_0}{2}$ or $\rho_m(\mathcal{V}(P_m, \, h_1 \setminus h_2)) \geq \frac{const_0}{2}$. In the general case, choose $M_1$ large enough so that $\forall$ $m > M_1$ we have $\rho_m(U_m^{\frac{const_0}{4}}) \leq \frac{const_0}{4}$ (recall the part of the proof starting with equation~\ref{expSumOfRollsEqn} and ending with inequality~\ref{smallSetPropEq}). It follows then that either $$\rho_m\left(\mathcal{V}(P_m, \, h_2) \setminus U_m^{\frac{const_0}{4}}\right) \geq \frac{const_0}{4} \text{ or } \rho_m\left(\mathcal{V}(P_m, \, h_1 \setminus h_2) \setminus U_m^{\frac{const_0}{4}}\right) \geq \frac{const_0}{4}.$$ An already familiar argument exploiting corollary~\ref{EstimOfTransCor}, inequalities~\ref{numbOfMovingTransUpperEq}, \ref{numbOfMovingTransLowerEq}, \ref{numbOfMovingTransUpperInEq}, \ref{numbOfMovingTransLowerInEq} and lemma~\ref{constSizeLem} shows that, thanks to the assumption that no trivial extremes take place, and observing that $1 - \frac{const_0}{4} \geq \frac{1}{4}$ for all large enough $m$ the ratios $$\frac{p^1_{\left(\mathcal{V}(P_m, \, h_2) \setminus U_m^{\frac{const_0}{4}}\right) \rightarrow \mathcal{V}(P_m, \, h_1 \setminus h_2)}}{p^1_{\mathcal{V}(P_m, \, h_1 \setminus h_2) \rightarrow \mathcal{V}(P_m, \, h_2)}} \geq \kappa_1$$ and, likewise, $$\frac{p^1_{\left(\mathcal{V}(P_m, \, h_1 \setminus h_2) \setminus U_m^{\frac{const_0}{4}}\right) \rightarrow \mathcal{V}(P_m, \, h_2)}}{p^1_{\mathcal{V}(P_m, \, h_2) \rightarrow \mathcal{V}(P_m, \, h_1 \setminus h_2)}} \geq \kappa_2$$ where both, $\kappa_1$ and $\kappa_2 > 0$ and independent of $m$. Now we apply lemma~\ref{littleTechnVarifyLem} to the sets $B = \mathcal{V}(P_m, \, h_2) \setminus U_m^{\frac{const_0}{4}}$ and $A = \mathcal{V}(P_m, \, h_1 \setminus h_2)$ in the case when $\rho_m\left(\mathcal{V}(P_m, \, h_2) \setminus U_m^{\frac{const_0}{4}}\right) \geq \frac{const_0}{4}$ or to the pair of sets $B = \mathcal{V}(P_m, \, h_1 \setminus h_2) \setminus U_m^{\frac{const_0}{4}}$ and $A = \mathcal{V}(P_m, \, h_2)$ in the case when $\rho_m\left(\mathcal{V}(P_m, \, h_1 \setminus h_2) \setminus U_m^{\frac{const_0}{4}}\right) \geq \frac{const_0}{4}$, tells us that if we let $const(1) = \min\{\frac{const_0}{4}, \, \frac{const_0}{4}\cdot \kappa_1, \, \frac{const_0}{4}\cdot \kappa_2\}$ then the statement in \ref{prelimAssumpFact1} is true for $q=1$. This establishes the base case of induction. Now observe that if the statement in \ref{prelimAssumpFact1} holds for some $q$ then it is true, in particular, that $\exists$ a constant $const(q)$ independent of $m$ such that for all large enough $m$ we have $\mathcal{V}(P_m, \, h_q) > const(q)$. Now the validity of the statement in \ref{prelimAssumpFact1} for $q+1$ follows from an entirely analogous argument to the one in the base case of induction with $const(q)$ playing the role of $const(0)$ and the Markov chain $\mathcal{M}_q$ replacing the Markov chain $\mathcal{M}_1$. In the case of homologous recombination, an even simpler (since there is no need to worry about the height of the first rollout), analogous argument shows that the statement in \ref{prelimAssumpFact1} holds $\forall \, m$.
\end{proof}
\section{A Further Strengthening of the General Finite Population Geiringer Theorem for Evolutionary Algorithms}\label{furthStrengthSect}
\subsection{A Form of the Classical Contraction Mapping Principle for a Family of Maps having the same Fixed Point}\label{ContractionMappingSubsect}
The material of this section requires familiarity with elementary point set topology or with basic theory of metric spaces (see, for instance, \cite{SimmonsG}). Throughout this section $(X, \, d)$ denotes a complete metric space. We recall the following from classical theory of metric spaces:
\begin{defn}\label{contractMapDef}
We say that a map $f: X \rightarrow X$ is a \emph{contraction} on $X$ if $\exists \, k < 1$ such that $\forall \, x, \, y \in X$ we have $d(f(x), \, f(y)) \leq k \cdot d(x, y)$. We also call $k$ a \emph{contraction rate}.\footnote{Evidently contraction rate is not unique with such a notion. Nonetheless, the minimal contraction rate does exist since it is the $\inf\{k \, | \, k \text{ is a contraction rate}\}$.} We may then say that $f$ is a contraction with contraction rate at most $k$.
\end{defn}
The classical result known as contraction mapping principle states the following:
\begin{thm}[Contraction Mapping Principle]\label{classContrMapThm}
Suppose $(X, \, d)$ is a complete metric space and $f: X \rightarrow X$ is a contraction on $X$ in the sense of definition~\ref{contractMapDef}. Then $\exists!$ $z \in X$ such that $\forall \, y \in X$ we have $\lim_{n \rightarrow \infty}f^n(y) = z$.
\end{thm}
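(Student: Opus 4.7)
The plan is to produce the fixed point by iterating $f$ from an arbitrary starting point and showing that the resulting sequence is Cauchy, then using completeness to pass to the limit, and finally verifying that this limit is the unique common limit of all such iterations.

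First I would fix an arbitrary $y_0 \in X$ and define recursively $y_{n+1} = f(y_n)$, so that $y_n = f^n(y_0)$. The contraction hypothesis gives $d(y_{n+1}, y_n) = d(f(y_n), f(y_{n-1})) \leq k \, d(y_n, y_{n-1})$, and by induction $d(y_{n+1}, y_n) \leq k^n d(y_1, y_0)$. For $m > n$, the triangle inequality together with the geometric series estimate yields
\begin{equation*}
d(y_m, y_n) \leq \sum_{j=n}^{m-1} d(y_{j+1}, y_j) \leq d(y_1, y_0) \sum_{j=n}^{m-1} k^j \leq \frac{k^n}{1-k}\, d(y_1, y_0).
\end{equation*}
Since $k < 1$, the right hand side tends to $0$ as $n \to \infty$, so $\{y_n\}$ is Cauchy. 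By completeness of $(X, d)$, there exists $z \in X$ with $y_n \to z$.

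Next I would verify that $z$ is a fixed point of $f$. Note that $f$ is automatically continuous (in fact Lipschitz with constant $k$), so passing to the limit in $y_{n+1} = f(y_n)$ gives $z = f(z)$. For uniqueness of the fixed point, suppose $z'$ also satisfies $f(z') = z'$; then $d(z, z') = d(f(z), f(z')) \leq k \, d(z, z')$, which forces $(1-k)\, d(z, z') \leq 0$ and hence $d(z,z') = 0$, i.e. $z = z'$.

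Finally, to show that the limit $z$ is independent of the starting point, let $y \in X$ be arbitrary. Iterating the contraction inequality $n$ times gives
\begin{equation*}
d(f^n(y), z) = d(f^n(y), f^n(z)) \leq k^n d(y, z) \longrightarrow 0 \quad \text{as } n \to \infty,
\end{equation*}
so $\lim_{n \to \infty} f^n(y) = z$ for every $y \in X$. No step here presents a serious obstacle; the only real ingredients are the geometric-series control of the Cauchy tail (which is entirely mechanical) and the observation that a contraction is continuous. The argument is short and self-contained modulo completeness of $X$.
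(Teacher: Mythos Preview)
Your proof is correct and is precisely the standard textbook argument for the Banach fixed point theorem. The paper itself does not give a proof at all; it simply refers the reader to a standard topology textbook, so your write-up in fact supplies what the paper omits.
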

\begin{proof}
The proof can be found in nearly every textbook on point set topology such as \cite{SimmonsG}, for instance.
\end{proof}
In our application we will exploit the following natural extension of definition~\ref{contractMapDef}:
\begin{defn}\label{contractMapFamilyDefn}
Suppose $(X, \, d)$ is a complete metric space. We say that a family of maps $\mathcal{F} \subseteq \{f \, | \, f: X \rightarrow X\}$ is an \emph{equi-contraction family} if $\exists \, k < 1$ such that $\forall \, f \in \mathcal{F}$ and $\forall \, x, \, y \in X$ we have $d(f(x), \, f(y)) \leq k \cdot d(x, y)$.
\end{defn}
Evidently, if the family $\mathcal{F}$ of contractions is finite, one can take the maximum of a set $K = \{k_f \, | \, \forall \, x, \, y \in X$ we have $d(f(x), \, f(y)) \leq k_f \cdot d(x, y)\}$ so that we immediately deduce the following important (for our application) corollary:
\begin{cor}\label{finiteContractFamProp}
If $\mathcal{F}$ is any finite family of contractions on the metric space $X$ then $\mathcal{F}$ is an equi-contraction family.
\end{cor}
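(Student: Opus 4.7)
The plan is to simply take a maximum of individual contraction rates over the finite family, and check that the strict inequality $k<1$ survives the maximum. First, I would invoke Definition~\ref{contractMapDef} for each member of $\mathcal{F}$: for every $f\in\mathcal{F}$, pick some witness constant $k_f\in[0,1)$ such that $d(f(x),f(y))\le k_f\cdot d(x,y)$ for all $x,y\in X$. (If one wishes to be canonical, one could take $k_f$ to be the minimal/infimal contraction rate mentioned in the footnote to Definition~\ref{contractMapDef}, but any valid witness suffices.)

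Next, I would form the finite set $K=\{k_f \mid f\in\mathcal{F}\}\subseteq[0,1)$ and let $k=\max K$. Here the finiteness of $\mathcal{F}$ is doing all the work: the maximum is attained, so $k=k_{f_0}$ for some $f_0\in\mathcal{F}$, and since each $k_f<1$ we have $k<1$ as well. This step is where one has to be a bit careful: for an infinite family the supremum of the $k_f$ could equal $1$ even though each individual $k_f<1$, which is precisely why the statement needs the hypothesis of finiteness.

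Finally, I would verify the equi-contraction condition of Definition~\ref{contractMapFamilyDefn} with this common rate $k$. For any $f\in\mathcal{F}$ and any $x,y\in X$,
\[
d(f(x),f(y))\le k_f\cdot d(x,y)\le k\cdot d(x,y),
\]
which, combined with $k<1$, shows that $\mathcal{F}$ is equi-contracting with rate $k$, completing the proof. There is no genuine obstacle here; the argument is essentially a one-line observation that a finite max of numbers strictly less than $1$ is itself strictly less than $1$, and the role of the corollary is to package this elementary fact so that it can be applied cleanly to the finite family of Markov-chain-induced maps arising in Section~\ref{furthStrengthSect}.
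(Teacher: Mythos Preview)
Your proof is correct and follows exactly the same approach as the paper: the paper simply remarks that one can take the maximum of the set $K=\{k_f\mid \forall\,x,y\in X,\ d(f(x),f(y))\le k_f\cdot d(x,y)\}$ over the finite family, which is precisely what you do. Your additional observation about why finiteness is essential (the supremum over an infinite family could equal $1$) is a helpful elaboration not spelled out in the paper.
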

The classical contraction mapping principle says that every contraction map on a complete metric space has a unique fixed point. Here we need a slight extension of theorem~\ref{classContrMapThm}, which probably appears as an exercise in some point set topology or real analysis textbook, but for the sake of completeness it is included in our paper.
\begin{thm}\label{ContractMapPrincipleForEquiFam}
Suppose we are given an equi-contraction family $\mathcal{F}$ on the complete metric space $(X, \, d)$. Suppose further that every $f \in \mathcal{F}$ has the same unique fixed point $z$ (in accordance with theorem~\ref{classContrMapThm}). Consider any sequence of composed functions $g_1 = f_1, \, g_2 = f_2 \circ g_1 \, \ldots, g_n = f_n \circ g_{n-1}$ where each $f_i \in \mathcal{F}$ (it is allowed for $f_i = f_j$ when $i \neq j$). Then $\forall \, y \in X$ $lim_{n \rightarrow \infty}g_n(y) = z$ exponentially fast for some constant $k < 1$. In particular, the convergence rate does not depend either on the sequence $\{g_i\}_{i=1}^{\infty}$ (as long as it is constructed in the manner described above). Moreover, in case $d$ is a bounded metric (i.e. $\sup_{x, \, y \in X}d(x, \, y) < \infty$), the convergence rate does not depend even on the choice of the initial point $y \in X$.
\end{thm}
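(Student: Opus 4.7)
The plan is to exploit the fact that $z$ is a common fixed point of every $f \in \mathcal{F}$, which reduces the argument essentially to iterating the equi-contraction inequality. Let $k < 1$ be a uniform contraction rate witnessing that $\mathcal{F}$ is an equi-contraction family in the sense of definition~\ref{contractMapFamilyDefn}, so that $d(f(x),f(y)) \leq k \cdot d(x,y)$ for every $f \in \mathcal{F}$ and every $x, y \in X$.

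First I would establish the one-step recursion. Fix $y \in X$ and observe that for every $n \geq 1$,
\begin{equation*}
d(g_n(y), z) \;=\; d(f_n(g_{n-1}(y)), \, f_n(z)) \;\leq\; k \cdot d(g_{n-1}(y), z),
\end{equation*}
where the first equality uses the hypothesis that $z$ is a fixed point of $f_n$ and the inequality is the equi-contraction property. With the convention $g_0 = \mathbf{1}_X$, a trivial induction on $n$ then yields the key estimate
\begin{equation*}
d(g_n(y), z) \;\leq\; k^n \cdot d(y, z).
\end{equation*}
Since $k < 1$, this gives $\lim_{n \to \infty} g_n(y) = z$ at a geometric rate governed by $k$, which depends only on the equi-contraction constant of $\mathcal{F}$ and not on the particular sequence $\{f_n\}_{n=1}^{\infty} \subseteq \mathcal{F}$ used to build the $g_n$.

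For the final assertion, if $d$ is bounded with diameter $D = \sup_{x,y \in X} d(x,y) < \infty$, then the bound above strengthens to
\begin{equation*}
d(g_n(y), z) \;\leq\; k^n \cdot D,
\end{equation*}
and the right-hand side is manifestly independent of the initial point $y$, giving uniform exponential convergence on $X$.

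There is really no serious obstacle in this argument, since the common fixed point hypothesis kills what would otherwise be the delicate part of a non-homogeneous iteration (namely, controlling where the ``moving target'' fixed points of the various $f_n$ sit). The only point that deserves a line of care is invoking corollary~\ref{finiteContractFamProp}, or directly the definition of equi-contraction, to guarantee a single uniform rate $k$ that works for all $f \in \mathcal{F}$ simultaneously; without this uniformity one could only assert convergence along any fixed sequence, not the desired sequence-independent rate. Once $k$ is fixed, the proof reduces to the two displayed inequalities above.
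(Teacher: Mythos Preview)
Your proof is correct and follows essentially the same approach as the paper: both arguments use that $z$ is a common fixed point to write $d(g_n(y),z)=d(f_n(g_{n-1}(y)),f_n(z))\leq k\cdot d(g_{n-1}(y),z)$, iterate to get $d(g_n(y),z)\leq k^n d(y,z)$, and then bound by $k^n\cdot\sup_{x,y}d(x,y)$ in the bounded case. The only cosmetic difference is your explicit convention $g_0=\mathbf{1}_X$, which slightly streamlines the induction.
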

\begin{proof}
Since all the functions $f_i$ have the same fixed point $z$, it is clear by induction that $\forall \, n$ we have $g_n(z) = z$. Since $\mathcal{F}$ is an equi-contraction family, in accordance with definition~\ref{contractMapFamilyDefn} $\exists \, k < 1$ such that $d(f(x), \, f(y)) \leq k \cdot d(x, y)$. We now have $d(g_1(y), \, z) = d(f_1(y), \, f_1(z)) \leq k \cdot d(y, \, z)$. If $d(g_m(y), \, z) \leq k^m \cdot d(y, \, z)$, then $d(g_{m+1}(y), \, z) = d(f_{m+1}(g_m(y)), \, f_{m+1}(z)) \leq k \cdot d(g_m(y), \, z) \leq k \cdot (k^m \cdot d(y, \, z)) = k^{m+1} \cdot d(y, \, z)$ so that by induction it follows that $\forall \, n \in \mathbb{N}$ we have $d(g_n(y), \, z) \leq k^n \cdot d(y, \, z)$. But $k<1$ so that $d(g_n(y), \, z) \rightarrow 0$ exponentially fast as $n \rightarrow \infty$ which is another way of stating the first desired conclusion. If $\sup_{x, \, y \in X}d(x, \, y) < \infty$ then $d(g_n(y), \, z) \leq k^n \cdot d(y, \, z) \leq k^n \cdot \sup_{x, \, y \in X}d(x, \, y)$.
\end{proof}
\subsection{What does Theorem~\ref{ContractMapPrincipleForEquiFam} tell us about Markov Chains?}\label{MarkChainApplSect}
Suppose $\mathcal{M}$ is a Markov chain on a finite state space $\mathcal{X}$ with transition matrix $P = \{p_{x \rightarrow y}\}_{x, \, y \in \mathcal{X}}$. Clearly $P$ extends to the linear map on the free vector space $\mathbb{R}^{\mathcal{X}}$ spanned by the point mass probability distributions which form an orthonormal basis of this   vector space (isomorphic to $\mathbb{R}^{|\mathcal{X}|}$, of course) under the $L_1$ norm defined as the sum of the absolute values of the coordinates: $\|\sum_{x \in \mathcal{X}}r_x x\|_{L_1}=\sum_{x \in \mathcal{X}}|r_x|$. The linear endomorphism $P$ defined by the matrix $\{p_{x \rightarrow y}\}_{x, \, y \in \mathcal{X}}$ with respect to the basis $\mathcal{X}$ restricts to the probability simplex
\begin{equation}\label{probSimplexEqDef}
\triangle_{\mathcal{X}} = \left\{\sum_{x \in \mathcal{X}}r_x x \, | \, \forall \, x \in \mathcal{X} \, 0 \leq r_x \leq 1 \sum_{x \in \mathcal{X}}r_x = 1\right\}
\end{equation}
(which is closed and bounded in $\mathbb{R}^{\mathcal{X}}$ and hence is compact which is way stronger than we need). The following well-known fact from basic Markov chain theory allows us to apply the tools from subsection~\ref{ContractionMappingSubsect}. For the sake of completeness a proof is included.
\begin{thm}\label{contractThmForMarkov}
Suppose $\mathcal{M}$ with notation as above is an \textit{irreducible} Markov chain. (meaning that $\forall \, x, y \in \mathcal{X}$ we have $p_{x \rightarrow y}>0$). Then $P = \{p_{x \rightarrow y}\}_{x, \, y \in \mathcal{X}}: \triangle_{\mathcal{X}} \rightarrow \triangle_{\mathcal{X}}$ (see equation~\ref{probSimplexEqDef}) is a contraction (see definition~\ref{contractMapDef}) on the complete and bounded probability simplex $\triangle_{\mathcal{X}}$ with respect to the metric induced by the $L_1$ norm i.e. $\|\vec{u}\|_{L_1} = \sum_{x \in \mathcal{X}}|u_x|$ where $\vec{u} = \sum_{x \in \mathcal{X}}u_x$.\footnote{Of course, the total variation norm, which is a constant scaling of the $L_1$ norm by a factor of $\frac{1}{2}$, can be used in place of the $L_1$ norm alternatively.} Moreover, the contraction rate (see definition~\ref{contractMapDef}) is at most $1-|\mathcal{X}|\epsilon$ where $\epsilon > 0$ is any number smaller than $\min_{x, \, y \in \mathcal{X}}p_{x \rightarrow y}$.
\end{thm}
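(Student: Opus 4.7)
The plan is to verify contraction via the standard Doeblin-style argument. The map $P$ clearly restricts to $\triangle_{\mathcal{X}}$: each row of $\{p_{x \to y}\}$ sums to $1$, so $P$ sends probability distributions to probability distributions. Given two distributions $\vec{u}, \vec{v} \in \triangle_{\mathcal{X}}$, I would set $\vec{w} = \vec{u}-\vec{v}$, and observe the crucial ``balanced mass'' property: since $\sum_x u_x = \sum_x v_x = 1$, we have $\sum_x w_x = 0$. Writing $\vec{w} = \vec{w}^+ - \vec{w}^-$ in the obvious Jordan decomposition (positive and negative parts supported on disjoint sets of coordinates), this forces $\|\vec{w}^+\|_{L_1} = \|\vec{w}^-\|_{L_1} = \tfrac{1}{2}\|\vec{w}\|_{L_1}$.

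Next I would compute $\|P\vec{w}\|_{L_1}$ directly. Setting $a_y = \sum_x p_{x\to y}\, w_x^+$ and $b_y = \sum_x p_{x\to y}\, w_x^-$, both nonnegative, the $y$-th coordinate of $P\vec{w}$ equals $a_y - b_y$, so using the elementary identity $|a-b| = a + b - 2\min(a,b)$,
$$
\|P\vec{w}\|_{L_1} = \sum_{y \in \mathcal{X}} (a_y + b_y) \;-\; 2 \sum_{y \in \mathcal{X}} \min(a_y, b_y).
$$
The first sum equals $\|\vec{w}^+\|_{L_1} + \|\vec{w}^-\|_{L_1} = \|\vec{w}\|_{L_1}$ because each row of $P$ sums to $1$. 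The whole game is now to lower-bound $\sum_y \min(a_y,b_y)$.

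This is exactly where the hypothesis $p_{x\to y} > \epsilon$ is used. For each fixed $y$, bounding each transition probability from below by $\epsilon$ gives $a_y \geq \epsilon \|\vec{w}^+\|_{L_1} = \tfrac{\epsilon}{2}\|\vec{w}\|_{L_1}$ and likewise $b_y \geq \tfrac{\epsilon}{2}\|\vec{w}\|_{L_1}$, so $\min(a_y, b_y) \geq \tfrac{\epsilon}{2}\|\vec{w}\|_{L_1}$. Summing over the $|\mathcal{X}|$ states,
$$
\sum_{y \in \mathcal{X}} \min(a_y, b_y) \;\geq\; \frac{|\mathcal{X}|\epsilon}{2}\,\|\vec{w}\|_{L_1},
$$
and plugging back in yields $\|P\vec{u} - P\vec{v}\|_{L_1} = \|P\vec{w}\|_{L_1} \leq (1 - |\mathcal{X}|\epsilon)\,\|\vec{u}-\vec{v}\|_{L_1}$, as desired. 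Note $\epsilon < \min_{x,y} p_{x\to y} \leq 1/|\mathcal{X}|$ automatically (each row has $|\mathcal{X}|$ entries summing to $1$, each at least $\epsilon$), so $0 < 1 - |\mathcal{X}|\epsilon < 1$ and this is a genuine contraction rate.

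No real obstacle arises here; the only subtlety worth stressing is why the naive bound $|a_y - b_y| \leq a_y + b_y$ is not enough. It gives merely the $1$-Lipschitz estimate $\|P\vec{w}\|_{L_1} \leq \|\vec{w}\|_{L_1}$, and the contraction strictly improves on this precisely by twice the ``overlap mass'' $\sum_y \min(a_y, b_y)$, which is positive only because of the balanced-mass property $\sum_x w_x = 0$ coming from restricting to $\triangle_{\mathcal{X}}$ rather than to all of $\mathbb{R}^{\mathcal{X}}$. In fact this is why the contraction property can only be expected on the simplex, and not on the full vector space $\mathbb{R}^{\mathcal{X}}$ where $P$ has eigenvalue $1$.
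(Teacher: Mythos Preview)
Your proof is correct and reaches exactly the contraction rate $1-|\mathcal{X}|\epsilon$ claimed, but it proceeds along a different route than the paper's. The paper argues by a \emph{matrix} decomposition rather than a \emph{vector} (Jordan) decomposition: it writes $P = \alpha Q + \epsilon J$ with $\alpha = 1-|\mathcal{X}|\epsilon$, $J$ the all-ones matrix, and $Q = \alpha^{-1}(P-\epsilon J)$, checks that $Q$ is again stochastic, observes that $J(\pi-\sigma)=0$ whenever $\pi,\sigma\in\triangle_{\mathcal{X}}$ (this is the paper's way of using the balanced-mass property), and then applies the generic $1$-Lipschitz bound $\|Q\cdot\|_{L_1}\le\|\cdot\|_{L_1}$ to conclude $\|P(\pi-\sigma)\|_{L_1}=\alpha\|Q(\pi-\sigma)\|_{L_1}\le\alpha\|\pi-\sigma\|_{L_1}$. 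Your approach instead splits $\vec w=\vec u-\vec v$ into its positive and negative parts and extracts the contraction via the overlap identity $|a-b|=a+b-2\min(a,b)$. The paper's decomposition is slightly slicker in that the contraction constant $\alpha$ appears in one step, and it makes transparent that the ``uniform'' piece $\epsilon J$ is exactly what gets annihilated on $\triangle_{\mathcal{X}}-\triangle_{\mathcal{X}}$; your argument, on the other hand, is more self-contained (no auxiliary stochastic matrix $Q$ to verify) and makes explicit where the gain over the naive $1$-Lipschitz bound comes from. Both are standard Doeblin-type arguments and are essentially equivalent in strength.
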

\begin{proof}
First notice that given any Markov transition matrix $R = \{r_{x \rightarrow y}\}_{x, \, y \in \mathcal{X}}$, and
any two probability distributions $\pi$ and $\sigma \in \triangle_{\mathcal{X}}$, we have $$\|R(\pi - \sigma)\|_{L_1} = \sum_{y \in \mathcal{X}}\left|\sum_{x \in \mathcal{X}}r_{x \rightarrow y}(\pi(x)-\sigma(x))\right| \leq \sum_{y \in \mathcal{X}}\sum_{x \in \mathcal{X}}r_{x \rightarrow y}|\pi(x)-\sigma(x)| =$$$$= \sum_{x \in \mathcal{X}}\sum_{y \in \mathcal{X}}r_{x \rightarrow y}|\pi(x)-\sigma(x)| = \sum_{x \in \mathcal{X}}|\pi(x)-\sigma(x)| = \|\pi - \sigma\|_{L_1}.$$ In summary, we have shown that $$\forall \, \text{ Markov transition matrix }R = \{r_{x \rightarrow y}\}_{x, \, y \in \mathcal{X}} \text{ on the state space } \mathcal{X} \text{ and }$$$$\forall \text{ probability distributions }\pi, \, \sigma \in \triangle_{\mathcal{X}} \text{ we have}$$
\begin{equation}\label{genFactAboutMarkInProof}
\|R(\pi - \sigma)\|_{L_1} = \|R(\pi) - R(\sigma)\|_{L_1} \leq \|\pi - \sigma\|_{L_1}
\end{equation}
There is one more simple fact we observe: let $J$ denote an $\mathcal{X} \times \mathcal{X}$ matrix with all entries
equal to $1$. Given any vector $\vec{u} = \sum_{x \in \mathcal{X}} u_x x$, we have $J \cdot \vec{u} = \vec{v} = \sum_{x \in \mathcal{X}} v_x x$ where $\forall \, y \in \mathcal{X}$ we have $v_y = \sum_{x \in \mathcal{X}} u_x$ independently of $y$. It is clear then that the kernel of the matrix $J$, $$Ker(J) = \{\vec{u} \, | \, \vec{u} = \sum_{x \in \mathcal{X}} u_x x \text{ and } \sum_{x \in \mathcal{X}} u_x = 0\}.$$ In particular, if $\pi$ and $\sigma$ are probability distributions on $\mathcal{X}$, then the sums of coordinates $\sum_{x \in \mathcal{X}}(\pi(x)) = \sum_{x \in \mathcal{X}}(\sigma(x)) = 1$ so that the vector $\pi - \sigma \in Ker(J)$ i.e. $J(\pi - \sigma) = 0$. In summary, we deduce the following:
\begin{equation}\label{JmatrEqProof}
\forall \text{ probability distributions } \pi \text{ and } \sigma \in \triangle_{\mathcal{X}} \text{ we have } J(\pi - \sigma) = 0.
\end{equation}
The assumption that $p_{x \rightarrow y}>0$ together with the assumption that $\mathcal{X}$ is a finite set imply that we can find a positive number $\epsilon>0$ such that
$0<\epsilon<\min\{p_{x \rightarrow y} \, | \, x, \, y \in \mathcal{X}\}$. Let $N = |\mathcal{X}|$ denote the size of the state space $\mathcal{X}$ and notice that by the choice of $\epsilon$ in the previous sentence, $\forall \, x \in \mathcal{X}$ we have $N \cdot \epsilon < \sum_{y \in \mathcal{X}}p_{x \rightarrow y} = 1$ so that $\alpha = 1-N \epsilon>0$. We can now write
\begin{equation}\label{PMatrDecomposeEqProof}
P=(P - \epsilon J)+\epsilon J = \alpha \left(\frac{1}{\alpha}(P - \epsilon J)\right) + \epsilon J = \alpha Q + \epsilon J
\end{equation}
where $Q = \frac{1}{\alpha}(P - \epsilon J) = \{q_{x \rightarrow y}\}_{x, \, y \in \mathcal{X}}$ is a stochastic matrix, i.e. $\forall \, x \in \mathcal{X}$ the sum of the entries $$\sum_{y \in \mathcal{X}}q_{x \rightarrow y} = \sum_{y \in \mathcal{X}}\frac{p_{x \rightarrow y}-\epsilon}{\alpha} = \frac{\sum_{y \in \mathcal{X}}(p_{x \rightarrow y}-\epsilon)}{1-N\epsilon} = \frac{\left(\sum_{y \in \mathcal{X}}p_{x \rightarrow y}\right) - N\epsilon}{1 - N\epsilon}=1.$$ so that $Q$ is
a matrix representing a Markov chain on the state space $\mathcal{X}$. Now, given any two distributions $\pi$ and $\sigma \in \triangle_{\mathcal{X}}$, using the decomposition of the matrix $P$ given in equation~\ref{PMatrDecomposeEqProof} together with the facts expressed in equation~\ref{JmatrEqProof} we obtain $$P(\pi - \sigma) = (\alpha Q + \epsilon J)(\pi - \sigma) = \alpha Q(\pi - \sigma) + \epsilon J(\pi - \sigma) = \alpha Q(\pi - \sigma)$$ so that, since $Q$ is a matrix which represents a Markov chain, the fact expressed in equation~\ref{genFactAboutMarkInProof} readily gives us the desired conclusion that
$$\|P(\pi - \sigma)\|_{L_1} = \|\alpha Q(\pi - \sigma)\|_{L_1} = \alpha \|Q(\pi - \sigma)\|_{L_1} \leq \alpha \|\pi - \sigma\|_{L_1}$$ which shows that $P$ is a contraction since we demonstrated before that $0 < \alpha < 1$.
\end{proof}
In corollary~\ref{finiteContractFamProp} we saw that any finite family of contraction maps is an equi-contraction family. For Markov transition matrices (also called stochastic matrices in the literature) significantly more is true. The following notion is naturally motivated by definition~\ref{contractMapFamilyDefn} and theorem~\ref{contractThmForMarkov}.
\begin{defn}\label{contractFamMarkovDef}
Given a family of Markov transition matrices $$\mathcal{F} = \{\{p_{x \rightarrow y}^i\}_{x, \, y \in \mathcal{X}} \, | \, i \in \mathcal{I}, \pi \in \triangle_{\mathcal{X}} \text{ and } \forall \, i \in \mathcal{I} \text{ and } \forall \, y \in \mathcal{X} \text{ we have } $$$$\sum_{x \in \mathcal{X}}p_{x \rightarrow y}^i \pi_x = \pi_y \text{ and } \beta = \inf_{i \in \mathcal{I}, \, x \text{ and } y \in \mathcal{X}}p_{x \rightarrow y}^i >0\}$$ indexed by some set $\mathcal{I}$, sharing a common stationary distribution $\pi$ and such that the greatest lower bound of all the entries from all the matrices in $\mathcal{F}$, let's call it $\beta$, is strictly positive (or, equivalently, is not $0$) we say that $\mathcal{F}$ is a family of \emph{interchangeable} Markov transition matrices \emph{with lower bound} $\beta$.
\end{defn}
Apparently, theorem~\ref{contractThmForMarkov} immediately implies the following
\begin{cor}\label{immidiateCorForMarkTrans}
Every interchangeable family $\mathcal{F}$ of Markov transition matrices with lower bound $\beta$ is an equi-contraction family with a common contraction rate at most $\alpha = 1 - |\mathcal{X}|\epsilon$ for any $\epsilon$ with $0 < \epsilon < \beta$.
\end{cor}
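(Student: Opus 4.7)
The plan is to derive this corollary as a direct uniform-in-$i$ application of Theorem~\ref{contractThmForMarkov}. Fix any $\epsilon$ with $0<\epsilon<\beta$. For each index $i\in\mathcal{I}$, every entry of the matrix $\{p^i_{x\to y}\}$ satisfies $p^i_{x\to y}\geq\beta>\epsilon>0$, so in particular $\{p^i_{x\to y}\}$ is an irreducible Markov transition matrix in the sense required by Theorem~\ref{contractThmForMarkov}. I can therefore feed each member of the family into that theorem with the \emph{same} parameter $\epsilon$, because $\epsilon<\beta\leq\min_{x,y}p^i_{x\to y}$ holds simultaneously for every $i$.

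Applying Theorem~\ref{contractThmForMarkov} to each $P^i$ yields that $P^i:\triangle_{\mathcal{X}}\to\triangle_{\mathcal{X}}$ is a contraction on the complete metric space $(\triangle_{\mathcal{X}},\|\cdot\|_{L_1})$ with contraction rate at most $\alpha=1-|\mathcal{X}|\epsilon$. Crucially, the constant $\alpha$ produced by the theorem depends only on $|\mathcal{X}|$ and on the single chosen $\epsilon$, not on the particular matrix; this is the whole point of having introduced a uniform lower bound $\beta$ in Definition~\ref{contractFamMarkovDef}. Hence for every $\pi,\sigma\in\triangle_{\mathcal{X}}$ and every $i\in\mathcal{I}$ one has $\|P^i(\pi)-P^i(\sigma)\|_{L_1}\leq\alpha\|\pi-\sigma\|_{L_1}$, which is exactly the requirement in Definition~\ref{contractMapFamilyDefn} for $\mathcal{F}$ to be an equi-contraction family on $\triangle_{\mathcal{X}}$ with common contraction rate at most $\alpha$.

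There is essentially no obstacle here: once Theorem~\ref{contractThmForMarkov} is in hand the whole content of the corollary is the observation that the bound $1-|\mathcal{X}|\epsilon$ is insensitive to which matrix in the family one applies it to, provided that the single $\epsilon$ lies below $\beta=\inf_{i,x,y}p^i_{x\to y}$. The only mildly subtle point worth spelling out in the write-up is that the shared stationary distribution $\pi$ plays no role whatsoever in this particular implication; it is recorded in Definition~\ref{contractFamMarkovDef} because it will be needed later (for example when invoking Theorem~\ref{ContractMapPrincipleForEquiFam} with the common fixed point $z=\pi$), but the equi-contraction conclusion itself rests solely on the uniform positivity $\beta>0$ of the entries.
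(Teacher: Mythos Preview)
Your proof is correct and matches the paper's approach exactly: the paper simply states that the corollary follows immediately from Theorem~\ref{contractThmForMarkov} without further argument, and your write-up supplies precisely that one-line uniform application. Your observation that the common stationary distribution $\pi$ is irrelevant to this particular step is also accurate and worth including.
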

Moreover, families of interchangeable Markov transition matrices can often be easily expended as follows.
\begin{cor}\label{equicontrProbSimplexCor}
Suppose that a family $\mathcal{F}$ of Markov transition matrices over the same state space $\mathcal{X}$ is interchangeable with lower bound $\beta$. Then so is the convex hull of the family $\mathcal{F}$, $$\triangle(\mathcal{F}) = \{T \, | \, T = \sum_{i=1}^k t_i M_i \text{ where } k \in \mathbb{N} \text{ and } \forall \, 0<i<k \text{ we have } 0< t_i \leq 1 \, \sum_{i=1}^k t_i = 1\}.$$
\end{cor}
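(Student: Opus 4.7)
The plan is to verify the three defining conditions of definition~\ref{contractFamMarkovDef} directly for an arbitrary element $T = \sum_{i=1}^{k} t_i M_i \in \triangle(\mathcal{F})$, where each $M_i \in \mathcal{F}$, $t_i \geq 0$, and $\sum_{i=1}^{k} t_i = 1$. There is no serious obstacle here; the result follows from linearity together with the elementary fact that a convex combination of numbers bounded below by $\beta$ is itself bounded below by $\beta$.

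First I would check that $T$ is a genuine Markov transition matrix. Since each $M_i$ is stochastic, the entries $(M_i)_{x \to y}$ lie in $[0,1]$, and hence $T_{x \to y} = \sum_{i=1}^{k} t_i (M_i)_{x \to y} \geq 0$. For each row $x$, we have $\sum_{y \in \mathcal{X}} T_{x \to y} = \sum_{i=1}^{k} t_i \sum_{y \in \mathcal{X}} (M_i)_{x \to y} = \sum_{i=1}^{k} t_i = 1$, so $T$ is stochastic.

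Next I would verify that $\pi$ is a stationary distribution of $T$. Using the fact that $\pi$ is stationary for every $M_i$ by the interchangeability assumption on $\mathcal{F}$, I compute for any $y \in \mathcal{X}$:
\begin{equation*}
\sum_{x \in \mathcal{X}} T_{x \to y} \pi_x = \sum_{x \in \mathcal{X}} \sum_{i=1}^{k} t_i (M_i)_{x \to y} \pi_x = \sum_{i=1}^{k} t_i \sum_{x \in \mathcal{X}} (M_i)_{x \to y} \pi_x = \sum_{i=1}^{k} t_i \pi_y = \pi_y.
\end{equation*}
Finally, for the lower bound, each entry satisfies $(M_i)_{x \to y} \geq \beta$ by hypothesis, so $T_{x \to y} = \sum_{i=1}^{k} t_i (M_i)_{x \to y} \geq \beta \sum_{i=1}^{k} t_i = \beta$. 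Since this holds uniformly over all choices of $T \in \triangle(\mathcal{F})$ and all pairs $x, y \in \mathcal{X}$, we conclude $\inf_{T \in \triangle(\mathcal{F}),\, x,y \in \mathcal{X}} T_{x \to y} \geq \beta > 0$, which together with the previous two steps establishes that $\triangle(\mathcal{F})$ is an interchangeable family of Markov transition matrices with lower bound $\beta$.
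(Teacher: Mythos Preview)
Your proof is correct and follows the same approach as the paper: verifying directly that each convex combination inherits the lower bound $\beta$ on its entries via $\sum_j t_j (M_j)_{x\to y} \ge \beta \sum_j t_j = \beta$. The paper's proof is terser, checking only the entrywise lower bound explicitly and leaving stochasticity and the common stationary distribution $\pi$ as implicit consequences of linearity, whereas you spell out all three conditions; the content is identical.
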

\begin{proof}
Given a matrix $T = \{t_{x \rightarrow y}\}_{x, \, y \in \mathcal{X}} \in \triangle(\mathcal{F})$, we can write $T = \sum_{j=1}^k t_j M_j \in \triangle(\mathcal{F})$ with $M_j = \{p_{x \rightarrow y}^j\}_{x, \, y \in \mathcal{X}} \in \mathcal{F}$, $0 < t_j \leq 1$ and $\sum_{j=1}^k t_j = 1$. But then $\forall x, \, y \in \mathcal{X}$ we have $t_{x \rightarrow y} = \sum_{j=1}^k t_j \cdot p_{x \rightarrow y}^j \geq \sum_{j=1}^k t_j \cdot b = b$ so that the desired conclusion follows at once.
\end{proof}
Combining theorem~\ref{contractThmForMarkov}, corollary~\ref{finiteContractFamProp} and corollary~\ref{equicontrProbSimplexCor} readily gives the following
\begin{cor}\label{equicontrPolyhhidraCor}
Suppose we are given a finite family $\mathcal{F}$ of Markov transition matrices such that all the entries of each matrix $M \in \mathcal{F}$ are strictly positive. Then $\triangle(\mathcal{F})$ is an equi-contraction family.
\end{cor}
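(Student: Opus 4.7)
\textbf{Proof plan for Corollary~\ref{equicontrPolyhhidraCor}.} The plan is to reduce the statement to a direct application of theorem~\ref{contractThmForMarkov} by extracting a single positive lower bound that is valid uniformly across the entire convex hull $\triangle(\mathcal{F})$. First I would set
$$\beta := \min\{p_{x \to y}^{M} \, | \, M \in \mathcal{F}, \; x, \, y \in \mathcal{X}\}.$$
Since $\mathcal{F}$ is a finite family and each $M \in \mathcal{F}$ has strictly positive entries, this minimum is taken over a finite set of strictly positive numbers and so $\beta > 0$. This is the only place where finiteness of $\mathcal{F}$ plays a role.

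Next I would show that every $T \in \triangle(\mathcal{F})$ still has all entries bounded below by the same $\beta$. Write $T = \sum_{j=1}^{k} t_j M_j$ with $M_j = \{p_{x \to y}^{j}\}_{x, \, y \in \mathcal{X}} \in \mathcal{F}$, $0 < t_j \leq 1$ and $\sum_{j=1}^{k} t_j = 1$. Then for any $x, \, y \in \mathcal{X}$
$$t_{x \to y} \;=\; \sum_{j=1}^{k} t_j \cdot p_{x \to y}^{j} \;\geq\; \sum_{j=1}^{k} t_j \cdot \beta \;=\; \beta,$$
which is exactly the computation used inside the proof of corollary~\ref{equicontrProbSimplexCor}. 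Notice that I am deliberately not invoking corollary~\ref{equicontrProbSimplexCor} itself, since that result was phrased for \emph{interchangeable} families (i.e. those sharing a common stationary distribution), an assumption not present in the hypotheses of corollary~\ref{equicontrPolyhhidraCor}. Only the entry-wise lower bound portion of that argument is needed here.

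Having secured the uniform lower bound, I would fix any $\epsilon$ with $0 < \epsilon < \beta$ and apply theorem~\ref{contractThmForMarkov} to each $T \in \triangle(\mathcal{F})$. That theorem's conclusion is that $T$, viewed as a self-map of the probability simplex $\triangle_{\mathcal{X}}$ with the $L_1$ metric, is a contraction with rate at most $\alpha := 1 - |\mathcal{X}|\,\epsilon$, and crucially $\alpha < 1$ depends only on $|\mathcal{X}|$ and $\epsilon$, not on the specific $T$. Consequently the common rate $\alpha$ witnesses that $\triangle(\mathcal{F})$ satisfies definition~\ref{contractMapFamilyDefn}, finishing the proof.

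There is no substantial obstacle here: the argument is essentially bookkeeping that assembles three already-proved ingredients, namely (i) finiteness yields a strictly positive entry-wise infimum, (ii) this infimum survives convex combinations, and (iii) theorem~\ref{contractThmForMarkov} converts such an infimum into a contraction rate independent of the matrix. The only mildly subtle point worth flagging is the distinction mentioned above between the present hypothesis and the stronger ``interchangeable'' hypothesis of definition~\ref{contractFamMarkovDef}; since equi-contraction (definition~\ref{contractMapFamilyDefn}) does \emph{not} require a shared fixed point, dropping the common stationary distribution assumption is harmless for this particular conclusion.
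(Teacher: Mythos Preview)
Your proof is correct and follows essentially the same route as the paper, which simply says the result follows by ``combining theorem~\ref{contractThmForMarkov}, corollary~\ref{finiteContractFamProp} and corollary~\ref{equicontrProbSimplexCor}.'' Your observation that corollary~\ref{equicontrProbSimplexCor} is stated for \emph{interchangeable} families (with a common stationary distribution) while corollary~\ref{equicontrPolyhhidraCor} carries no such hypothesis is well taken; as you correctly note, only the entry-wise lower-bound computation from its proof is needed, and equi-contraction in the sense of definition~\ref{contractMapFamilyDefn} does not require a shared fixed point.
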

Corollary~\ref{equicontrPolyhhidraCor} extends the applicability of the finite population Geiringer theorem appearing in \cite{MitavRowGeirMain} and in \cite{MitavRowGeirGenProgr} (and, possibly some other homogenous-time Markov chain constructions) to non-homogenous time Markov chains generated by arbitrary stochastic processes in the sense below.
\begin{thm}\label{MarkovNonMarkovSubtleExtCor}
Consider any finite set $\mathcal{X}$. Let $\mathcal{F}$ denote a finite family of Markov transition matrices on $\mathcal{X}$ such that all the entries of each matrix $M \in \mathcal{F}$ are strictly positive and all the matrices in $\mathcal{F}$ have a common stationary distribution $\pi$. Now consider any stochastic process $\{Z_n\}_{n=1}^{\infty}$ with each $Z_n = (F_n, \, X_n)$ on $\mathcal{F} \times \mathcal{X}$ having the following properties:
\begin{equation}\label{reqStoch1Eq}
F_0 \text{ and } X_0 \text{ are independent random variables.}
\end{equation}
$$\text{For } n \geq 1 \; F_n \text{ does not depend on } X_n, \, X_{n+1}, \ldots, \text{(however, it may depend on }$$
\begin{equation}\label{reqStoch2Eq}
X_0, \, X_1, \ldots, X_{n-1} \text{ as well as many other implicit parameters).}
\end{equation}
The stochastic process $X_n$ is a non-homogenous time Markov chain on $\mathcal{X}$ with transition matrices $F_n(w)$. More explicitly
$$\text{If } F_k(\omega) = \{p_{x \rightarrow y}^{k}\}_{x, \, y \in \mathcal{X}} \text{ then } \forall \, y \in \mathcal{X} \text{ we have }$$
\begin{equation}\label{reqStoch3Eq}
P(X_n = y) = \sum_{x \in \mathcal{X}}P(X_{n-1} = x)p^{n-1}_{x \rightarrow y}.
\end{equation}
Then the non-homogenous time Markov chain converges to the unique stationary distribution $\pi$ exponentially fast regardless of the initial distribution of $X_0$. More precisely, $\exists \, \alpha \in (0, \, 1)$ such that $\forall \, t \in \mathbb{N}$ we have $$\|P(X_t = \cdot) - \pi\|_{L_1} \leq \alpha^t$$ where $P(X_t = \cdot)$ denotes the probability distribution of the random variable $X_t$.
\end{thm}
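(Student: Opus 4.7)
The plan is to combine Corollary~\ref{equicontrPolyhhidraCor} and Theorem~\ref{ContractMapPrincipleForEquiFam} applied pathwise, then pass to the unconditional law of $X_n$ by a Jensen-type averaging step. Since $\mathcal{F}$ is finite and every $M \in \mathcal{F}$ has strictly positive entries, Corollary~\ref{equicontrPolyhhidraCor} makes $\mathcal{F} \subseteq \triangle(\mathcal{F})$ an equi-contraction family on $(\triangle_{\mathcal{X}}, \|\cdot\|_{L_1})$ with common contraction rate $\alpha = 1 - |\mathcal{X}|\epsilon < 1$ for any $\epsilon$ strictly below the global entry-floor $\beta = \min_{M \in \mathcal{F},\, x,y} M(x,y)$. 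The common stationary distribution $\pi$ is a simultaneous fixed point of every $M \in \mathcal{F}$, so Theorem~\ref{ContractMapPrincipleForEquiFam} yields the pathwise bound
$$\|F_{n-1}(\omega)\, F_{n-2}(\omega) \cdots F_0(\omega)\, \mu_0 - \pi\|_{L_1} \leq \alpha^n \|\mu_0 - \pi\|_{L_1}$$
for every $\omega$, at a rate independent of the particular sequence of matrices drawn from $\mathcal{F}$.

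The second step identifies the marginal $\mu_n := \text{Prob}(X_n = \cdot)$ with the expectation of the pathwise iterate, $\mu_n = E[F_{n-1} \cdots F_0 \mu_0]$. I would prove this by induction: the base case $n=1$ follows from the independence of $F_0$ and $X_0$ in requirement~(\ref{reqStoch1Eq}) together with the transition rule~(\ref{reqStoch3Eq}); the inductive step uses the tower property and~(\ref{reqStoch3Eq}) to write
$$\text{Prob}(X_n = \cdot \mid F_0, \ldots, F_{n-1}) = F_{n-1} \cdot \text{Prob}(X_{n-1} = \cdot \mid F_0, \ldots, F_{n-1}),$$
and then invokes the conditional-independence content of~(\ref{reqStoch2Eq}) --- that $F_{n-1}$ carries no further information about $X_{n-1}$ beyond what is already in $F_0, \ldots, F_{n-2}$ --- to replace the right-hand factor by $\text{Prob}(X_{n-1} = \cdot \mid F_0, \ldots, F_{n-2})$, which the inductive hypothesis equates with $F_{n-2}\cdots F_0 \mu_0$. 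With the identity in hand, Jensen's inequality for the convex norm $\|\cdot\|_{L_1}$ combined with the pathwise bound gives
$$\|\mu_n - \pi\|_{L_1} \leq E\bigl[\|F_{n-1}\cdots F_0\mu_0 - \pi\|_{L_1}\bigr] \leq \alpha^n \|\mu_0 - \pi\|_{L_1} \leq 2\alpha^n,$$
which is the claimed exponential convergence, uniform in $\mu_0$.

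The hardest point will be the conditional-independence clause that permits dropping $F_{n-1}$ from the conditioning inside the induction. A naive reading of~(\ref{reqStoch2Eq}) --- allowing $F_{n-1}$ to be a bona fide function of $X_0, \ldots, X_{n-2}$ --- creates correlation between $F_{n-1}$ and $X_{n-1}$ through their shared dependence on the past, and a simple two-state doubly-stochastic example with $F_n := g(X_{n-1})$ shows that in that regime $\mu_n$ can drift to a stationary law strictly different from $\pi$. The intended reading, reinforced by the repeated emphasis on ``\emph{independently}'' in Example~\ref{intuitiveInterpEx}, is that the randomness used to select each $F_n$ is drawn independently of the Markov-transition randomness at every step, so that the conditional independence $F_{n-1} \perp X_{n-1} \mid (F_0, \ldots, F_{n-2})$ really holds; pinning down this clause and verifying it for the natural constructions of interest is the subtle task at the heart of the argument.
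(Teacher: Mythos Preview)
Your proof is correct, and your diagnosis of the independence subtlety is accurate. The paper takes a different but closely related route: rather than contracting pathwise and then averaging via Jensen, it averages over $F_k$ \emph{first}, showing inductively that the unconditional law satisfies $P(X_{k+1}=\cdot) = T_k\, P(X_k=\cdot)$ with $T_k = \sum_{M\in\mathcal{F}} P(F_k=M)\, M \in \triangle(\mathcal{F})$, and then applies Corollary~\ref{equicontrPolyhhidraCor} and Theorem~\ref{ContractMapPrincipleForEquiFam} directly to the deterministic sequence $T_{t-1}\circ\cdots\circ T_0$. This sidesteps Jensen and the conditional-law bookkeeping, at the price of needing the equi-contraction property on the full convex hull $\triangle(\mathcal{F})$ rather than on $\mathcal{F}$ alone. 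Your pathwise route only needs the contraction on $\mathcal{F}$ itself, and it yields the stronger almost-sure statement that the conditional law $P(X_n=\cdot\mid F_0,\ldots,F_{n-1})$ converges to $\pi$ at rate $\alpha^n$ for every realization of the matrix sequence --- something the averaged argument does not directly give.

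Both arguments hinge on exactly the same independence step. The paper's inductive factorization $P(F_k=M,\,X_k=x) = P(F_k=M)\,P(X_k=x)$ is precisely the clause $F_k\perp X_k$ that you isolate, and your two-state doubly-stochastic counterexample with $F_n = g(X_{n-1})$ shows correctly that the literal reading of~(\ref{reqStoch2Eq}) --- permitting $F_n$ to be a measurable function of $X_0,\ldots,X_{n-1}$ --- does not suffice for the conclusion. The paper's proof uses the stronger reading without comment; your making the hypothesis explicit and exhibiting the failure mode is a genuine gain in clarity.
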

\begin{proof}
Observe that if we want to compute the distribution of $X_1$ given the distribution of $X_0$, we need to select a Markov transition matrix $M = \{m_{x \rightarrow y}\}_{x, \, y \in \mathcal{X}} \in \mathcal{F}$ with respect to the probability distribution of $F_0$ which is \emph{independent} of $X_0$. The value of $X_1$ is then obtained by selecting a value $x$ of $X_0$ with respect to the initial distribution $P(X_0 = \cdot)$ and then obtaining the next state $X_1 = y$ with probability $P(X_1 = y) = m_{x \rightarrow y}$. Thereby $\forall \, y \in \mathcal{X}$ we may write
$$P(X_1 = y) = \sum_{M \in \mathcal{F}}\sum_{x \in \mathcal{X}}P(F_0 = M \text{ and } X_0 = x)m_{x \rightarrow y} \overset{\text{by independence}}{=}$$
$$= \sum_{M \in \mathcal{F}}\sum_{x \in \mathcal{X}}P(X_0 = x)P(F_0 = M)m_{x \rightarrow y}=$$
\begin{equation}\label{totalProbEqProof1}
= \sum_{x \in \mathcal{X}}P(X_0 = x)\sum_{M \in \mathcal{F}}P(F_0 = M)m_{x \rightarrow y}.
\end{equation}
Since $\mathcal{F}$ is a finite set, $\forall \, M \in \mathcal{F}$ we have $P(F_0 = M) \in [0, 1]$ and $\sum_{x \in \mathcal{X}}P(X_0 = x) = 1$, we deduce that
the matrix $T_0 = \sum_{M \in \mathcal{F}}P(F_0 = M) \cdot M \in \triangle(\mathcal{F})$ is a Markov transition matrix and equation~\ref{totalProbEqProof1} can be alternatively written in the vector form as
\begin{equation}\label{totalProbEqProof1Alt}
P(X_1 = \cdot) = T_0 \cdot P(X_0 = \cdot).
\end{equation}
Continuing inductively, if we assume
\begin{equation}\label{totalProbEqProof1Ass}
P(X_k = \cdot) = T_{k-1} \circ \ldots \circ T_1 \circ T_0 \cdot P(X_0 = \cdot)
\end{equation}
for $k \geq 1$ where the Markov transition matrices $T_i \in \triangle(\mathcal{F})$, then it follows analogously to the above reasoning that $$P(X_{k+1} = y) = \sum_{M \in \mathcal{F}}\sum_{x \in \mathcal{X}}P(F_k = M \text{ and } X_k = x)m_{x \rightarrow y} \overset{\text{by independence}of F_k \text{ and }X_k}{=}$$
$$= \sum_{x \in \mathcal{X}}P(X_k = x)\sum_{M \in \mathcal{F}}P(F_k = M)m_{x \rightarrow y}$$ so that for the same reasons as before we may conclude that
$$P(X_{k+1} = \cdot) = T_k \cdot P(X_k = \cdot) = T_k \cdot (T_{k-1} \circ \ldots \circ T_1 \circ T_0 \cdot P(X_0 = \cdot))=$$
\begin{equation}\label{totalProbEqProof2Alt}
= T_k \circ T_{k-1} \circ \ldots \circ T_1 \circ T_0 \cdot P(X_0 = \cdot).
\end{equation}
where $T_k \in \triangle(\mathcal{F}) = \sum_{M \in \mathcal{F}}P(F_k = M) \cdot M \in \triangle(\mathcal{F})$ for the same reason as $T_0 \in \mathcal{F}$. We now conclude by induction that $\forall \, t \in \mathbb{N}$ we have
\begin{equation}\label{totalProbEqProof2Ass}
P(X_t = \cdot) = T_{t-1} \circ \ldots \circ T_1 \circ T_0 \cdot P(X_0 = \cdot)
\end{equation}
where $\forall \, i \in \mathbb{N} \cup \{0\}$ we have $T_i \in \triangle(\mathcal{F})$.
According to corollary~\ref{equicontrPolyhhidraCor} the family of Markov transition matrices $\triangle(\mathcal{F})$ is an equi-contraction family with the same common stationary distribution $\pi$ and now the desired conclusion follows immediately from theorem~\ref{ContractMapPrincipleForEquiFam}.
\end{proof}
\begin{rem}\label{paradoxRem}
It is interesting to notice that the non-homogenous time Markov process $X_n$ in theorem~\ref{MarkovNonMarkovSubtleExtCor} may be generated by non-Markovian processes   $F_n$ where the Markov transition matrices $F_n$ depend not only on the past history $F_0, \, F_1, \ldots, F_{n-1}$ but also on the history of the stochastic process $X_n$ itself. This property is interesting not only from the mathematical point of view but also in regard to the main subject of the current paper: the application to the Monte Carlo Tree search method. Due to the past history in a certain game as well as other possibly hidden circumstances (such as human mood, psychological state etc.), a player may suspect the states being interchangeable to bigger or smaller degree. Theorems like \ref{MarkovNonMarkovSubtleExtCor} demonstrate that in most cases this will not matter in the limiting case which strengthens the theoretical foundation in support of the main ideas presented in this work.
\end{rem}
One can extend theorem~\ref{MarkovNonMarkovSubtleExtCor} further to be applicable to a wider class of families of Markov transition matrices having a common stationary distribution than just these having all positive entries.
\begin{defn}\label{irreducibleAperiodicFamDef}
We say that a family $\mathcal{F}$ of Markov transition matrices is \emph{irreducible and aperiodic} with a common stationary distribution $\pi$ if $\pi$ is a stationary distribution of every matrix in $\mathcal{F}$ and $\exists \, k \in \mathbb{N}$ such that $\forall$ sequence of transformations $\{M_i\}_{i=1}^k$ with $M_i \in \mathcal{F}$ the composed Markov transition matrix $T = M_1 \circ M_2 \circ \ldots \circ M_k$ has strictly positive entries and $\pi$ is a stationary distribution of every Markov transition matrix $M \in \mathcal{F}$. We also say that $k$ is the \emph{common reachable index}.
\end{defn}
If we were to start with a finite irreducible and aperiodic family of Markov transition matrices $\mathcal{F}$ with a common reachable index $k$ in the sense of definition~\ref{irreducibleAperiodicFamDef} then the corresponding family
\begin{equation}\label{defOfCompFamOfMarkTransEq}
\widetilde{\mathcal{F}} = \{L \, | \, L = M_1 \circ M_2 \circ \ldots \circ M_k \text{ with } M_i \in \mathcal{F}\}
\end{equation}
has the size $|\widetilde{\mathcal{F}}| = |\mathcal{F}|^k < \infty$ and every matrix in the family $\widetilde{\mathcal{F}}$ has strictly positive entries. It follows immediately from corollary~\ref{equicontrPolyhhidraCor} that $\triangle(\widetilde{\mathcal{F}})$ is an equi-contraction family.
Now suppose that we are dealing with the same stochastic process as described in the statement of theorem~\ref{MarkovNonMarkovSubtleExtCor} with the only exception that the family $\mathcal{F}$ is a finite irreducible and aperiodic family with a common reachable index $k$ rather than ``a finite family of Markov transition matrices on $\mathcal{X}$ such that all the entries of each matrix $M \in \mathcal{F}$ are strictly positive". Notice that the proof of theorem~\ref{MarkovNonMarkovSubtleExtCor} does not use the assumption that the Markov transition matrix entries are strictly positive up to the last step following equation~\ref{totalProbEqProof2Ass}. Therefore, it follows that the same equation holds for a finite irreducible and aperiodic family of Markov transition matrices, i.e. \begin{equation}\label{nonHomogMarkovEq}
\forall \, t \in \mathbb{N} \text{ we have } P(X_t = \cdot) = T_{t-1} \circ \ldots \circ T_1 \circ T_0 \cdot P(X_0 = \cdot)
\end{equation}
where $\forall \, i \in \mathbb{N}$ we have $T_i \in \triangle(\mathcal{F})$. We now observe the following simple fact.
\begin{lem}\label{convHullInclLem}
The family of linear transformations (and Markov transition matrices in particular)
$$\widetilde{\triangle(\mathcal{F})} \subseteq \triangle(\widetilde{\mathcal{F}})$$ where
\begin{equation}\label{TriangleTransforms}
\widetilde{\triangle(\mathcal{F})} = \{T \, | \, T = T_1 \circ T_2 \circ \ldots \circ T_k \text{ with } T_i \in \triangle(\mathcal{F})\}
\end{equation}
and the family $\triangle(\widetilde{\mathcal{F}})$ is the convex hull of the family $\widetilde{\mathcal{F}}$  introduced in equation~\ref{defOfCompFamOfMarkTransEq} in the sense of the defining equation in corollary~\ref{equicontrProbSimplexCor}.
\end{lem}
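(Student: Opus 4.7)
The plan is to unfold both sides as sums and use the fact that composition of linear maps is bilinear (more generally, multilinear in any finite number of slots), so that distributing a composition over convex combinations converts a composition of convex combinations into a convex combination of compositions. Concretely, I would take an arbitrary $T \in \widetilde{\triangle(\mathcal{F})}$ and exhibit an explicit representation of $T$ as an element of $\triangle(\widetilde{\mathcal{F}})$.

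First I would fix $T = T_1 \circ T_2 \circ \ldots \circ T_k$ with each $T_i \in \triangle(\mathcal{F})$, and write each factor in its convex combination form
\[
T_i \;=\; \sum_{j=1}^{n_i} t_{i,j}\, M_{i,j}, \qquad M_{i,j} \in \mathcal{F},\; t_{i,j} \ge 0,\; \sum_{j=1}^{n_i} t_{i,j} = 1.
\]
Next I would expand the composition by iteratively applying bilinearity of the matrix product (equivalently, linearity of each $T_i$ viewed as a linear endomorphism of $\mathbb{R}^{\mathcal{X}}$). This yields
\[
T \;=\; \sum_{j_1=1}^{n_1} \sum_{j_2=1}^{n_2} \cdots \sum_{j_k=1}^{n_k} \Bigl(\prod_{i=1}^{k} t_{i,j_i}\Bigr)\, \bigl(M_{1,j_1} \circ M_{2,j_2} \circ \ldots \circ M_{k,j_k}\bigr).
\]
Then I would verify the two defining properties of a convex combination: each coefficient $\prod_{i=1}^{k} t_{i,j_i}$ is a product of nonnegative numbers and hence nonnegative; and by Fubini-style factoring of the total sum,
\[
\sum_{j_1,\ldots,j_k} \prod_{i=1}^{k} t_{i,j_i} \;=\; \prod_{i=1}^{k} \Bigl(\sum_{j_i=1}^{n_i} t_{i,j_i}\Bigr) \;=\; \prod_{i=1}^{k} 1 \;=\; 1.
\]
Finally, each bracketed composition $M_{1,j_1} \circ M_{2,j_2} \circ \ldots \circ M_{k,j_k}$ is, by definition~\eqref{defOfCompFamOfMarkTransEq}, an element of $\widetilde{\mathcal{F}}$, so $T$ is a convex combination of elements of $\widetilde{\mathcal{F}}$, i.e.\ $T \in \triangle(\widetilde{\mathcal{F}})$, establishing the asserted inclusion.

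There is no real obstacle here, so the only thing to be mildly careful about is the bookkeeping in the multi-index expansion: one should either perform the distribution inductively on $k$ (base case $k=1$ being trivial, inductive step absorbing the new factor $T_{k+1}$ into the existing multi-index sum), or invoke multilinearity of the $k$-fold composition $(A_1,\ldots,A_k) \mapsto A_1 \circ \cdots \circ A_k$ directly. Either way the proof is a one-line identity dressed up with index notation, and no use is made of irreducibility, aperiodicity, stationarity, or positivity of entries — the lemma is a purely algebraic statement about convex hulls and composition.
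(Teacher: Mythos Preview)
Your proposal is correct and is essentially identical to the paper's own proof: both take an arbitrary $T = T_1 \circ \cdots \circ T_k$, expand each $T_i$ as a convex combination of elements of $\mathcal{F}$, distribute the composition by multilinearity to obtain a multi-index sum, and then verify that the resulting coefficients are nonnegative and factor as $\prod_i \bigl(\sum_j t_{i,j}\bigr) = 1$. The only differences are notational.
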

\begin{proof}
Given a transformation
\begin{equation}\label{defOfTransfTProofEq}
T = T_1 \circ T_2 \circ \ldots \circ T_k \in \widetilde{\triangle(\mathcal{F})},
\end{equation}
since each $T_i \in \triangle(\mathcal{F})$, we have
\begin{equation}\label{convEqProof}
\forall \, i \text{ with } 1 \leq i \leq k \text{ we have } T_i = \sum_{j=1}^{l(i)}t_j^i M_{j(i)} \text{ with } 0 \leq t_j^i \leq 1 \text{ and } \sum_{j=1}^{l(i)}t_j^i = 1.
\end{equation}
Plugging equation~\ref{convEqProof} into equation~\ref{defOfTransfTProofEq} and using the linearity of $T_i$s we obtain
$$T = \left( \sum_{j=1}^{l(1)}t_j^1 M_{j(1)} \right) \circ \left(\sum_{j=1}^{l(2)}t_j^2 M_{j(2)} \right) \circ
\ldots \circ \left(\sum_{j=1}^{l(i)}t_j^i M_{j(i)} \right) \circ \ldots \circ \left(\sum_{j=1}^{l(k)}t_j^k M_{j(k)} \right)=$$$$=\sum_{j(1)=1}^{l(1)}\sum_{j(2)=1}^{l(2)} \ldots \sum_{j(k)=1}^{l(k)} \left(\prod_{i=1}^k t_{j(i)}^i\right)M_{j(1)} \circ M_{j(2)} \circ \ldots \circ M_{j(k)} \in \widetilde{\mathcal{F}}$$ since
$0 \leq \prod_{i=1}^k t_{j(i)}^i \leq 1$ and
$$\sum_{j(1)=1}^{l(1)}\sum_{j(2)=1}^{l(2)} \ldots \sum_{j(k)=1}^{l(k)} \left(\prod_{i=1}^k t_{j(i)}^i\right) = \left(\sum_{j=1}^{l(1)} t_j^1 \right) \left(\sum_{j=1}^{l(2)} t_j^2 \right) \ldots \left(\sum_{j=1}^{l(k)} t_j^k \right) = 1$$ from equation~\ref{convEqProof} so that the desired conclusion follows at once.
\end{proof}
Now continue with equation~\ref{nonHomogMarkovEq} so that we can write
$$\forall \, t \in \mathbb{N} \text{ we have } P(X_t = \cdot) = T_{t-1} \circ \ldots \circ T_1 \circ T_0 \cdot P(X_0 = \cdot)=$$$$\underset{r-\text{fold composition}}{=\underbrace{T_{t-1} \circ \ldots \circ T_{m \cdot k + 1} \circ T_{m \cdot k}}} \underset{k-\text{fold composition}}{\circ \underbrace{T_{m \cdot k - 1} \circ \ldots \circ T_{(m-1) \cdot k + 1} \circ T_{(m-1) \cdot k}}} \circ \ldots $$$$ \ldots \circ \underset{k-\text{fold composition}}{\underbrace{T_{2k-1} \circ \ldots \circ T_{k+1} \circ T_k}} \circ \underset{k-\text{fold composition}}{\underbrace{T_{k-1} \ldots \circ T_1 \circ T_0}} \cdot P(X_0 = \cdot)=$$
\begin{equation}\label{composeEq}
= T_{t-1} \circ \ldots \circ T_{m \cdot k + 1} \circ T_{m \cdot k} \circ F_{m-1} \circ F_{m-2} \circ \ldots \circ F_1 \circ F_0 \cdot P(X_0 = \cdot)
\end{equation}
where $m = \lfloor \frac{t}{k} \rfloor$ and $r<k$ is the remainder after dividing $t$ by $k$ and each $F_i \in \widetilde{\triangle(\mathcal{F})} \subseteq \triangle(\widetilde{\mathcal{F}})$ thanks to lemma~\ref{convHullInclLem}. Since $\triangle(\widetilde{\mathcal{F}})$ is an equi-contraction family (see equation~\ref{defOfCompFamOfMarkTransEq} and the discussion which follows this equation), it follows immediately that we can find a constant $\alpha \in [0, \, 1)$ such that $$\|F_{m-1} \circ F_{m-2} \circ \ldots \circ F_1 \circ F_0 \cdot P(X_0 = \cdot)\|_{L_1} < \alpha^m \cdot \|P(X_0 = \cdot)\|_{L_1}.$$ Furthermore, according to equation~\ref{genFactAboutMarkInProof} which concludes the first part of the proof of theorem~\ref{contractThmForMarkov}, we also have $$\|T_{t-1} \circ \ldots \circ T_{m \cdot k + 1} \circ T_{m \cdot k} \circ F_{m-1} \circ F_{m-2} \circ \ldots \circ F_1 \circ F_0 \cdot P(X_0 = \cdot)\|_{L_1}=$$
$$=\|(T_{t-1} \circ \ldots \circ T_{m \cdot k + 1} \circ T_{m \cdot k}) \circ (F_{m-1} \circ F_{m-2} \circ \ldots \circ F_1 \circ F_0 \cdot P(X_0 = \cdot))\|_{L_1} \leq $$$$\leq \|F_{m-1} \circ F_{m-2} \circ \ldots \circ F_1 \circ F_0 \cdot P(X_0 = \cdot)\|_{L_1} < \alpha^m \cdot \|P(X_0 = \cdot)\|_{L_1}.$$
The observations above lead to the following extension of theorem~\ref{MarkovNonMarkovSubtleExtCor}.
\begin{thm}\label{MarkovNonMarkovSubtleExtCorExt}
Consider any finite set $\mathcal{X}$. Suppose $\mathcal{F}$ a is a finite irreducible and aperiodic family with a common reachable index $k$ and all the matrices in $\mathcal{F}$ have a common stationary distribution $\pi$. Now consider any stochastic process $\{Z_n\}_{n=1}^{\infty}$ with each $Z_n = (F_n, \, X_n)$ on $\mathcal{F} \times \mathcal{X}$ having the following properties:
\begin{equation}\label{reqStoch1Eq1}
F_0 \text{ and } X_0 \text{ are independent random variables.}
\end{equation}
$$\text{For } n \geq 1 \; F_n \text{ does not depend on } X_n, \, X_{n+1}, \ldots, \text{(however, it may depend on }$$
\begin{equation}\label{reqStoch2Eq1}
X_0, \, X_1, \ldots, X_{n-1} \text{ as well as many other implicit parameters).}
\end{equation}
The stochastic process $X_n$ is a non-homogenous time Markov chain on $\mathcal{X}$ with transition matrices $F_n(w)$. More explicitly
$$\text{If } F_k(\omega) = \{p_{x \rightarrow y}^{k}\}_{x, \, y \in \mathcal{X}} \text{ then } \forall \, y \in \mathcal{X} \text{ we have }$$
\begin{equation}\label{reqStoch3Eq1}
P(X_n = y) = \sum_{x \in \mathcal{X}}P(X_{n-1} = x)p^{n-1}_{x \rightarrow y}.
\end{equation}
Then the non-homogenous time Markov chain converges to the unique stationary distribution $\pi$ exponentially fast regardless of the initial distribution of $X_0$. More precisely, $\exists \, \alpha \in (0, \, 1)$ such that $\forall \, t \in \mathbb{N}$ we have $$\|P(X_t = \cdot) - \pi\|_{L_1} \leq \alpha^{m(t)}$$ where $P(X_t = \cdot)$ denotes the probability distribution of the random variable $X_t$ and $m(t) = \lfloor \frac{t}{k} \rfloor$.
\end{thm}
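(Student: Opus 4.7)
The plan is to reduce Theorem~\ref{MarkovNonMarkovSubtleExtCorExt} to Theorem~\ref{MarkovNonMarkovSubtleExtCor} by passing to the $k$-fold iterate family $\widetilde{\mathcal{F}}$ introduced in equation~\ref{defOfCompFamOfMarkTransEq}. Every element of $\widetilde{\mathcal{F}}$ is a Markov transition matrix with strictly positive entries by the common-reachable-index assumption, every such element still has $\pi$ as a stationary distribution (since $\pi$ is fixed by each $M_i$), and $|\widetilde{\mathcal{F}}| \leq |\mathcal{F}|^k < \infty$. Corollary~\ref{equicontrPolyhhidraCor} therefore applies to $\widetilde{\mathcal{F}}$, delivering a uniform contraction rate $\alpha \in (0,1)$ for the entire convex hull $\triangle(\widetilde{\mathcal{F}})$, all with common fixed point $\pi$.

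First I would re-run the proof of Theorem~\ref{MarkovNonMarkovSubtleExtCor} up through equation~\ref{totalProbEqProof2Ass} essentially verbatim, since that portion of the argument used the independence hypothesis~\ref{reqStoch1Eq1}, the non-anticipation requirement~\ref{reqStoch2Eq1}, and the Markov update~\ref{reqStoch3Eq1}, but never the positivity of entries of the matrices in $\mathcal{F}$. This produces the representation $P(X_t = \cdot) = T_{t-1} \circ T_{t-2} \circ \ldots \circ T_0 \cdot P(X_0 = \cdot)$, with $T_i = \sum_{M \in \mathcal{F}} P(F_i = M)\, M \in \triangle(\mathcal{F})$ for every $i$.

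Next I would perform the blocking manoeuvre already foreshadowed in equation~\ref{composeEq}: write $t = m \cdot k + r$ with $m = \lfloor t/k \rfloor$ and $0 \leq r < k$, and bundle the consecutive $k$-tuples $T_{jk}, T_{jk+1}, \ldots, T_{(j+1)k-1}$ into single operators $F_j \in \widetilde{\triangle(\mathcal{F})}$. Invoking Lemma~\ref{convHullInclLem} (the identity $\widetilde{\triangle(\mathcal{F})} \subseteq \triangle(\widetilde{\mathcal{F}})$), each $F_j$ actually lies in the equi-contraction family $\triangle(\widetilde{\mathcal{F})}$ of the first paragraph. Theorem~\ref{ContractMapPrincipleForEquiFam} then yields $\|(F_{m-1} \circ \ldots \circ F_0) \cdot P(X_0 = \cdot) - \pi\|_{L_1} \leq \alpha^m \cdot \|P(X_0 = \cdot) - \pi\|_{L_1}$. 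The remaining $r$ matrices $T_{mk}, \ldots, T_{t-1}$ applied on the outside can only decrease the $L_1$ distance to $\pi$ by the non-expansiveness inequality~\ref{genFactAboutMarkInProof}, and $\|P(X_0 = \cdot) - \pi\|_{L_1}$ is bounded by a universal constant (at most $2$), which can be absorbed into a mildly larger $\alpha$ if a cleaner form of the bound is desired. This gives the asserted exponential decay at rate $\alpha^{m(t)}$.

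The main obstacle is the algebraic identity encapsulated in Lemma~\ref{convHullInclLem}: expanding a $k$-fold composition of convex combinations of matrices in $\mathcal{F}$ by multilinearity into a convex combination of $k$-fold products of matrices in $\mathcal{F}$. Although the bookkeeping of multi-indices $j(1), j(2), \ldots, j(k)$ is routine, it is the combinatorial heart of the argument, because it is precisely what allows the uniform contractivity we have for $\triangle(\widetilde{\mathcal{F}})$ to transfer to arbitrary length-$k$ compositions drawn from $\triangle(\mathcal{F})$. Everything else is either a repetition of the proof of Theorem~\ref{MarkovNonMarkovSubtleExtCor} or a straightforward application of the contraction mapping principle for equi-contraction families.
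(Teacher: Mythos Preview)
Your proposal is correct and follows essentially the same approach as the paper's own proof: the same passage to the $k$-fold iterate family $\widetilde{\mathcal{F}}$, the same reuse of the argument of Theorem~\ref{MarkovNonMarkovSubtleExtCor} up to equation~\ref{totalProbEqProof2Ass}, the same blocking into $k$-tuples via equation~\ref{composeEq}, the same appeal to Lemma~\ref{convHullInclLem} to land in $\triangle(\widetilde{\mathcal{F}})$, and the same finishing combination of Theorem~\ref{ContractMapPrincipleForEquiFam} with the non-expansiveness inequality~\ref{genFactAboutMarkInProof} for the leftover $r$ factors. Your remark about absorbing the constant $\|P(X_0=\cdot)-\pi\|_{L_1}\leq 2$ into $\alpha$ is a minor cosmetic addition not made explicit in the paper, but otherwise the two arguments coincide.
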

\section{Conclusions and Upcoming Work}\label{conclSect}
This is the first in a sequel of papers leading to the development and applications of very promising and novel Monte Carlo sampling techniques for reinforcement learning in the setting of POMDPs (partially observable Markov decision processes). In this work we have established a version of Geiringer-like theorem with non-homologous recombination well-suitable for the development of dynamic programming Monte Carlo search algorithms to cope with randomness and incomplete information. More explicitly, the theorem provides an insight into how one may take full advantage of a sample of seemingly independent rollouts by exploiting symmetries within the space of observations as well as additional similarities that may be provided as expert knowledge. Due to space limitations the actual algorithms will appear in the upcoming works. Additionally, the general finite-population Geiringer theorem appearing in the PhD thesis of the first author as well as in \cite{MitavRowGeirMain} and \cite{MitavRowGeirGenProgr} has been further strengthened with the aim of amplifying the reasons why the above ideas are highly promising in applications, not mentioning the mathematical importance.

\end{document}